\documentclass[twoside]{article}
\let\OriginalAddContentsLine\addcontentsline
\usepackage[preprint]{aistats2027}
\usepackage[letterpaper,hmargin=1.03in,vmargin=1.04in,includeheadfoot,
  headheight=12pt,headsep=9pt,footskip=15pt]{geometry}
\let\addcontentsline\OriginalAddContentsLine
\usepackage[T1]{fontenc}
\usepackage[utf8]{inputenc}
\usepackage{microtype}
\usepackage{amsmath,amssymb,amsfonts,amsthm,mathtools,bm}
\usepackage{graphicx,needspace,booktabs,array,tabularx,enumitem}
\usepackage[round]{natbib}
\usepackage{algorithm,algpseudocode}
\usepackage[hidelinks,bookmarksopen=true,bookmarksnumbered=true]{hyperref}
\usepackage{bookmark}
\hypersetup{
  pdfauthor={Jeongung Heo and Seonghyun Jeong},
  pdftitle={RoBART: Bayesian Additive Regression Trees with Tree-Specific Rotations}
}
\newtheorem{theorem}{Theorem}
\newtheorem{proposition}{Proposition}

\newtheorem{assumption}{Assumption}
\theoremstyle{plain}
\newtheorem{Bresult}{Theorem}[subsection]
\newtheorem{Btheorem}[Bresult]{Theorem}
\newtheorem{Bdefinition}[Bresult]{Definition}
\newtheorem{Bassumption}[Bresult]{Assumption}
\newtheorem{Blemma}[Bresult]{Lemma}
\newtheorem{Bproposition}[Bresult]{Proposition}
\newtheorem{Bcorollary}[Bresult]{Corollary}
\theoremstyle{remark}
\newtheorem{Bremark}[Bresult]{Remark}
\theoremstyle{plain}
\newtheorem{Acorollary}{Corollary}[section]
\newtheorem{Ctheorem}{Theorem}[section]

\newcommand{\R}{\mathbb R}
\newcommand{\E}{\mathbb E}
\newcommand{\SO}{\mathrm{SO}}
\newcommand{\St}{\mathrm{St}}
\newcommand{\cT}{\mathcal T}

\newcommand{\cL}{\mathcal L}
\newcommand{\cX}{\mathcal X}
\newcommand{\cI}{\mathcal I}
\newcommand{\cA}{\mathcal A}
\newcommand{\cC}{\mathcal C}
\newcommand{\ind}{\mathbf 1}
\newcommand{\1}{\mathbf{1}}
\newcommand{\norm}[1]{\left\lVert#1\right\rVert}
\newcommand{\abs}[1]{\left|#1\right|}
\newcommand{\trans}{\mathsf T}
\newcommand{\eps}{\varepsilon}

\newcommand{\IG}{\operatorname{IG}}
\newcommand{\TN}{\operatorname{TN}}
\newcommand{\dep}{\operatorname{dep}}
\newcommand{\inner}[2]{\left\langle #1,#2\right\rangle_n}
\newcommand{\Var}{\mathrm V}
\newcommand{\KL}{\mathrm{KL}}
\newcommand{\DX}{D_X}
\newcommand{\epsn}{\varepsilon_n}

\newcommand{\Frot}{\mathcal{F}^{\mathrm{rot}}}

\newcommand{\B}{\mathrm B}
\newcommand{\Faxis}{\mathcal F^{\mathrm{ax}}}
\newcolumntype{L}[1]{>{\raggedright\arraybackslash}p{#1}}

\newcommand{\Mainref}[1]{\ref{main:#1}}
\newcommand{\Mainlink}[2]{\hyperref[main:#1]{#2}}

\begin{document}

\runningtitle{RoBART: Bayesian Additive Regression Trees with Tree-Specific Rotations}
\runningauthor{Jeongung Heo, Seonghyun Jeong}

\twocolumn[
\aistatstitle{RoBART: Bayesian Additive Regression Trees with Tree-Specific Rotations}
\aistatsauthor{
  Jeongung Heo
  \And
  Seonghyun Jeong\textsuperscript{*}
}

\aistatsaddress{
  Yonsei University
}
] % \twocolumn[...] 끝

\begingroup
\renewcommand{\thefootnote}{\fnsymbol{footnote}}
\footnotetext[1]{Corresponding author: \texttt{sjeong@yonsei.ac.kr}}
\endgroup

\begin{abstract}
Bayesian additive regression trees (BART) can require many splits to approximate boundaries misaligned with the predictor axes. RoBART assigns each tree a rotation shared by all internal nodes, retaining axis-aligned splits in rotated coordinates and constant leaves. We jointly propose a Givens rotation sequence and cutpoints on the resulting grid by Metropolis–Hastings and establish reversibility with respect to the conditional posterior with leaf means integrated out. For additive functions with component-specific rotations and anisotropic H\"older smoothness, we prove posterior contraction in empirical $L_2$ distance and for the noise standard deviation. Under the stated prior, design, and grid conditions, with fixed numbers of predictors, trees, and components and no more components than trees, the rate is a sum of componentwise rates determined by smoothness and the number of rotated coordinates used. We also establish a posterior contraction lower bound showing that there exist functions for which RoBART adapts to the intrinsic dimension but axis-aligned BART does not.
\end{abstract}

\section{INTRODUCTION}
\label{main:sec:intro}

Bayesian additive regression trees (BART) represent a regression mean as a sum of regularized trees \citep{ChipmanGeorgeMcCulloch2010,HillLineroMurray2020}. Each internal node in standard BART splits on one predictor. These splits produce rectangular cells, so approximating an oblique boundary can require many splits.

Figure~\ref{main:fig:motivation}(a) compares posterior means with the true function $f(x)=\ind\{|x_1|>|x_2|\}$. A single $45^\circ$ rotation aligns both diagonal boundaries with the coordinate axes. Away from the boundaries, a depth-two tree with four leaves and cuts at zero represents the function; a finite cutpoint grid may introduce additional approximation error. Figure~\ref{main:fig:motivation}(b) shows a Gaussian spike,
$f(x)=0.02+0.65\exp\{-r^2/(2\cdot0.4^2)\}+0.70\exp\{-r^2/(2\cdot0.08^2)\}$,
where $r^2=x_1^2+x_2^2$.

\begin{figure*}[t]
\centering
\includegraphics[width=0.9\textwidth]{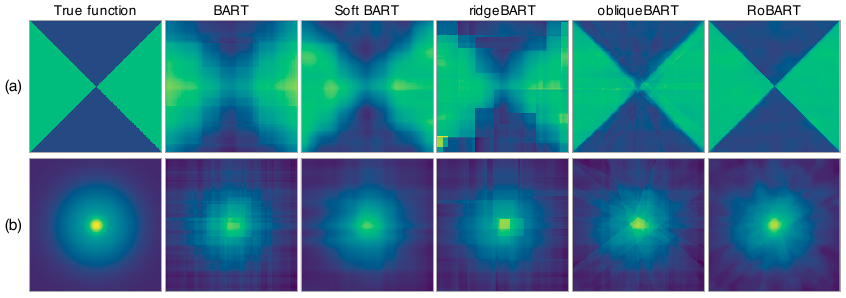}
\caption{True functions and posterior means for (a) the X-shaped signal and (b) a Gaussian spike. Both use $n=800$ uniform inputs on $[-1,1]^2$ and independent $N(0,0.1^2)$ errors. All methods use 200 trees. Color scales are shared within rows but differ between rows.}
\label{main:fig:motivation}
\end{figure*}

We propose RoBART. For $p$ predictors, each tree uses
a $p\times p$ rotation shared by all internal nodes, axis-aligned
splits in the rotated coordinates, and constant leaves. Rotations
may differ across trees, and each tree may split on only some
rotated coordinates.

For $B\in\R^{d\times p}$, $1\le d\le p$, $h(Bx)$ is unchanged by
$B\mapsto AB$ and $h(u)\mapsto h(A^{-1}u)$ for invertible
$A\in\R^{d\times d}$. Orthonormal rows permit orthogonal
basis changes. With known $d\ll p$, a row-orthonormal $d\times p$ matrix
uses fewer parameters, but learning $d$ requires changes to the % transformed
dimension, split coordinates, and cutpoint arrays
\citep{Green1995,CollinsFrancomRumsey2024}. Our $p\times p$
parameterization keeps inputs in $\R^p$ and split coordinates in
$\{1,\ldots,p\}$ without resolving nonidentifiability.

With tree structure, splitting coordinates, and cutpoint indices fixed, small rotations can preserve leaf assignments and admissible index sets, leaving the conditional posterior unchanged after integrating out the leaf means. Since the cutpoint grid depends on the rotated predictors, we jointly propose a Givens rotation sequence and affected cutpoints on the rebuilt grid.

We establish reversibility of this joint update and posterior contraction for additive regression functions with component-specific rotations and anisotropic smoothness. The contraction result assumes fixed predictor dimension, tree count, and component count. Theorem~\ref{main:thm:main} gives the rate under the conditions in Section~\ref{main:sec:theory}.

Theorem~\ref{main:thm:bart-lower} also constructs a fixed
H\"older ridge function on a tensor design for which the
axis-aligned BART posterior, under the same remaining prior
specifications, cannot attain the one-dimensional contraction
rate achieved by RoBART.

\section{RELATED WORK}
\label{main:sec:related}

Standard BART uses axis-aligned splits and constant leaf values \citep{ChipmanGeorgeMcCulloch2010}. Soft BART (SBART) replaces hard leaf indicators with smooth routing weights \citep{LineroYang2018}. ridgeBART models leaf outputs as linear combinations of ridge functions \citep{YeeGhoshDeshpande2024}. Our model retains hard splits and constant leaves and learns the coordinates in which each tree splits.

Oblique BART assigns a hyperplane to each split without requiring the normal vectors to come from one orthogonal matrix \citep{NguyenYeeDeshpande2025}. We restrict the normal vectors within a tree to rows of one rotation matrix. Random rotation ensembles also use separate rotations for base learners, but choose them before fitting \citep{BlaserFryzlewicz2016}. Here each rotation is learned jointly with its tree.

Bayesian Projection Pursuit Regression represents the mean as a sum of univariate ridge functions and learns their number by reversible-jump MCMC \citep{CollinsFrancomRumsey2024}. Each of our trees can instead use several rotated coordinates and represent interactions among them.

Our analysis builds on posterior contraction results for Bayesian trees and forests \citep{RockovaVanderPas2020,RockovaSaha2019} and anisotropic BART \citep{JeongRockova2023}, including the additive extension in Section~6.4 and Theorem~7 of the latter work. We account for a separate unknown rotation and the resulting cutpoint grid for each component. Theorem~\ref{main:thm:main} gives the rate, while
Theorem~\ref{main:thm:bart-lower} establishes a fixed-truth rate
separation from axis-aligned BART; proofs are in
Supplements~\ref{B:root} and~\ref{C:root}.

Figure~\ref{main:fig:motivation} compares the five BART variants
on two illustrative functions.

\section{MODEL}
\label{main:sec:model}

\begin{figure*}[t]
\centering
\includegraphics[width=\textwidth]{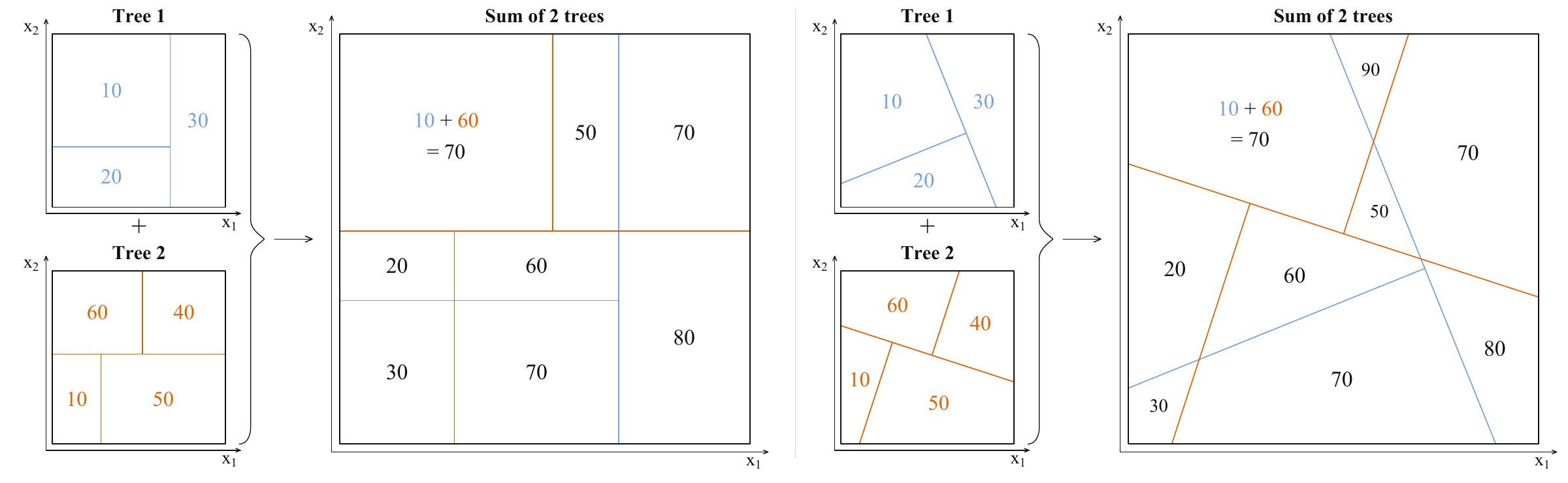}
\caption{Two-tree illustrations of BART (left) and RoBART (right). Within each half, the two small panels show individual tree contributions, and the larger panel shows their sum. Numbers are illustrative predictions in the original predictor coordinates. Blue and orange identify the two tree partitions.}
\label{main:fig:construction}
\end{figure*}

\subsection{Tree-specific rotations}

Fix $p\ge2$, a finite number of trees $T\ge1$, and the training design
$X=(x_1,\ldots,x_n)^{\trans}$, with $x_i\in\R^p$. We model
\begin{align*}
 Y_i=f(x_i)+\varepsilon_i,\quad
 \varepsilon_i\stackrel{\mathrm{iid}}{\sim}N(0,\sigma^2).
\end{align*}
For tree $t$, let $Q_t$ be a rotation matrix in
$\SO(p)=\{Q\in\R^{p\times p}:Q^{\trans}Q=I_p,\ \det Q=1\}$.
Let $\cT_t$ denote its rooted ordered binary topology, with internal nodes
$\mathcal I(\cT_t)$ and terminal nodes $\mathcal L(\cT_t)$.
An internal node $\eta$ splits on coordinate $v_\eta\in\{1,\ldots,p\}$ at
cutpoint index $c_\eta$; terminal node $\ell$ has mean $\mu_{t\ell}$.
Write $V_t=(v_\eta)$, $C_t=(c_\eta)$, and $M_t=(\mu_{t\ell})$ for these arrays.
The~regression mean is
\begin{align*}
 f(x)=\sum_{t=1}^{T}g_t(Q_tx;\cT_t,V_t,C_t,M_t),
 \quad Q_t\in\SO(p).
\end{align*}
Here $g_t$ is the regression tree specified by $(\cT_t,V_t,C_t,M_t)$.
Let $\xi_{t,j,k}(Q_t)$ be the candidate cutpoint with index $k$ for rotated
coordinate $j$, defined in \eqref{main:eq:grid}.
At an internal node $\eta$, an input is sent
to the left child exactly when
\begin{align*}
 (Q_tx)_{v_\eta}<\xi_{t,v_\eta,c_\eta}(Q_t),
\end{align*}
and to the right child otherwise. The output $g_t$ is the mean assigned to the
unique terminal node reached by these rules. Thus the split is axis-aligned
in the rotated coordinates. Let $e_j$ be the $j$th standard basis vector.
In the original predictor space, the split boundary has normal vector
$Q_t^{\trans}e_{v_\eta}$, the transpose of row $v_\eta$ of $Q_t$.
Splits on the same rotated coordinate have parallel boundaries.

Write $z_t(x)=Q_tx$ and
$f_t(x)=g_t(z_t(x);\cT_t,V_t,C_t,M_t)$, so that
$f(x)=\sum_{t=1}^{T}f_t(x)$.

Figure~\ref{main:fig:construction} illustrates $T=2$: standard BART uses $Q_1=Q_2=I_p$, whereas RoBART may use different rotations for the two trees.

The coordinates used by tree $t$ form the set
$\mathcal S_t=\{v_\eta:\eta\in\mathcal I(\cT_t)\}$.
The corresponding rows $Q_{t,\mathcal S_t,\cdot}$ are orthonormal.

\subsection{Cutpoint grids and prior}

For each rotated coordinate $j$, fix a positive integer $b_j$ as the number of candidate cutpoints throughout sampling. Define the training extrema by
\begin{align*}
 \underline z_{tj}(Q_t)=\min_{i\le n}(Q_tx_i)_j,
 \quad
 \overline z_{tj}(Q_t)=\max_{i\le n}(Q_tx_i)_j.
\end{align*}
For $k=0,\ldots,b_j-1$, set
\begin{align}
 \omega_{jk}&=\tfrac{k+1}{b_j+1},\notag\\[-0.3ex]
 \xi_{t,j,k}(Q_t)
 &=(1-\omega_{jk})\underline z_{tj}(Q_t)
 +\omega_{jk}\overline z_{tj}(Q_t).
 \label{main:eq:grid}
\end{align}
Write $\Xi_t(Q_t)=\{\xi_{t,j,k}(Q_t)\}$. These cutpoints are equally spaced
between the training minimum and maximum. A coordinate with zero range has
no available split. The grid is determined by $Q_t$ and the training predictors
and has no separate prior.

We generate each tree recursively. Write $\dep(\eta)$ for node depth, with root depth zero, and let $\rho_\ell\in[0,1]$ be the split-attempt probability at depth $\ell$. For a node $\eta$ and rotation $Q$, let $C_{\rm anc}$ denote its ancestor cutpoint indices. Ancestor splits on coordinate $j$ determine an interval; $\mathcal A_\eta(j;Q,C_{\rm anc})$ contains the grid indices with cutpoints strictly inside it and is empty for zero-range coordinates. We impose no minimum number of observations per leaf. The coordinates with an available cutpoint are
\begin{align*}
 \mathcal G_\eta(Q,C_{\rm anc})
 =\{j:\mathcal A_\eta(j;Q,C_{\rm anc})\ne\varnothing\}.
\end{align*}
At depth $\ell$, a node remains terminal with probability $1-\rho_\ell$. Otherwise, choose a coordinate uniformly from $\{1,\ldots,p\}$. If it has no admissible cutpoint, the node remains terminal. If it does, choose a cutpoint index uniformly from its admissible set and create two children. The probability that node $\eta$ remains terminal is therefore $1-\rho_{\dep(\eta)}\lvert\mathcal G_\eta\rvert/p$. For a valid state $\mathsf{s}=(Q,C)$, the tree prior is
\begin{equation}
\begin{aligned}
 &\pi_{\rm disc}(\cT,V,C\mid Q)
 ={}\prod\nolimits_{\eta\in\mathcal I(\cT)}
 \frac{\rho_{\dep(\eta)}}{p\,a_\eta(\mathsf{s})}\\[-0.2ex]
 &\qquad\times\prod\nolimits_{\ell\in\mathcal L(\cT)}
 \left\{1-\rho_{\dep(\ell)}\frac{g_\ell(\mathsf{s})}{p}\right\},
\end{aligned}
 \label{main:eq:discrete-prior}
\end{equation}
where $a_\eta(\mathsf{s})=\lvert\mathcal A_\eta(v_\eta;Q,C_{\rm anc})\rvert$
and $g_\eta(\mathsf{s})=\lvert\mathcal G_\eta(Q,C_{\rm anc})\rvert$;
invalid states have prior mass zero. Normalization follows from the recursive
construction: a path cannot reuse a coordinate's cutpoint index and therefore
has depth at most $\sum_jb_j$. Draw the rotation matrices independently from normalized
Haar measure on $\SO(p)$ and generate the trees independently conditional on
these matrices. Conditional on the forest structure, all leaf
means are independent $N(0,\tau^2)$ variables with fixed $0<\tau^2<\infty$.
Independently, $\sigma^2\sim\IG(a_\sigma,b_\sigma)$, with
$a_\sigma,b_\sigma>0$, using density proportional to
$(\sigma^2)^{-a_\sigma-1}\exp\{-b_\sigma/\sigma^2\}$.

The joint update assumes $0<\rho_\ell<1$. For the contraction analysis, we use $\rho_\ell=\nu^{\ell+1}$ with fixed $0<\nu<1$.

\subsection{Conditional posterior}

When updating tree $t$, form the partial residuals
\begin{align*}
 r_i^{(t)}=Y_i-\sum_{s\ne t}g_s(Q_sx_i).
\end{align*}
Suppress $t$ and condition on $(\cT,V,\sigma,\tau)$, the training design, and
the complete states of the other trees. Integrate out the current tree's leaf
means. For a terminal node $B$, let $I_B(\mathsf{s})$ be the indices of
observations assigned to $B$, and set
\begin{align*}
 b_B=\frac{\lvert I_B(\mathsf{s})\rvert}{\sigma^2},\qquad
 m_B=\sum_{i\in I_B(\mathsf{s})}\frac{r_i^{(t)}}{\sigma^2}.
\end{align*}
After integrating out the leaf means, the log marginal likelihood is $\ell_H(\mathsf{s})$ up to a constant independent of $\mathsf{s}$, where
\begin{align*}
 \ell_B(\mathsf{s})
 &=-\tfrac12\log(1+\tau^2b_B)
 +\frac{\tau^2m_B^2}{2(1+\tau^2b_B)},\\
 \ell_H(\mathsf{s})&=\sum\nolimits_{B\in\mathcal L(\cT)}\ell_B(\mathsf{s}).
\end{align*}
An empty leaf contributes $\ell_B=0$.
For fixed $(\cT,V)$, the space of cutpoint arrays is
\begin{align*}
 \mathcal C_{\max}(\cT,V)
 =\prod\nolimits_{\eta\in\mathcal I(\cT)}
 \{0,\ldots,b_{v_\eta}-1\}.
\end{align*}
With respect to normalized Haar measure on $\SO(p)$ and counting measure on
$\mathcal C_{\max}(\cT,V)$, the conditional posterior of $(Q,C)$ is
\begin{equation}
 \pi_H(\mathsf{s}\mid\mathrm{rest})
 \propto
 \exp\{\ell_H(\mathsf{s})\}\,
 \pi_{\rm disc}(\cT,V,C\mid Q),
 \label{main:eq:hardtarget}
\end{equation}
with density zero for invalid arrays. Here $\mathrm{rest}$ denotes the conditioning quantities listed above. The prior factor is \eqref{main:eq:discrete-prior} evaluated at the fixed $(\cT,V)$. Normalize the product jointly over $Q$ and $C$, without separately normalizing the tree prior over $C$ for each $Q$.

\section{POSTERIOR COMPUTATION}
\label{main:sec:update}

\subsection{Selecting coordinate pairs}

Write $[p]=\{1,\ldots,p\}$ and let
$\mathcal P=\{(a,b)\in[p]^2:a\ne b\}$. Let
\begin{align*}
 \mathcal S(\cT,V)=\{v_\eta:\eta\in\mathcal I(\cT)\},
 \qquad s=\lvert\mathcal S(\cT,V)\rvert,
\end{align*}
be the set of rotated-coordinate indices used by the current tree. This is $\mathcal S_t$ with the tree index omitted, not a subset of the original predictors. The set remains fixed during the joint update, although the corresponding rows of $Q$ may change. Conditional on $(\cT,V)$, draw ordered pairs $P_k=(a_k,b_k)\in\mathcal P$ independently with replacement from $q_P(\cdot\mid\cT,V)$. Define
\begin{align*}
 \mathcal P_{\rm in}
 &=\{(a,b)\in\mathcal P:a,b\in\mathcal S\},\\
 \mathcal P_{\rm out}
 &=(\mathcal S\times\mathcal S^c)\cup(\mathcal S^c\times\mathcal S),
\end{align*}
where $\mathcal S^c=[p]\setminus\mathcal S$. Fix $\rho_{\rm out}\in(0,1)$. For $2\le s<p$,
\begin{equation}
q_P(P\mid\cT,V)=
\begin{cases}
\dfrac{1-\rho_{\rm out}}{s(s-1)}, & P\in\mathcal P_{\rm in},\\
\dfrac{\rho_{\rm out}}{2s(p-s)}, & P\in\mathcal P_{\rm out},\\
0, & \text{otherwise}.
\end{cases}
\label{main:eq:plane-law}
\end{equation}
If $s\in\{0,p\}$, use the uniform distribution on all $p(p-1)$ ordered distinct pairs. If $s=1$, use the uniform distribution on the $2(p-1)$ pairs with one coordinate in $\mathcal S$ and the other outside it. Rotating two used rows changes the basis but preserves their joint span. Rotating a used row with an unused row can change the span of the used rows, without changing $V$ or $\mathcal S$.  Because the pairs are independent draws from the same distribution, reversing their order leaves their joint probability unchanged.

\subsection{Rotation proposal}

For an ordered pair $P=(a,b)$, let $G_P(\theta)$ equal $I_p$ except on the rows and columns indexed by $(a,b)$, in that order, where the block is
\begin{align*}
 \begin{pmatrix}\cos\theta&-\sin\theta\\
                 \sin\theta& \cos\theta\end{pmatrix}.
\end{align*}
Thus $G_P(\theta)\in\SO(p)$ and $G_P(\theta)^{-1}=G_P(-\theta)$. Fix $K_{\rm rot}\in\{1,2,\ldots\}$, $\gamma>0$, and $0<\theta_{\max}<\pi$. Starting from $Q_0=Q$, independently draw, for $k=1,\ldots,K_{\rm rot}$,
\begin{equation}
 P_k\sim q_P(\cdot\mid\cT,V),\;
 \theta_k\stackrel{\mathrm{iid}}{\sim}
 \TN_{[-\theta_{\max},\theta_{\max}]}(0,\gamma^2),
 \label{main:eq:centered-angle}
\end{equation}
where $\TN$ denotes the $N(0,\gamma^2)$ distribution truncated to the
indicated interval. Set
$Q_k=G_{P_k}(\theta_k)Q_{k-1}$. The proposed rotation is
\begin{equation}
 Q'=G_{P_{K_{\rm rot}}}(\theta_{K_{\rm rot}})\cdots
 G_{P_1}(\theta_1)Q\in\SO(p).
 \label{main:eq:givens-path}
\end{equation}
Changing the order of Givens rotations can change their product. To recover $Q$ from $Q'$, apply the same ordered pairs in reverse order with negated angles.

The centered truncated-normal law in \eqref{main:eq:centered-angle}
has an even density $q_\theta$, so the forward and reverse angle
densities agree.

\subsection{Cutpoint proposal}

After proposing a rotation, we use the current split to choose a reference location on the new cutpoint grid. We propose a random perturbation around this location rather than fixing the cutpoint there.

Let $\Xi_0=\Xi(Q)$ and $\Xi_1=\Xi(Q')$ denote the current and proposed cutpoint grids. Define
\begin{align*}
 \mathcal J_{\rm touch}&=\bigcup\nolimits_{k=1}^{K_{\rm rot}}\{a_k,b_k\},\\
 \mathcal N_{\rm touch}&=\{\eta\in\mathcal I(\cT):
 v_\eta\in\mathcal J_{\rm touch}\}.
\end{align*}
The sets $\mathcal J_{\rm touch}$ and $\mathcal N_{\rm touch}$ contain the coordinates selected for rotation and the internal nodes that split on them. Only these grid columns need to be recomputed; the other rows of $Q$ and rotated predictor values are unchanged.

For each affected node $\eta$, let $\mathcal I_\eta^0$ be the indices of observations reaching it under the current tree $(Q,C)$, and let $n_{\eta,L}^0$ be the number sent to its left child. These quantities are computed once and kept fixed while proposing $C'$.

We visit internal nodes in a fixed order, starting at the root and processing each parent before its children. At each affected node, the cutpoints already assigned to its ancestors determine the admissible index interval $[\underline c_\eta,\overline c_\eta]$. Other nodes retain their indices. If this interval is empty, the proposal fails $(\partial)$ and $(Q,C)$ remains unchanged.

For $0<n_{\eta,L}^0<\lvert\mathcal I_\eta^0\rvert$, let $z'_{\eta,(k)}$ be the $k$th order statistic of $\{(Q'x_i)_{v_\eta}:i\in\mathcal I_\eta^0\}$. Set
\begin{equation}
 t_\eta^\star=\frac{z'_{\eta,(n_{\eta,L}^0)}+z'_{\eta,(n_{\eta,L}^0+1)}}2.
 \label{main:eq:rank-threshold}
\end{equation}
Let $c_\eta^\star$ index the admissible cutpoint on $\Xi_1$ closest to $t_\eta^\star$, breaking ties by the smallest index. This defines the proposal center; it does not guarantee that the observation partition is preserved. If the node contains no observations, or all its observations go to the same child under the current tree, the center in \eqref{main:eq:rank-threshold} is undefined. Keep its current cutpoint index if it is admissible under the proposed ancestors; otherwise, return $\partial$ and retain the original state $(Q,C)$.

Fix $0<\rho_{\rm cut}<\infty$. With $W_\eta=\overline c_\eta-\underline c_\eta+1$ and $\kappa_\eta=\max\{1,\rho_{\rm cut}W_\eta\}$, define
\begin{align*}
 Z_\eta=\sum\nolimits_{u=\underline c_\eta}^{\overline c_\eta}
 \exp\{-(u-c_\eta^\star)^2/(2\kappa_\eta^2)\}.
\end{align*}
For an affected node with $0<n_{\eta,L}^0<\lvert\mathcal I_\eta^0\rvert$, draw the index from
\begin{align*}
 q_\eta(c)=Z_\eta^{-1}
 \exp\bigl\{-\tfrac{(c-c_\eta^\star)^2}{2\kappa_\eta^2}\bigr\}
 \ind\{\underline c_\eta\le c\le\overline c_\eta\}.
\end{align*}
For a generated array $C'$, let $q_C^F$ be the product of the conditional proposal probabilities at successive nodes. A node whose index is fixed by the rule above contributes a factor of one. Failed proposals leave $(Q,C)$ unchanged without resampling, so $q_C^F$ is not conditioned on success.

To compute $q_C^R$, evaluate the probability of proposing $C$ from $(Q',C')$ with proposed rotation $Q$. Recompute and fix the observation sets and left-child counts under $(Q',C')$. In the same node order, use $Qx_i$, $\Xi_0$, and ancestor indices from $C$ to evaluate the probability of each $c_\eta$ instead of drawing it. Their product is $q_C^R$. If the rule retains $c'_\eta$, only $c_\eta=c'_\eta$ has positive probability. Any zero factor gives $q_C^R=0$ and rejects the joint proposal.

\subsection{Acceptance probability and reversibility}

Write $\mathsf{s}=(Q,C)$ and $\mathsf{s}'=(Q',C')$, and define
\begin{align*}
 \Delta_{\rm disc}
 =\log\frac{\pi_{\rm disc}(\cT,V,C'\mid Q')}
 {\pi_{\rm disc}(\cT,V,C\mid Q)}.
\end{align*}
The pair probabilities and angle densities cancel between the proposed Givens sequence and its reverse. The grid is determined by the rotation, so the remaining proposal ratio is $q_C^R/q_C^F$. For $q_C^F>0$ and $q_C^R>0$, the log acceptance ratio is
\begin{equation}
 \log A=\ell_H(\mathsf{s}')-\ell_H(\mathsf{s})
 +\Delta_{\rm disc}+\log q_C^R-\log q_C^F.
 \label{main:eq:acceptance}
\end{equation}
The split-attempt and coordinate-selection factors cancel because $(\cT,V)$ is fixed. Thus $\Delta_{\rm disc}$ contains the changes in terminal-node probabilities and cutpoint prior probabilities. Accept with probability $1\wedge\exp(\log A)$. A zero proposed target density, a zero reverse probability, or a failed cutpoint proposal leaves $(Q,C)$ unchanged.

\begin{proposition}[Reversibility of the joint update]
\label{main:prop:reversibility}
Fix the conditioning quantities in Section~\ref{main:sec:model} and a
feasible $(\cT,V)$ for which \eqref{main:eq:hardtarget} has a positive
finite normalizing constant. Assume that the cutpoint grid is
evaluated exactly and that its map and tie rules are deterministic
and measurable. Under \eqref{main:eq:plane-law} and
\eqref{main:eq:centered-angle}, Algorithm~\ref{main:alg:update}, using the same
sequential cutpoint rule in both directions, including deterministic
holds and failure, is reversible with respect to \eqref{main:eq:hardtarget}.
\end{proposition}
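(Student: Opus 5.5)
The plan is to verify detailed balance for the joint kernel by writing the proposal as a deterministic involution acting on an augmented state. Any proposal that fails ($\partial$), has zero reverse probability, or lands on a zero-density state leaves $(Q,C)$ in place. Such moves contribute only to the diagonal (rejection) part of the kernel, which is automatically symmetric. So it suffices to show, for measurable sets $A,B$ of states, that
\[
\int_A \pi_H(\mathsf s)\,K(\mathsf s,B\setminus\{\mathsf s\})\,d\mathsf s
=\int_B \pi_H(\mathsf s')\,K(\mathsf s',A\setminus\{\mathsf s'\})\,d\mathsf s'.
\]
Here $d\mathsf s$ denotes Haar measure on $\SO(p)$ times counting measure on $\mathcal C_{\max}(\cT,V)$.

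Introduce auxiliary variables $\omega=(P_{1:K_{\rm rot}},\theta_{1:K_{\rm rot}})$. The map
\[
\Phi(Q,\omega)=(Q',\omega^*),\qquad
\omega^*=\bigl((P_{K_{\rm rot}},\ldots,P_1),(-\theta_{K_{\rm rot}},\ldots,-\theta_1)\bigr),
\]
is then an involution. It preserves the product measure of Haar measure with the auxiliary law. Three facts give this. First, left multiplication by a fixed $G\in\SO(p)$ preserves Haar measure, so for fixed $\omega$ the map $Q\mapsto Q'$ is measure preserving. Second, the law $q_P$ depends only on $(\cT,V)$, which is fixed, so $\mathcal S$ and hence $q_P$ are identical in both directions; independence of the draws makes the joint pair probability invariant under reversing their order. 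Third, $q_\theta$ is even, so the angle densities agree under negation and reversal. Since $\mathcal J_{\rm touch}$ depends only on the set of pairs, it is the same in both directions, so the same nodes are affected.

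Given $(Q,Q')$, the cutpoint move $C\to C'$ is a discrete kernel $q_C^F(C'\mid Q,C,Q')$. By definition, the reverse probability $q_C^R$ equals this same kernel evaluated at $q_C^F(C\mid Q',C',Q)$, because the same sequential rule is used in both directions. Measurability of the grid map, the tie rules, and the order statistics makes these kernels measurable in $Q,Q'$, so the integrals are well defined. The joint forward density of the move $(\mathsf s,\omega)\mapsto(\mathsf s',\omega^*)$ is therefore $\pi_H(\mathsf s)\,q_\omega(\omega)\,q_C^F$, and the reverse density is $\pi_H(\mathsf s')\,q_\omega(\omega^*)\,q_C^R$, with $q_\omega(\omega)=q_\omega(\omega^*)$.

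The acceptance ratio \eqref{main:eq:acceptance} is exactly $\pi_H(\mathsf s')q_C^R/\{\pi_H(\mathsf s)q_C^F\}$. The normalizing constant of \eqref{main:eq:hardtarget} cancels, and it is finite and positive by assumption. Hence
\[
\pi_H(\mathsf s)\,q_\omega\,q_C^F\,(1\wedge A)
=\pi_H(\mathsf s)q_\omega q_C^F\wedge\pi_H(\mathsf s')q_\omega q_C^R,
\]
which is symmetric under the swap. Integrating over $\omega$, summing over cutpoint arrays, and applying the change of variables by $\Phi$ (Jacobian one, by Haar invariance) gives detailed balance.

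I expect the main obstacle to be the bookkeeping for deterministic holds and failures. One must check that when the forward rule holds $c_\eta$ fixed, the reverse rule also holds it, or else assigns $q_C^R=0$. If a node has a defined center in one direction but not the other, one factor is zero and the move is rejected; rejection is consistent because the symmetric minimum above is then zero on both sides. The formula $\pi_H q^F\wedge\pi_H' q^R$ handles every such asymmetric case at once, provided failed moves ($\partial$) are assigned to the rejection mass rather than normalized away. The latter is exactly the convention that $q_C^F$ is not conditioned on success.
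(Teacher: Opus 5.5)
Your proposal is correct and follows essentially the same route as the paper. The shared ingredients are the path-augmented involution $(\mathsf s,u)\mapsto(\mathsf s',u^\dagger)$ with reversed pairs and negated angles, Haar invariance of $Q\mapsto H(P,\theta)Q$ for each fixed path combined by Tonelli, cancellation of the pair law and the even angle density, and symmetry of $\min\{F,F\circ\mathfrak J\}$, with failures and zero reverse mass assigned to the unnormalized rejection mass.

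One side remark is slightly off, though nothing in the argument depends on it: a node with a center in one direction and a deterministic hold in the other need not give a zero factor, since the hold has mass one when $c_\eta=c'_\eta$. The minimum formula already covers that case.
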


Supplement~\ref{A:root} gives the cutpoint recursion, proves reversibility and
invariance of the completed tree update, and analyzes the cost of one
joint proposal. Reversibility establishes posterior invariance, not a
convergence-rate bound.

\begin{algorithm}[t]
\caption{Joint rotation and cutpoint update for tree $t$}
\label{main:alg:update}
\begin{algorithmic}[1]
\State Remove tree $t$ from the fitted sum and form $r^{(t)}$
\State Save $\mathsf{s}=(Q,C)$, $\Xi_0$, the rotated design, observation assignments, $\ell_H(\mathsf{s})$, and $\log\pi_{\rm disc}(\mathsf{s})$
\State Initialize the retained state to all saved quantities
\State Draw $P_{1:K_{\rm rot}}$ and $\theta_{1:K_{\rm rot}}$; construct $Q'$ by \eqref{main:eq:givens-path}
\State Recompute grid $\Xi_1$ for coordinates in $\mathcal J_{\rm touch}$
\State Propose $C'$ starting at the root, processing each node before its children; accumulate $\log q_C^F$
\If{the cutpoint proposal does not return $\partial$ and the candidate is valid}
 \State Compute proposed observation assignments, $\ell_H(\mathsf{s}')$, and $\log\pi_{\rm disc}(\mathsf{s}')$
 \State Evaluate $\log q_C^R$ by the same recursion from $\mathsf{s}'$ to $\mathsf{s}$, including the rules that retain an index
 \If{$q_C^R>0$ and the proposed target density is positive}
  \State With probability $1\wedge\exp(\log A)$ from \eqref{main:eq:acceptance}, replace the retained state by the proposed state and its associated quantities
 \EndIf
\EndIf
\end{algorithmic}
\end{algorithm}

\subsection{Completing the tree update}

At the retained rotation and grid, hold $r^{(t)}$, $\sigma$, $\tau$,
and the other trees fixed and apply birth--death kernels preserving
the leaf-collapsed conditional posterior. Birth proposals may sample
uniformly among coordinates with available cutpoints, but their
actual forward and reverse probabilities and the prior
\eqref{main:eq:discrete-prior} must enter the acceptance ratio. Without
using old leaf means, complete these updates by drawing all leaf
means from their Gaussian full conditional and restoring the tree
contribution, even after rotation rejection or failure.
Supplement~\ref{A:root} proves joint conditional invariance of this completed
update; a sweep through all trees followed by valid updates of the
remaining parameters preserves the full posterior.

\section{Posterior contraction}
\label{main:sec:theory}

\subsection{Fixed-design model and additive truth}

For asymptotics, make the sample-size dependence explicit:
\begin{equation}
 Y_{i,n}=f_0(x_{i,n})+\varepsilon_{i,n},\quad
 \varepsilon_{i,n}\stackrel{\mathrm{iid}}{\sim}N(0,\sigma_0^2),
 \label{main:eq:true-model}
\end{equation}
where $\sigma_0>0$ and $i=1,\ldots,n$. Define
\begin{align*}
 \begin{aligned}
 \norm{g}_n^2&=\tfrac1n\sum\nolimits_{i=1}^{n}g(x_{i,n})^2,\\
 Y^{(n)}&=(Y_{1,n},\ldots,Y_{n,n})^{\trans}.
 \end{aligned}
\end{align*}
Let $\E_0$ denote expectation under \eqref{main:eq:true-model}. Let $\Pi_n(\cdot\mid Y^{(n)})$ denote the posterior induced by the prior below, with $b_n$ candidate cutpoints on each rotated coordinate. The sequence $b_n\ge1$ consists of deterministic integers.

\Needspace{5\baselineskip}
\begin{assumption}[Fixed design and additive truth]
\label{main:ass:truth}
The ambient dimension $p\ge2$, number of trees $T$, and number of
components $R$ are fixed, with $1\le R\le T$. The fixed designs are
contained in one compact set $\cX\subset\R^p$ satisfying
\begin{align*}
 x_{i,n}\in\cX,\qquad
 \sup_{x\in\cX}\norm{x}_2\le M_X<\infty.
\end{align*}
\begingroup\displaywidowpenalty=10000
The truth has at least one representation on $\cX$ of the form
\begin{align*}
 f_0(x)=\sum_{r=1}^{R}h_{0r}\{\pi_{S_r}(Q_{0r}x)\},
 \qquad R\le T,
\end{align*}
\endgroup
where $Q_{0r}\in\SO(p)$ and
$S_r=\{s_{r1}<\cdots<s_{rd_r}\}\subseteq\{1,\ldots,p\}$, with
$1\le d_r\le p$. The projection is
$\pi_{S_r}(u)=(u_{s_{r1}},\ldots,u_{s_{rd_r}})^{\trans}$.
For fixed $\lambda_r>0$ and $\alpha_{rk}\in(0,1]$, each
$h_{0r}:[-M_X,M_X]^{d_r}\to\R$ satisfies
$\norm{h_{0r}}_\infty\le C_h<\infty$ and, for all $u,v$ in its domain,
\begin{align*}
 \abs{h_{0r}(u)-h_{0r}(v)}
 \le\lambda_r\sum_{k=1}^{d_r}\abs{u_k-v_k}^{\alpha_{rk}}.
\end{align*}
The dimensions $d_r$, radii $\lambda_r$, true rotation matrices and functions,
and smoothness vectors are fixed as $n\to\infty$.
\end{assumption}

Here $S_r$ selects coordinates of $Q_{0r}x$, and $\alpha_{rk}$ describes smoothness in coordinate $s_{rk}$. Since $Q_{0r}$ may differ across components, the same index may represent different directions. The dimension $d_r$ refers to the chosen representation; minimality and identifiability are not assumed.

Let
\begin{align*}
 \bar\alpha_r^{-1}=d_r^{-1}\sum\nolimits_{k=1}^{d_r}\alpha_{rk}^{-1}.
\end{align*}
Constants below may depend on the fixed truth parameters, $p,R,T$,
$M_X,C_h,r_{\min}$, and fixed prior hyperparameters, but not on $n$ or
on a rotation matrix in a stated neighborhood.

\subsection{Prior and grid conditions}

Use the prior in Section~\ref{main:sec:model} with $b_j=b_n$ for all $j$ and $\rho_\ell=\nu^{\ell+1}$ for fixed $0<\nu<1$. Keep all remaining prior hyperparameters fixed.

For $j\in S_r$, define the range of rotated coordinate $j$ by
\begin{align*}
 R_{n,rj}(Q)=\max_{i\le n}e_j^{\trans}Qx_{i,n}
 -\min_{i\le n}e_j^{\trans}Qx_{i,n}.
\end{align*}
For component $r$, define
\begin{align*}
 \widetilde K_{n,r}
 =\left\{\frac{n(\lambda_rd_r)^2}{\log n}\right\}^{d_r/(2\bar\alpha_r+d_r)},
\end{align*}
and choose a leaf count $K_{n,r}=2^{L_{n,r}}$ with $L_{n,r}\in\mathbb N$ and $K_{n,r}\asymp\widetilde K_{n,r}$.

\begin{assumption}[Active-range and grid conditions]
\label{main:ass:design-grid}
There exist $r_{\min}>0$, $C_b<\infty$, and $n_0$ such that
\begin{align}
\textnormal{(D1)}\quad
&\inf_{n\ge n_0}\min_{r\le R}\min_{j\in S_r}
 R_{n,rj}(Q_{0r})\ge r_{\min},
 \notag\\
\textnormal{(D2)}\quad
&b_n+1\ge\max_{r\le R}K_{n,r},
 \quad \log b_n\le C_b\log n.
 \label{main:eq:D2}
\end{align}
Here D2 is required for all $n\ge n_0$, after enlarging $n_0$ if necessary.
\end{assumption}
Condition D1 requires a positive lower bound on the ranges of the coordinates used by the truth; it does not require full rank or a space-filling design. The choice $b_n=n$ satisfies D2 for all sufficiently large $n$, since $d_r/(2\bar\alpha_r+d_r)<1$, and requires no knowledge of the smoothness or coordinate sets.

\subsection{Contraction rate}

Define
\begin{align*}
 \eps_{n,r}&=(\lambda_rd_r)^{d_r/(2\bar\alpha_r+d_r)}
 \left(\frac{\log n}{n}\right)^{\bar\alpha_r/(2\bar\alpha_r+d_r)},
 \\
 \eps_n&=\sum_{r=1}^{R}\eps_{n,r}.
\end{align*}

\Needspace{8\baselineskip}
\begin{theorem}[Posterior contraction]
\label{main:thm:main}
Under the fixed-design model \eqref{main:eq:true-model}, Assumptions~\ref{main:ass:truth} and~\ref{main:ass:design-grid}, and the preceding prior specification, there exists a sufficiently large constant $M<\infty$, independent of $n$, such that
\begin{align*}
 \E_0\Pi_n\bigl[
 \norm{f-f_0}_n+\abs{\sigma-\sigma_0}>M\eps_n
 \mid Y^{(n)}\bigr]\to0.
\end{align*}
\end{theorem}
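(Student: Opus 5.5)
We follow the standard fixed-design contraction framework of Ghosal and van der Vaart for independent non-identically distributed Gaussian observations, as used for BART by \citet{RockovaVanderPas2020} and \citet{JeongRockova2023}. It suffices to verify three conditions for the joint parameter $(f,\sigma)$ and $\eps_n$.
\begin{itemize}[leftmargin=*]
\item[(i)] A prior mass bound: $\Pi\{\norm{f-f_0}_n\le\eps_n,\ \abs{\sigma-\sigma_0}\le\eps_n\}\ge e^{-c_1n\eps_n^2}$. This controls the Kullback--Leibler neighborhoods, because for Gaussian errors both the KL divergence and its variance are bounded by multiples of $\norm{f-f_0}_n^2+\abs{\sigma-\sigma_0}^2$.
\item[(ii)] A sieve $\mathcal F_n$ with $\Pi(\mathcal F_n^c)\le e^{-(c_1+4)n\eps_n^2}$.
\item[(iii)] An entropy bound $\log N(\eps_n,\mathcal F_n,\norm{\cdot}_n)\lesssim n\eps_n^2$. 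Combined with the usual tests for $\sigma$ on a compact interval, this yields exponentially powerful tests.
\end{itemize}
The contraction of $\sigma$ follows jointly, as in Theorem~7 of \citet{JeongRockova2023}.

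\textbf{Prior mass.} Assign component $r$ to tree $t=r$ for $r\le R$, and let the remaining $T-R$ trees be roots with leaf means near zero.
\begin{itemize}[leftmargin=*]
\item \emph{Rotation neighborhood.} Restrict $Q_r$ to a Haar neighborhood $\norm{Q_r-Q_{0r}}\le\delta_n$ with $\delta_n$ a polynomial in $n^{-1}$. Its Haar mass is at least $\delta_n^{p(p-1)/2}\ge e^{-c\log n}$, which is negligible relative to $n\eps_n^2$. Hölder continuity of $h_{0r}$ gives $\sup_{x\in\cX}\abs{h_{0r}(\pi_{S_r}Q_rx)-h_{0r}(\pi_{S_r}Q_{0r}x)}\le\lambda_r\sum_k(M_X\delta_n)^{\alpha_{rk}}\ll\eps_{n,r}$.
\item \emph{Approximating tree.} For each such $Q_r$, build an anisotropic $k$-d-type tree splitting only on coordinates in $S_r$, with about $\widetilde K_{n,r}$ leaves. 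Side lengths in coordinate $s_{rk}$ are chosen proportional to $(\eps_{n,r}/\lambda_rd_r)^{1/\alpha_{rk}}$, following the anisotropic construction of \citet{JeongRockova2023}, and cutpoints are taken from the grid $\Xi(Q_r)$. By (D1) and continuity of the range in $Q$, the range $R_{n,rj}(Q_r)\ge r_{\min}/2$ holds uniformly on the neighborhood. By (D2) the grid spacing is at most $2M_X/(b_n+1)\lesssim 1/K_{n,r}$, fine enough to realize the required cells up to a constant factor. The step approximation then has sup error $\lesssim\eps_{n,r}$ uniformly in the neighborhood.
\item \emph{Tree-prior probability.} The probability of this specific tree is at least $\exp\{-cK_{n,r}(\log p+\log b_n+\log(1/\nu)\cdot L_{n,r})\}\ge e^{-cK_{n,r}\log n}$, using $\log b_n\le C_b\log n$. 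Gaussian leaf means within $\eps_{n,r}/K$ of the targets contribute a further $e^{-cK_{n,r}\log n}$.
\item \emph{Conclusion.} Since $K_{n,r}\log n\asymp n\eps_{n,r}^2$, summing over $r$ gives (i). The $\IG$ prior puts mass $\gtrsim\eps_n$ near $\sigma_0$.
\end{itemize}

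\textbf{Sieve and entropy.} Define $\mathcal F_n$ as forests in which every tree has at most $\bar K_n=\lfloor C n\eps_n^2/\log n\rfloor$ leaves and all leaf means are bounded by $n$, with rotations unrestricted.
\begin{itemize}[leftmargin=*]
\item \emph{Sieve complement.} The branching-process prior with $\rho_\ell=\nu^{\ell+1}$ gives $\Pi(\text{leaves}>k)\le e^{-ck\log k}$, as in \citet{RockovaSaha2019}, so $\Pi(\mathcal F_n^c)$ is exponentially small. The Gaussian tail handles the leaf-mean bound.
\item \emph{Entropy, key observation.} Rotations enter $f$ only through the induced partition of the $n$ fixed design points, and the grid is a function of $Q$. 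For a fixed topology and split-coordinate array, the number of distinct partitions of $\{x_i\}$ as $Q$ ranges over $\SO(p)$ and cutpoints over the grid is at most polynomial in $n$ per split. This follows because each split is a halfspace $\{x:e_j^\trans Qx<\xi\}$ with $\xi$ an affine combination of two design projections, and halfspaces in $\R^p$ cut $n$ points in at most $O(n^{p})$ ways. Including the grid dependence, the bound is $O(n^{c p})$ via a Warren/Milnor-type sign-pattern count for polynomials in the entries of $Q$.
\item \emph{Entropy count.} With $\bar K_n$ splits per tree and $T$ trees, the number of partitions is $\exp\{c\bar K_n\log n\}$. Leaf means are then covered by an $\eps_n$-net in $[-n,n]^{T\bar K_n}$, so $\log N\lesssim\bar K_n\log n\lesssim n\eps_n^2$.
\end{itemize}
Because the design is fixed and $\norm{\cdot}_n$ depends only on design values, no continuity in $Q$ is needed on the sieve.

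\textbf{Main obstacle.} The prior-mass step needs uniformity over the rotation neighborhood. The grid $\Xi(Q_r)$ moves with $Q_r$, and the extrema are taken over all $n$ points. The approximating tree must therefore be defined via grid indices rather than fixed locations, and its approximation error must be controlled uniformly in $Q_r$. The plan is to fix indices at $Q_{0r}$, then bound the displacement of each grid point by $M_X\delta_n\cdot$const. Any misassignment then affects only points within $O(\delta_n)$ of a boundary, whose contribution is bounded using the Hölder modulus, since the error there remains a difference of $h_{0r}$ values across nearby cells.

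Component sums are handled by the triangle inequality. Condition $R\le T$ guarantees enough trees, and $\eps_n^2\ge\sum_r\eps_{n,r}^2$ makes the prior-mass exponents add. This completes the verification and yields the theorem with a sufficiently large $M$.
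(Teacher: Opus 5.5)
Your proposal is correct and follows essentially the paper's route. The paper's ingredients are the same as yours: one fixed-index anisotropic tree per component on the rotation-dependent grid, valid uniformly over a shrinking Haar neighborhood at $O(\log n)$ cost; root-only remaining trees; the $e^{-ck\log k}$ tree-size tail; and entropy from counting design dichotomies for each topology and label array. The differences are minor choices. You count halfspace dichotomies node by node, where the paper applies Warren's bound to the joint system in $Tp^2+I_+$ variables. You use fixed leaf targets with a H\"older shift argument, where the paper uses $Q$-dependent midpoint centers.

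The one detail to tighten is the noise scale. The sieve must also confine $\sigma$ to a growing interval such as $[n^{-A},e^{An\eps_n^2}]$, with inverse-gamma tails controlling its complement; a fixed compact interval would leave complement mass bounded away from zero. This is how the paper feeds everything into the unknown-variance criterion of \citet{Jeong2025}.
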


For fixed $R$, the slowest component determines the order of $\eps_n$.
Supplement~\ref{B:root} uses one rotated tree per component to verify
the local prior-mass, entropy, and sieve-tail conditions of
\citet{Jeong2025}. The result concerns $f$ and $\sigma$,
not the component representations.

\subsection{Comparison with axis-aligned BART}
\label{main:sec:bart-lower}

The preceding upper bound does not determine whether axis-aligned
BART can attain the same rate. Let $\Pi_n^{\B}$ denote the posterior
under the preceding prior specification with $Q_t=I_p$
deterministically for every tree. All remaining specifications,
including the hyperparameters, empirical-range grid rule, and
budget $b_n$, are unchanged.

\Needspace{8\baselineskip}
\begin{theorem}[Posterior contraction lower bound for BART]
\label{main:thm:bart-lower}
Fix integers $p\ge2$ and $T\ge1$, and constants
$0<\alpha<1$ and $\lambda,C_h,\sigma_0>0$. Let $\cX=[0,1]^p$.
For integers $m\ge1$ and $n=2^{mp}$, let the design enumerate
\[
\mathcal X_n
=\left\{(i+1/2)2^{-m}:i=0,\ldots,2^m-1\right\}^{p}.
\]
Assume \eqref{main:eq:D2} with $R=d_1=1$,
$\lambda_1=\lambda$, and $\alpha_{11}=\alpha$.

There exist $Q_0\in\SO(p)$ and
$h_0:[-\sqrt p,\sqrt p]\to\R$, both independent of $n$,
such that $\norm{h_0}_\infty\le C_h$ and
$\abs{h_0(u)-h_0(v)}\le\lambda\abs{u-v}^{\alpha}$
on this interval, and the truth
$f_0(x)=h_0\{\pi_{\{1\}}(Q_0x)\}$ on $[0,1]^p$
has the following properties. Set
\begin{align*}
\varepsilon_n
&=\lambda^{1/(2\alpha+1)}
  \left(\frac{\log n}{n}\right)^{\alpha/(2\alpha+1)},\\
\varepsilon_n^{\B}
&=\left(\frac{\log n}{n}\right)^{\alpha/(2\alpha+p)}.
\end{align*}
For some fixed $c>0$ and a sufficiently large fixed $M_0<\infty$,
\begin{align*}
\E_0\Pi_n^{\B}\bigl[
\norm{f-f_0}_n\le c\varepsilon_n^{\B}
\mid Y^{(n)}\bigr]
&\rightarrow0,
\\
\E_0\Pi_n\bigl[
\norm{f-f_0}_n+\abs{\sigma-\sigma_0}>M_0\varepsilon_n
\mid Y^{(n)}\bigr]
&\rightarrow0.
\end{align*}
All limits are taken as $m\to\infty$ along $n=2^{mp}$.
\end{theorem}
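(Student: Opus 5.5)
The second display follows directly from Theorem~\ref{main:thm:main} once we check its hypotheses for the constructed truth. We take $R=1$, $d_1=1$, $S_1=\{1\}$, and $\alpha_{11}=\alpha$. Condition (D1) holds on the tensor grid because the first row $q$ of $Q_0$ has range at least a fixed positive amount: the grid nearly fills $[0,1]^p$, so the range of $q^{\trans}x$ is close to $\sum_k|q_k|\ge1$. Condition (D2) is assumed. Then $\eps_{n,1}=\lambda^{1/(2\alpha+1)}(\log n/n)^{\alpha/(2\alpha+1)}$ is exactly $\varepsilon_n$. Hence the work is in the lower bound for $\Pi_n^{\B}$.

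\textbf{Construction.} Take the diagonal direction $q=p^{-1/2}(1,\ldots,1)$ and let $Q_0$ be any rotation with first row $q$. Let $h_0$ be a lacunary Weierstrass-type sum
\[
h_0(u)=\sum_{j\ge1}a2^{-j\alpha}\psi(2^ju),
\]
where $\psi$ is a fixed smooth $1$-periodic bump and $a$ is small enough that $h_0$ is $(\alpha,\lambda)$-H\"older with sup norm at most $C_h$. Such an $h_0$ is fixed and is \emph{exactly} no smoother than $\alpha$ at every scale. The key deterministic lemma is then the following. For any partition of $[0,1]^p$ into $K$ axis-aligned boxes, and hence any sum of $T$ axis-aligned trees with at most $K$ leaves each (a refinement with at most $K^T$ cells), the empirical $L_2$ approximation error satisfies
\[
\|f-f_0\|_n\gtrsim K^{-\alpha/p}
\]
up to constants depending on $T$. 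To prove it, note that a partition into $N$ boxes contains many boxes whose diameter in every direction is at least of order $N^{-1/p}$, and these boxes cover a constant fraction of the design points. Alternatively, use a piecewise-constant approximation-theoretic argument: on a box with sides $\ell_1,\ldots,\ell_p$, the function $h_0(q^{\trans}x)$ varies over a $q$-projected interval of length $p^{-1/2}\sum_k\ell_k$. Lacunarity gives the within-box variance bound $\gtrsim(\sum_k\ell_k)^{2\alpha}$, and the constraint $\sum_{\text{boxes}}\prod_k\ell_k\approx1$ together with AM--GM gives $\sum_k\ell_k\gtrsim N^{-1/p}$ on average. Convexity (Jensen) then yields the stated bound. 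The tensor design makes the empirical mean over a box comparable to the Lebesgue mean up to $2^{-m}$ discretization, which is negligible.

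\textbf{Posterior step.} With the lemma in hand, $\|f-f_0\|_n\le c\varepsilon_n^{\B}$ forces the total number of leaves to satisfy $K\ge K_n:=C(n/\log n)^{p/(2\alpha+p)}$. I would then use the standard prior-mass-versus-evidence lower bound argument, in the style of Castillo's lower-bound lemma. Under $\rho_\ell=\nu^{\ell+1}$, the prior probability of trees with at least $K$ leaves is at most $e^{-c_1K\log K}$ (as in \citet{RockovaSaha2019}), whereas the prior mass of a Kullback--Leibler neighbourhood of radius $\bar\varepsilon_n$ is at least $e^{-c_2n\bar\varepsilon_n^2}$. Here $\bar\varepsilon_n$ is the axis-aligned rate, achieved by trees with about $(n/\log n)^{p/(2\alpha+p)}$ leaves, so $n\bar\varepsilon_n^2\asymp K_n\log n$. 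The denominator of the posterior is then bounded below in probability by $e^{-c_3n\bar\varepsilon_n^2}$. Taking the constant $C$ in $K_n$ large, which corresponds to choosing $c$ small, gives $c_1K_n\log K_n>c_3n\bar\varepsilon_n^2$, so the set $\{K\ge K_n\}$ has vanishing posterior mass. Both the evidence bound and the complexity tail are inherited from the Supplement~\ref{B:root} arguments specialized to $Q_t=I_p$.

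\textbf{Main obstacle.} The delicate point is the constant bookkeeping: the lower-bound threshold $K_n$ and the evidence exponent scale identically, up to constants, as $(n/\log n)^{p/(2\alpha+p)}\log n$. This has to be handled by tuning $c$ through the lemma's constant, with the logarithm from $K\log K$ matching the $\log n$. A second delicate point is proving the box-partition approximation lemma uniformly over arbitrary axis-aligned partitions produced by $T$ overlapping trees. The $\sigma$ component also needs care, since $\sigma$ could absorb misfit; this is not a real concern for the lower bound, because we only require $\|f-f_0\|_n$ to be small.
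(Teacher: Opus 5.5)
\textbf{The forest approximation lemma fails for $T\ge2$.} You reduce a sum of $T$ trees to a single partition by taking the common refinement, which can have $K^T$ cells. For example, one tree cutting $x_1$ into $K$ slabs and another cutting $x_2$ into $K$ slabs already give $K^2$ cells. A within-cell variance bound applied to $N=K^T$ cells gives only $\norm{f-f_0}_n\gtrsim K^{-T\alpha/p}$. The loss is in the exponent, not in a constant depending on $T$.

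At $K\asymp(n/\log n)^{p/(2\alpha+p)}$ this lower bound is $(n/\log n)^{-T\alpha/(2\alpha+p)}=o(\varepsilon_n^{\B})$. Once $T\ge p$ it is even $o(\varepsilon_n)$. So ``small error forces $K\ge K_n$'' does not follow, and the posterior step yields no separation. Your AM--GM/Jensen argument is intrinsically a single-partition argument. The case of overlapping trees, which you flag as delicate, is exactly the part the theorem needs, and you leave it open.

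\textbf{The missing idea is a linear sparsity argument that is additive in the leaves.} A forest with $K_+$ total leaves is a linear combination of $K_+$ rectangle indicators. The paper uses empirical tensor Haar functions $\psi_{j,k}$ at levels $j\le m-1$, which are exactly orthonormal on the dyadic-midpoint design. An interval indicator has nonzero univariate coefficient on at most two level-$j$ dyadic intervals. Hence each rectangle has at most $2^p$ nonzero level-$j$ coefficients, and the whole forest has at most $2^pK_+$, however the trees overlap.

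The truth is then built so that every one of the $2^{jp}$ coefficients satisfies $\langle f_0,\psi_{j,k}\rangle_n\ge c_H2^{-j(\alpha+p/2)}$. This is why $h_0$ uses frequencies $2^{Js}\sqrt p$, so that $f_0$ oscillates at frequencies $2^{Js}$ in $\sum_\ell x_\ell$. It also uses the phase $-p\pi/2$ and a large lacunarity gap $J$: higher frequencies then vanish exactly on the grid, and lower ones are dominated. Bessel's inequality with $2^{jp}\asymp K_+$ gives $\norm{f_0-g}_n\gtrsim K_+^{-\alpha/p}$ uniformly over forests with $K_+\le cn$.

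\textbf{Your choice of $h_0$ is not tied to this mechanism.} A generic ``smooth 1-periodic bump'' at ratio $2$ supports no such coefficient bound. It also need not be exactly $\alpha$-rough: with $\psi(u)=\phi(u)-2^{-\alpha}\phi(2u)$ the series telescopes to a smooth function.

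\textbf{The posterior step matches the paper once the lemma is repaired.} Your plan uses an evidence lower bound at the axis-aligned rate, the $K\log K$ tail, and thresholds chosen in the right order, as the paper does. One caveat: the evidence bound is not inherited verbatim from Supplement~B. It needs its own $p$-dimensional approximating tree, cycling through the coordinates, together with a check that $2^{l_n}\le b_n+1$ under (D2).
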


Since $\varepsilon_n=o(\varepsilon_n^{\B})$, the BART posterior
assigns vanishing mass to the ball of radius $M\varepsilon_n$
for every fixed $M>0$ at this truth. Thus the stated axis-aligned
prior need not adapt to a one-dimensional rotated representation.
The proof combines an approximation lower bound for the entire
forest with posterior control of its total leaf count.
Supplement~\ref{C:root} gives the construction and proof;
the RoBART bound follows from Theorem~\ref{main:thm:main}.

\section{SIMULATION STUDY}
\label{main:sec:simulation}

\begin{figure*}[!t]
\centering
\includegraphics[width=0.95\textwidth]{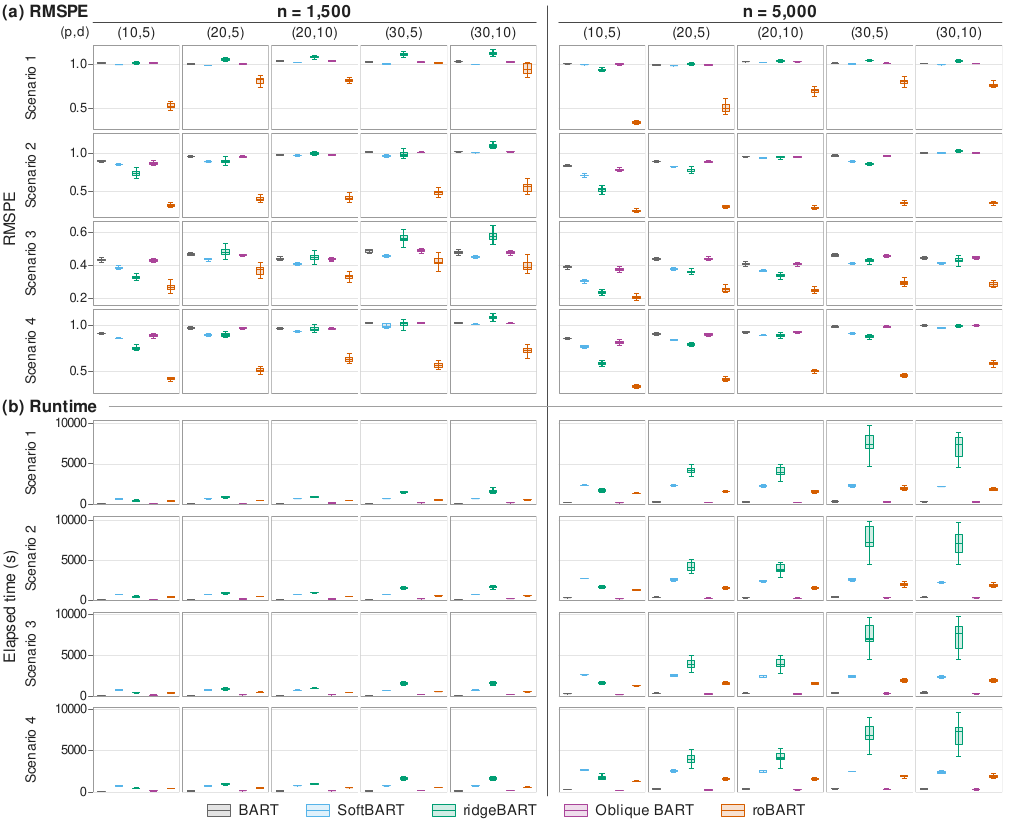}
\caption{RMSPE (a) and elapsed time in seconds (b) at $n\in\{1{,}500,5{,}000\}$.
Rows give Scenarios~1--4 and columns give $(p,d)$. Boxes summarize thirty
replications; RMSPE is measured against the true regression function.}
\label{main:fig:rmspe}
\end{figure*}

\subsection{Benchmark design}
\label{main:sec:benchmark}

We compare five methods on four regression functions of $d$ nonorthogonal
linear combinations of $p$ observed predictors. We use
$(p,d)=(10,5)$, $(20,5)$, $(20,10)$, $(30,5)$, and $(30,10)$,
training sizes $n\in\{1{,}500,5{,}000\}$, and $N=2{,}000$ test inputs.

We generate $X_i\sim N_p(0,I_p)$ and $Y_i=f_s(X_i)+\varepsilon_i$,
with independent $N(0,0.5^2)$ errors. For each raw function $r_s$ below,
set $f_s=(r_s-\mu_s)/\sqrt{v_s}$, where $\mu_s$ and $v_s$ are its
sample mean and variance on $100{,}000$ independent calibration inputs.
These constants remain fixed across sample sizes and replications.

Let $U=[u_1,\ldots,u_d]\in\R^{p\times d}$ have orthonormal columns,
with $u_1=p^{-1/2}\mathbf 1_p$. The remaining columns are obtained by
orthogonalizing Gaussian vectors against $u_1$ and each other.
All four scenarios use $z=Ax$, where
$A=(0.25I_d+0.75\mathbf 1_d\mathbf 1_d^{\trans})^{1/2}U^{\trans}$
and the square root is symmetric positive definite.
Geometrically, the transformation turns projected spherical Gaussian
contours into ellipsoids elongated along $\mathbf 1_d$.
The coordinates of $AX$ have unit variance and pairwise correlation $0.75$. Write $t=\Phi(z)$, with the standard normal
distribution function applied coordinatewise.

\noindent\textbf{Scenario 1: Sum of sines.}
The raw function is $r_1(x)=d^{-1/2}\sum_{j=1}^{d}\sin(3.3z_j)$.

\noindent\textbf{Scenario 2: Smooth surface.}
Define $H_d(t)=\sin[(10/\sqrt d)\{\sum_{j=1}^{d}(t_j-1/2)^2-d/12\}]$
\citep{JeongRockova2023} and set $r_2(x)=H_d(t)$.

\noindent\textbf{Scenario 3: Friedman blocks.}
Let $F(t_1,\ldots,t_5)=10\sin(\pi t_1t_2)+20(t_3-1/2)^2+10t_4+5t_5$.
For $d=5$, set $r_3(x)=F(t_1,\ldots,t_5)$; for $d=10$, set
$r_3(x)=\{F(t_1,\ldots,t_5)+F(t_6,\ldots,t_{10})\}/\sqrt2$.

\noindent\textbf{Scenario 4: Smooth surface with jumps.}
Let $S(t)=\sum_{j=1}^{d}(t_j-1/2)$ and
$W(t)=\sum_{j=1}^{d}(-1)^j(t_j-1/2)$. Define
$J_d(t)=\ind\{S(t)\le0,\ W(t)>0\}+\ind\{S(t)>0,\ W(t)\le0\}$
and set $r_4(x)=H_d(t)+J_d(t)$.
The jump boundaries are hyperplanes in $t$-space but need not be
hyperplanes in the observed predictor space.

For each $(p,d)$, scenarios and thirty replications share a fixed
transformation matrix and training/test inputs; only noise and MCMC
seeds vary. Methods use identical datasets, and scenarios share noise
within each replication. The smaller training sample contains the
first $1{,}500$ observations of the larger one, including their noise,
and both sizes use the same test inputs.

All methods fit $200$ trees to the observed data and average
$1{,}000$ unthinned draws after $9{,}000$ burn-in iterations for
prediction. RoBART uses $K_{\rm rot}=5$ and $\gamma=0.08$. See Appendix~\ref{D:root}.

\subsection{Prediction accuracy and runtime}
\label{main:sec:results}

Figure~\ref{main:fig:rmspe}(a) shows lower median RMSPE for RoBART across the
displayed settings in Scenarios~2--4. The improvement is less uniform
in Scenario~1, where errors remain close to one in some $p=30$ settings
at $n=1{,}500$.

Figure~\ref{main:fig:rmspe}(b) compares elapsed times under the reported
fitting settings. RoBART requires more time than BART and Oblique BART,
but less than Soft BART and, in most displayed settings, ridgeBART.

\section{DISCUSSION}\label{main:sec:discussion}

RoBART combines tree-specific rotations with a joint rotation and cutpoint update that accounts for the changing numerical grid. We establish reversibility of the joint conditional update and posterior contraction for additive regression functions with component-specific rotations.
For a fixed H\"older ridge function on a tensor design, the
axis-aligned counterpart need not attain RoBART's
one-dimensional contraction rate.

The contraction result assumes fixed $p,T,R$ with $R\le T$. Extensions to growing $p$ and convergence analysis of the sampler require further study.

% End-of-main marker is outside the research text and used for page-limit checking.
\label{main:end}

% Continue in two columns so acknowledgments do not occupy an isolated page.
\section*{Acknowledgments}
\phantomsection\addcontentsline{toc}{section}{Acknowledgments}
This research was supported by the Basic Science Research Program
through the National Research Foundation of Korea (NRF), funded by
the Ministry of Education (RS-2025-25426611).

% Preserve the union of references printed in the original main paper and B.
\nocite{ghosal2000,ghosal2007,ghosal2017,RockovaSaha2019}
\bibliographystyle{apalike}
\bibliography{reference}

% One integrated document, not a concatenation of PDFs.
\clearpage
\onecolumn
\appendix
\setcounter{section}{0}
\setcounter{equation}{0}
\setcounter{algorithm}{0}
\setcounter{table}{0}
\makeatletter
\@addtoreset{equation}{section}
\@addtoreset{algorithm}{section}
\@addtoreset{table}{section}
\makeatother
\renewcommand{\theequation}{\thesection.\arabic{equation}}
\renewcommand{\thealgorithm}{\thesection.\arabic{algorithm}}
\renewcommand{\thetable}{\thesection.\arabic{table}}
\renewcommand{\theHsection}{app.\Alph{section}}
\renewcommand{\theHsubsection}{\theHsection.\arabic{subsection}}
\renewcommand{\theHsubsubsection}{\theHsubsection.\arabic{subsubsection}}
\renewcommand{\theHequation}{\theHsection.\arabic{equation}}
\renewcommand{\theHtable}{\theHsection.\arabic{table}}
\providecommand{\theHalgorithm}{}
\renewcommand{\theHalgorithm}{\theHsection.\arabic{algorithm}}
\phantomsection
\pdfbookmark[0]{Appendices}{appendices}
\begin{center}{\Large\bfseries Appendices}\end{center}

\section{Reversibility of the Joint Rotation and Cutpoint Update}\label{A:root}
\begingroup
\renewcommand{\abs}[1]{\left\lvert#1\right\rvert}
\subsection{Conditional target and proposal path}\label{A:section:1}
\label{A:sec:A-target}

Fix $p\ge2$, the training design, and positive integer budgets
$b_1,\ldots,b_p$. For one tree, condition on its ordered binary topology
$\cT$, split-label array $V$, partial residual vector $r^{(t)}$,
$\sigma,\tau>0$, and the complete states of the other trees.
Suppress the tree index. The joint update acts on $\mathsf s=(Q,C)$
in $\mathsf Z=\SO(p)\times\cC_{\max}$, where
$\cC_{\max}=\prod_{\eta\in\cI(\cT)}\{0,\ldots,b_{v_\eta}-1\}$.
Let $\lambda$ be normalized Haar measure times counting measure on this
space. With $\ell_H$ and the full tree prior $\pi_{\rm disc}$ as defined
\Mainlink{eq:hardtarget}{in the main paper}, the collapsed conditional target is
\begin{align*}
\widetilde\pi_H(\mathsf s)
 =e^{\ell_H(\mathsf s)}\pi_{\rm disc}(\cT,V,C\mid Q),
\qquad
\Pi_H(d\mathsf s)=Z^{-1}\widetilde\pi_H(\mathsf s)\lambda(d\mathsf s).
\end{align*}
Invalid arrays have density zero. As in the \Mainlink{prop:reversibility}{main proposition}, assume
$0<Z=\int\widetilde\pi_H\,d\lambda<\infty$.
Normalization is joint over $Q$ and $C$: the tree-prior factor is
not separately normalized over $C$ for each $Q$.

The grid $\Xi(Q)$ is the main paper's coordinatewise empirical-range grid,
using only the full training design. Its evaluation is exact, with
deterministic measurable tie rules as in the \Mainlink{prop:reversibility}{main proposition}. For a node $\eta$ and label $j$,
$\cA_\eta(j;Q,C_{\rm anc})$ is the set of grid indices strictly inside
the interval imposed by its same-label ancestors; it is empty for a
zero-range coordinate. There is no minimum leaf occupancy, and split
equality goes to the right child. Write
$a_\eta(\mathsf s)=|\cA_\eta(v_\eta;Q,C_{\rm anc})|$ and
$g_\eta(\mathsf s)=\#\{j:\cA_\eta(j;Q,C_{\rm anc})\ne\varnothing\}$.
The depth probabilities satisfy $0<\rho_\ell<1$; no asymptotic design or
grid condition is needed for this conditional update.

Let $\mathcal P=\{(a,b)\in[p]^2:a\ne b\}$ and
$K=K_{\rm rot}\ge1$. Conditional on the fixed $(\cT,V)$, draw
$P_1,\ldots,P_K$ independently from the pair law $q_P$ \Mainlink{eq:plane-law}{in the main paper}.
Independently draw $\theta_1,\ldots,\theta_K$ from its centered
$N(0,\gamma^2)$ law truncated to $[-\theta_{\max},\theta_{\max}]$,
where $\gamma>0$ and $0<\theta_{\max}<\pi$. Its density $q_\theta$ is even.
All tuning parameters and activation schedules are fixed in advance.
Write $H(P,\theta)=G_{P_K}(\theta_K)\cdots G_{P_1}(\theta_1)$, so
$Q'=H(P,\theta)Q$. The reversed path
$(P_K,-\theta_K),\ldots,(P_1,-\theta_1)$ reconstructs $Q$ because
$G_P(\theta)^{-1}=G_P(-\theta)$. The coordinates within each ordered pair
are not exchanged. Independence under the same fixed pair law and
evenness of $q_\theta$ make the forward and reversed path probabilities
equal. We keep the realized path as an auxiliary variable; an endpoint
proposal density with respect to Haar measure is not assumed.

\subsection{Sequential cutpoint proposal}\label{A:section:2}
\label{A:sec:A-cutpoints}

For the realized path, set
$\mathcal J_{\rm touch}=\bigcup_{k=1}^K\{a_k,b_k\}$ and
$\mathcal N_{\rm touch}=\{\eta\in\cI(\cT):v_\eta\in\mathcal J_{\rm touch}\}$.
Only those grid columns change under the coordinatewise grid rule.
Fix one parent-before-child order on $\cI(\cT)$ and $\rho_{\rm cut}>0$.
Under a departure state $\mathsf s^0=(Q^0,C^0)$, compute the observation
indices $\mathcal I_\eta^0$ reaching each node and its left-child count
$n_{\eta,L}^0$. Keep both fixed throughout the recursion.
Working ancestor indices determine the admissible target interval;
they do not change these reference observations.

At a touched node with
$0<n_{\eta,L}^0<|\mathcal I_\eta^0|$, let $z'_{\eta,(k)}$ be the
order statistics of $\{(Q'x_i)_{v_\eta}:i\in\mathcal I_\eta^0\}$.
For the current admissible interval
$\cA_\eta^F=\{\underline c_\eta,\ldots,\overline c_\eta\}$,
set $t_\eta^\star=(z'_{\eta,(n_{\eta,L}^0)}
+z'_{\eta,(n_{\eta,L}^0+1)})/2$ and choose
$c_\eta^\star=\min\arg\min_{c\in\cA_\eta^F}
|\xi_{v_\eta,c}(Q')-t_\eta^\star|$.
With $W_\eta=|\cA_\eta^F|$ and
$\kappa_\eta=\max\{1,\rho_{\rm cut}W_\eta\}$, use
\begin{equation}
q_\eta(c)=
\frac{\exp\{-(c-c_\eta^\star)^2/(2\kappa_\eta^2)\}}
 {\sum_{u\in\cA_\eta^F}\exp\{-(u-c_\eta^\star)^2/(2\kappa_\eta^2)\}}
\ind\{c\in\cA_\eta^F\}.
\label{A:eq:A-cut-law}
\end{equation}
This center need not preserve observation counts or partitions after
snapping to the grid or changing ancestors. At an untouched node, or
when all reference observations go to one child or the node is empty,
retain $c_\eta^0$ if admissible; otherwise return failure $\partial$.
An empty admissible set also gives $\partial$.

\begin{algorithm}[t]
\caption{Sequential cutpoint proposal or mass evaluation}
\label{A:alg:A-cutpoints}
\begin{algorithmic}[1]
\Require Departure state $(Q^0,C^0)$, target $Q'$ and $\Xi(Q')$,
$\mathcal N_{\rm touch}$; generation or evaluation mode, with requested $D$ in evaluation mode
\State Compute and freeze $\mathcal I_\eta^0,n_{\eta,L}^0$ under $(Q^0,C^0)$
\State Initialize $\bar C\gets C^0$ and $\log q\gets0$
\For{internal nodes $\eta\in\cI(\cT)$ in the fixed parent-before-child order}
  \State $\cA\gets\cA_\eta(v_\eta;Q',\bar C_{\rm anc})$
  \If{$\cA=\varnothing$}
    \State \Return $\partial$ in generation mode, or $(D,-\infty)$ in evaluation mode
  \EndIf
  \If{$\eta\notin\mathcal N_{\rm touch}$ or $n_{\eta,L}^0\in\{0,|\mathcal I_\eta^0|\}$}
    \If{$c_\eta^0\notin\cA$}
      \State \Return $\partial$ in generation mode, or $(D,-\infty)$ in evaluation mode
    \EndIf
    \State $h_\eta(c)\gets\ind\{c=c_\eta^0\}$
  \Else
    \State Compute $t_\eta^\star,c_\eta^\star,\kappa_\eta$ and set $h_\eta$ by \eqref{A:eq:A-cut-law}
  \EndIf
  \If{generation mode}
    \State Draw $d_\eta\sim h_\eta$
  \Else
    \State $d_\eta\gets D_\eta$
    \If{$h_\eta(d_\eta)=0$}
      \State \Return $(D,-\infty)$
    \EndIf
  \EndIf
  \State $\log q\gets\log q+\log h_\eta(d_\eta)$; commit $\bar c_\eta\gets d_\eta$
\EndFor
\State \Return $(\bar C,\log q)$
\end{algorithmic}
\end{algorithm}

\Needspace{10\baselineskip}
Algorithm~\ref{A:alg:A-cutpoints} defines a measurable law on
$\cC_{\max}\cup\{\partial\}$: paths, order statistics, and nearest-index
choices are measurable finite operations with deterministic ties.
Each local law is a normalized point mass or a finite distribution
with a positive normalizer. Sequential composition assigns each
completed array its product of local masses and assigns failed prefixes
to $\partial$. Thus completed-array and failure probabilities sum to one.
For a completed array $C'$, call its product probability $q_C^F$.
It is not conditioned on success. Failure leaves the entire $(Q,C)$
state unchanged, without resampling or committing a partial array.
For a valid departure state, an untouched index remains admissible:
its frame row and all same-label ancestor indices are unchanged.

To obtain $q_C^R$, run the evaluation mode with departure $(Q',C')$,
target rotation $Q$, and requested array $D=C$. Recompute and freeze
reference observations and left counts at $(Q',C')$; initialize the
working array at $C'$ and commit each requested old index before visiting
descendants. Recompute all reverse centers, scales, and normalizers.
A reverse deterministic hold assigns positive mass only when
$c_\eta=c'_\eta$. Reverse evaluation uses the same node order, not its
reversal. The reversed Givens path has the same touched set.

\subsection{Reversibility of the joint update}\label{A:section:3}
\label{A:sec:A-reversibility}

\begin{proof}[Proof of the reversibility proposition in the main paper]
Use $\mathsf U=\mathcal P^K\times[-\theta_{\max},\theta_{\max}]^K
\times\cC_{\max}$ for non-failure auxiliary outputs. Its reference
measure $\kappa$ is counting measure
on pairs and cutpoint arrays times Lebesgue measure on angles.
For $\mathsf s=(Q,C)$ and $u=(P_{1:K},\theta_{1:K},C')$, set
$\mathsf s'=(H(P,\theta)Q,C')$ and
$u^\dagger=(P_{K:1},-\theta_{K:1},C)$.
The map $\mathfrak J(\mathsf s,u)=(\mathsf s',u^\dagger)$ is a
measurable involution on the full product space, including tuples with
zero target or proposal density: applying the reversed rotations,
array exchange, and signed path reversal twice restores every coordinate.

For each fixed pair--angle path, $Q\mapsto H(P,\theta)Q$ preserves
Haar measure. Reversing the pair sequence and exchanging $C,C'$ preserve
counting measure, and the signed permutation of angles preserves
Lebesgue measure. Applying the Haar change of variables first for each
path and then these permutations shows by Tonelli's theorem that
$\int h\circ\mathfrak J\,d(\lambda\otimes\kappa)
=\int h\,d(\lambda\otimes\kappa)$ for every nonnegative measurable $h$.
This accounts for the dependence of $H$ on the angles. The grid is a
function of $Q$, not an extra coordinate of the reference measure.

The non-failure auxiliary density is
$q_{\mathsf s}(u)=\{\prod_{k=1}^Kq_P(P_k\mid\cT,V)q_\theta(\theta_k)\}q_C^F$.
Give inadmissible outputs density zero, and extend this density by zero
on zero-target departure states. Its integral is at most one; the missing
mass is failure probability, not a normalizing factor.
Define $F(\mathsf s,u)=\widetilde\pi_H(\mathsf s)q_{\mathsf s}(u)$.
Since $\mathfrak J$ is an involution,
$(F\circ\mathfrak J)(\mathsf s,u)
=\widetilde\pi_H(\mathsf s')q_{\mathsf s'}(u^\dagger)$.
At positive-density endpoints with positive forward and reverse mass,
the pair and angle factors cancel, giving
\begin{align*}
\log\frac{F\circ\mathfrak J}{F}
=\ell_H(\mathsf s')-\ell_H(\mathsf s)
 +\Delta_{\rm disc}+\log q_C^R-\log q_C^F.
\end{align*}
Only the cutpoint and terminal-node prior factors remain in
$\Delta_{\rm disc}$, because $(\cT,V)$ is fixed:
\begin{align*}
\Delta_{\rm disc}
=\sum_{\eta\in\cI(\cT)}\log\frac{a_\eta(\mathsf s)}{a_\eta(\mathsf s')}
 +\sum_{\ell\in\cL(\cT)}
 \log\frac{1-\rho_{\dep(\ell)}g_\ell(\mathsf s')/p}
              {1-\rho_{\dep(\ell)}g_\ell(\mathsf s)/p}.
\end{align*}
For $F>0$, let $\alpha=1\wedge(F\circ\mathfrak J)/F$; set
$\alpha=0$ when $F=0$. A zero proposed target or reverse mass therefore
rejects the proposal without evaluating an undefined log ratio.
This is the acceptance rule of the main paper, and
\begin{align*}
F\alpha=\min\{F,F\circ\mathfrak J\}
       =(F\circ\mathfrak J)(\alpha\circ\mathfrak J).
\end{align*}
For measurable $B_0,B_1\subseteq\mathsf Z$, the target-weighted accepted
transition mass is
\begin{align*}
\frac1Z\int
 \ind\{\mathsf s\in B_0\}\ind\{\mathsf s'\in B_1\}
 \min\{F,F\circ\mathfrak J\}\,
 \lambda(d\mathsf s)\kappa(du).
\end{align*}
The measure-preserving involution exchanges the indicators and fixes the
minimum, so this expression is symmetric in $B_0,B_1$.
The remaining probability is
$r(\mathsf s)=1-\int q_{\mathsf s}(u)\alpha(\mathsf s,u)\,\kappa(du)
\in[0,1]$. Rejections and failures thus contribute the symmetric
mass $\int_{B_0\cap B_1}r(\mathsf s)\Pi_H(d\mathsf s)$.
Together these prove detailed balance for the full joint-update kernel.
\end{proof}

The proof uses the exact grid and proposal laws. A subsequent numerical
projection of $Q'$ without accounting for it defines a different kernel.
The result establishes invariance, not irreducibility or a mixing-rate bound.

\subsection{Completing the tree update}\label{A:section:4}
\label{A:sec:A-completion}

For tree $t$, write $\zeta_t=(Q_t,C_t,\cT_t,V_t)$ and let $M_t$
be its leaf vector. Conditional on $r^{(t)}$, $\sigma,\tau$, and the other
trees, write the joint posterior as
$\Pi_t(d\zeta_t,dM_t)=\overline\Pi_t(d\zeta_t)\Pi_t(dM_t\mid\zeta_t)$.

\begin{Acorollary}[Invariance of the completed tree update]
\label{A:cor:A-completion}
Let $L_t$ be a finite composition of structural kernels preserving
$\overline\Pi_t$, with the above conditioning quantities fixed and without
using old leaf values. Following $L_t$ with a draw of all leaf means
from $\Pi_t(dM_t'\mid\zeta_t')$ preserves $\Pi_t$.
\end{Acorollary}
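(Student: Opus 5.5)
We must prove Corollary~\ref{A:cor:A-completion}: a structural kernel $L_t$ that preserves the marginal $\overline\Pi_t$, followed by a fresh draw of the leaf means from their conditional given the new structure, preserves the joint law $\Pi_t$. The plan is the standard marginal--conditional (collapsed Gibbs) argument. Throughout, the conditioning quantities ($r^{(t)}$, $\sigma$, $\tau$, the other trees) are held fixed.

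\textbf{Step 1: define the combined kernel.} Write the full kernel on $(\zeta_t,M_t)$ as
\[
K\bigl((\zeta_t,M_t),\,d\zeta_t'\,dM_t'\bigr)=L_t(\zeta_t,d\zeta_t')\,\Pi_t(dM_t'\mid\zeta_t').
\]
The key structural fact is that $K$ does not depend on the old leaf vector $M_t$, because $L_t$ uses no old leaf values by hypothesis.

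\textbf{Step 2: integrate out the old leaf means.} For measurable $A$, compute $\int\Pi_t(d\zeta_t,dM_t)\,K((\zeta_t,M_t),A)$. Since the integrand is independent of $M_t$, the integral over $M_t$ collapses through $\Pi_t(dM_t\mid\zeta_t)$, which has total mass one. This leaves
\[
\int\overline\Pi_t(d\zeta_t)\int L_t(\zeta_t,d\zeta_t')\int\ind_A(\zeta_t',M_t')\,\Pi_t(dM_t'\mid\zeta_t').
\]

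\textbf{Step 3: use invariance of $L_t$.} Apply Tonelli's theorem and the invariance $\overline\Pi_t L_t=\overline\Pi_t$. If $L_t$ is a finite composition $L_t^{(1)}\cdots L_t^{(J)}$, each factor preserves $\overline\Pi_t$, so the composition does too by induction on $J$. The expression then reduces to
\[
\int\overline\Pi_t(d\zeta_t')\,\Pi_t(dM_t'\mid\zeta_t')\,\ind_A=\Pi_t(A).
\]

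\textbf{Step 4: measurability and well-definedness.} The regular conditional $\Pi_t(dM_t\mid\zeta_t)$ is the explicit product of independent Gaussians over leaves with precision $1/\tau^2+b_B$ and mean $m_B/(1/\tau^2+b_B)$. It is therefore a kernel that depends measurably on $\zeta_t$ through the leaf assignments, which are measurable functions of $(Q_t,C_t,\cT_t,V_t)$ under the exact grid. The collapsed posterior $\overline\Pi_t$ has density proportional to $e^{\ell_H}\pi_{\rm disc}$ times the Haar prior. Its kernels include the joint rotation--cutpoint update, which preserves $\overline\Pi_t$ because the main proposition gives reversibility for each fixed $(\cT,V)$ and the factorization over $(\cT,V)$ is preserved. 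They also include the birth--death moves, which are assumed to preserve $\overline\Pi_t$.

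\textbf{Main obstacle.} The argument is routine; the only delicate point is Step~4, namely making sure $L_t$ genuinely does not depend on $M_t$. For example, the leaf means must be redrawn even after a rejection or failure, as the main text specifies. Otherwise the kernel would retain stale leaf values that are inconsistent with the new structure, and the collapse in Step~2 would fail.
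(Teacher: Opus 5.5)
Your argument is correct and is essentially the paper's proof: since the completed kernel ignores $M_t$, integrating $\Pi_t$ over the old leaf means leaves $\overline\Pi_t(d\zeta_t)$, the invariance $\overline\Pi_tL_t=\overline\Pi_t$ gives $\overline\Pi_t(d\zeta_t')$, and the fresh conditional draw reassembles $\Pi_t(d\zeta_t',dM_t')$. Step~4 and your closing remark are not needed, because independence from old leaf values is a hypothesis of the corollary; also, after a rejection the structure is unchanged, so retained leaf values would not be inconsistent with it, and refreshing anyway just puts the update in the form your Step~2 requires.
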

\begin{proof}
Since $\overline\Pi_tL_t=\overline\Pi_t$, the target-weighted completed
transition satisfies
\begin{align*}
\begin{aligned}
&\int\Pi_t(d\zeta_t,dM_t)L_t(\zeta_t,d\zeta_t')
                         \Pi_t(dM_t'\mid\zeta_t')\\
&\qquad=\int\overline\Pi_t(d\zeta_t)L_t(\zeta_t,d\zeta_t')
                         \Pi_t(dM_t'\mid\zeta_t')\\
&\qquad=\overline\Pi_t(d\zeta_t')\Pi_t(dM_t'\mid\zeta_t')
       =\Pi_t(d\zeta_t',dM_t').
\end{aligned}
\end{align*}
An independent draw from $\overline\Pi_t$ is not required.
\end{proof}

\Needspace{8\baselineskip}
The joint kernel leaves $(\cT,V)$ fixed and preserves their conditional
target, so it preserves $\overline\Pi_t$. Composing it with birth--death
kernels for the same collapsed target gives the $L_t$ used above.
Their actual proposal probabilities and the full discrete prior must enter their
acceptance ratios, including when a birth proposal samples only among
available labels. Refresh all leaf means after the final structural
state, even after rotation rejection or failure, before restoring the
tree contribution or using its leaf values in another block.
Completed tree kernels in systematic backfitting order, followed by valid
updates of the remaining model parameters, preserve the full posterior.
The sweep need not itself be reversible, and no finite-burn-in guarantee
follows from this invariance statement.

\subsection{Computational cost}\label{A:section:5}
\label{A:sec:A-cost}

We count arithmetic and comparison operations for one joint proposal,
including reverse probability evaluation, with elementary functions and
scalar random variates treated as unit-cost operations. The current
rotated training design is cached, its proposed version uses a full
scratch copy, unchanged grid columns are shared, and only touched frame
rows are buffered. Birth--death moves and the subsequent leaf refresh
are not included in this bound.

Let $D$ be the maximum leaf depth, $N$ the number of nodes,
$J=|\mathcal J_{\rm touch}|$, $I=|\mathcal N_{\rm touch}|$, and
$b=\max_jb_j$. For departure or proposed routing $a\in\{0,1\}$,
let $d_i^a$ be the depth reached by observation $i$. Then
\begin{align*}
\sum_{\eta\in\cI(\cT)}|\mathcal I_\eta^a|
  =\sum_{i=1}^n d_i^a\le nD.
\end{align*}
Following each observation's path collects the touched-node values in
both directions within $2nD$ incidences. Only two adjacent order statistics
are needed per nondegenerate node. Worst-case linear-time selection
therefore gives the costs in Table~\ref{A:tab:A-cost}; metadata and
point-mass checks add $O(N)$.

\begin{table}[!ht]
\centering
\caption{Operation bounds for one joint proposal under the stated storage convention.}
\label{A:tab:A-cost}
\begin{tabular}{@{}p{0.67\textwidth}p{0.27\textwidth}@{}}
\toprule
Operation & Cost\\
\midrule
Copy the rotated training design & $O(np)$\\
Apply the Givens path to design columns and frame rows & $O\{K_{\rm rot}(n+p)\}$\\
Recompute touched extrema and materialize their grid columns & $O\{J(n+b)\}$\\
Route observations, collect reference values, select order statistics,
and evaluate collapsed likelihoods & $O\{n(1+D)+N\}$\\
Enumerate local cutpoint normalizers and evaluate or draw indices & $O(Ib)$\\
Evaluate discrete-prior factors by inspecting labels and ancestors & $O\{pN(1+D)\}$\\
\bottomrule
\end{tabular}
\end{table}

Direct enumeration costs $O(W_\eta)\le O(b)$ per touched node and direction;
nearest-index search is included. With cached range flags and index bounds,
the prior calculation inspects at most $p$ labels and $D$ ancestors per node. Touched-row storage costs $O(Jp)$, absorbed
by $J\le2K_{\rm rot}$. Combining the terms gives
\begin{equation}
O\{np+K_{\rm rot}(n+p)+J(n+b)+n(1+D)+Ib+pN(1+D)\}.
\label{A:eq:A-total-cost}
\end{equation}
Sorting every reference set instead of linear-time selection replaces
the corresponding $n(1+D)$ term by the conservative bound
$n(1+D)\log(n+1)$. An average-linear selection routine gives only its
corresponding average guarantee. Early rejection can shorten an attempt;
failed attempts are not retried until success.

With all reference sets materialized, one tree's resident-plus-working
storage is $O\{np+p^2+\sum_jb_j+n(1+D)+N+K_{\rm rot}\}$ under these
conventions. For fixed $p,K_{\rm rot},D,b$, $N\le2^{D+1}-1$ makes
\eqref{A:eq:A-total-cost} an $O(n)$ bound. Here $D$ is the realized depth,
not a prior depth cap. This is a per-attempt bound, not a uniform
linear-time result for increasing budgets, depths, complete sweeps,
or a prescribed Monte Carlo accuracy.

\endgroup

\clearpage
\section{Posterior Contraction for \texorpdfstring{\NoCaseChange{RoBART}}{RoBART}}\label{B:root}
\begingroup
\renewcommand{\R}{\mathbb{R}}
\renewcommand{\E}{\mathbb{E}}
\renewcommand{\abs}[1]{\left\lvert#1\right\rvert}
\renewcommand{\inner}[2]{\left\langle #1, #2\right\rangle}
\subsection{Introduction}\label{B:section:1}

We prove the \Mainlink{thm:main}{posterior contraction theorem in the main paper} under the prior and design conditions below. Following the additive construction of \cite{JeongRockova2023}, we assign one approximating tree to each component, allowing separate rotations and rotation-dependent cutpoint grids, and take the remaining trees to be nearly zero. We verify local prior mass, sieve entropy, and prior tail bounds, then apply the Gaussian contraction criterion of \cite{Jeong2025}. Theorem~\ref{B:thm:main} restates the \Mainlink{thm:main}{main paper's result}; numbered references prefixed by B refer to this supplement.

\subsection{Model, prior, and assumptions}\label{B:section:2}

\subsubsection{Fixed-design regression model}\label{B:subsection:2.1}

We observe
\begin{equation}
Y_{i,n} = f_0(x_{i,n}) + \varepsilon_{i,n},\qquad \varepsilon_{i,n} \stackrel{\mathrm{iid}}{\sim} N(0,\sigma_0^2),\qquad i=1,\dots,n,
\label{B:eq:model}
\end{equation}
with fixed $\sigma_0>0$ and fixed design points $x_{i,n}\in\R^p$.  The ambient dimension $p\ge2$ is fixed.  For a measurable function $g$ on the design space, define the empirical norm
\[
\norm{g}_n^2 = \frac{1}{n}\sum_{i=1}^n |g(x_{i,n})|^2.
\]
We consider a sequence of fixed designs contained in one bounded set: there exists $M_X<\infty$ such that
\begin{equation}
\sup_{n\ge1}\max_{1\le i\le n} \norm{x_{i,n}}_2 \le M_X.
\label{B:eq:bounded_design}
\end{equation}
Take a compact design set $\mathcal X$ contained in the Euclidean ball of radius $M_X$ and containing every $x_{i,n}$. Write $Y^{(n)}=(Y_{1,n},\ldots,Y_{n,n})^\top$. Let $P_{f,\sigma}^{(n)}=\bigotimes_{i=1}^n N\{f(x_{i,n}),\sigma^2\}$, set $P_0^{(n)}=P_{f_0,\sigma_0}^{(n)}$, and denote expectation under $P_0^{(n)}$ by $\E_0$.

Write $\mathbb P_n$ for the prior on the forest parameters and the noise scale, and $\Pi_n$ for its induced prior on $(f,\sigma)$. The corresponding posterior is $\Pi_n(\cdot\mid Y^{(n)})$. The prior induced on $(\mu_f,\sigma)$, where $\mu_f=(f(x_{1,n}),\ldots,f(x_{n,n}))^\top$, is denoted by $\overline\Pi_n$.

Define $O(p)=\{Q:Q^\top Q=I_p\}$ and $\SO(p)=\{Q\in O(p):\det Q=1\}$. Every $Q\in\SO(p)$ preserves Euclidean norm, so $Qx_{i,n}\in\DX=[-M_X,M_X]^p$.

\subsubsection{Additive rotated truth and smoothness classes}\label{B:subsection:2.2}

Fix integers $1\le R\le T$, both independent of $n$. On $\mathcal X$, the true regression function is assumed to satisfy
\begin{equation}
f_0(x)=\sum_{r=1}^R h_{0r}(\pi_{S_r}(Q_{0r}x)),
\qquad Q_{0r}\in \SO(p),
\qquad S_r\subset\{1,\dots,p\},
\qquad |S_r|=d_r,
\label{B:eq:truth}
\end{equation}
For each component let $1\le d_r\le p$, write $S_r=\{s_{r1}<\cdots<s_{rd_r}\}$, and define $\pi_{S_r}(u)=(u_{s_{r1}},\ldots,u_{s_{rd_r}})^\top$. The set $S_r$ indexes coordinates in the $r$th rotated system, not necessarily in the original system.

For any orthogonal matrix $Q\in \SO(p)$, we use the composition notation
\[
h_{0r}(\pi_{S_r}(Q\cdot)): x\mapsto h_{0r}(\pi_{S_r}(Qx)).
\]
Thus $h_{0r}(\pi_{S_r}(Q_{0r}\cdot))$ is the $r$th true component, while $h_{0r}(\pi_{S_r}(Q\cdot))$ for $Q\ne Q_{0r}$ means that the same canonical anisotropic template is evaluated after a different rotation.

\Needspace{15\baselineskip}
\begin{Bdefinition}[Componentwise sparse anisotropic H\"older classes]
\label{B:def:holder}
For $r=1,\ldots,R$, let $\alpha_r=(\alpha_{r1},\ldots,\alpha_{rd_r})$ with $\alpha_{rk}\in(0,1]$ and let $\lambda_r>0$.  We write
\[
h_{0r}\in \mathcal H^{\alpha_r,d_r}_{\lambda_r}
\]
if
\[
|h_{0r}(u)-h_{0r}(v)|\le \lambda_r\sum_{k=1}^{d_r}|u_k-v_k|^{\alpha_{rk}}
\]
for all $u,v\in[-M_X,M_X]^{d_r}$. Each $h_{0r}$ is defined on this box and satisfies $\norm{h_{0r}}_\infty\le C_h$, with the supremum taken over the entire box.  The harmonic-mean and minimal smoothness parameters are
\begin{align*}
\bar\alpha_r^{-1}=\frac1{d_r}\sum_{k=1}^{d_r}\alpha_{rk}^{-1},
\qquad
\alpha_{r,\min}=\min_{1\le k\le d_r}\alpha_{rk}.
\end{align*}
The exponent $\alpha_{rk}$ corresponds to ambient row $s_{rk}$. All truth parameters are fixed as $n\to\infty$. Constants may depend on these quantities, $p,R,T$, and fixed prior hyperparameters, but not on $n$ or a rotation in a stated neighborhood.
\end{Bdefinition}

\subsubsection{RoBART prior with uniform coordinate selection}\label{B:subsection:2.3}

The prior is a tree-specific rotation forest.  For $t=1,\dots,T$,
\begin{align*}
f(x)=\sum_{t=1}^T g_t(Q_tx),\qquad Q_t\in \SO(p).
\end{align*}
For tree $t$, let $\cT_t$ be its rooted ordered binary topology, $V_t$ its split-label array, $C_t$ its array of zero-based grid indices, and $M_t=(\mu_{t\ell})_{\ell\in\mathcal L(\cT_t)}$ its leaf-mean vector. Write $K_t=|\mathcal L(\cT_t)|$. As in the main paper, $g_t(Q_tx)$ abbreviates $g_t(Q_tx;\cT_t,V_t,C_t,M_t)$. The complete discrete state is $(\cT_t,V_t,C_t)$, not the topology alone. The number of trees $T$ is fixed.

We impose the following prior structure.
\begin{enumerate}[label=(P\arabic*),leftmargin=2.8em]
\item The rotations $Q_1,\ldots,Q_T$ are independent draws from
normalized Haar measure on $\SO(p)$.

\item At each split attempt, a coordinate is drawn uniformly from
$\{1,\ldots,p\}$, so every coordinate has selection probability $1/p$.
These draws use fresh randomness independently across nodes and trees
and independently of the rotations and split-attempt draws.

\item Conditional on $Q_1,\ldots,Q_T$, the discrete trees
$(\cT_t,V_t,C_t)$, $t=1,\ldots,T$, are generated independently.
At each node of depth $\ell$, a split is attempted with probability
$\nu^{\ell+1}$, where $\nu\in(0,1)$, using independent Bernoulli draws.
If no split is attempted, the node is terminal. Otherwise, a coordinate
is drawn according to \textnormal{(P2)}. If that coordinate has no
admissible cutpoint in the current box, the node is terminal; otherwise,
its cutpoint is drawn uniformly from the admissible elements of the
corresponding rotated split-net, using a fresh draw conditional on the
current box.

\item Conditional on
$Q_1,\ldots,Q_T,(\cT_t,V_t,C_t)_{t=1}^T$,
all leaf means are mutually independent $N(0,\tau^2)$ variables,
where $0<\tau^2<\infty$ is fixed.

\item Independently of the forest, $\sigma^2\sim\operatorname{IG}(a_\sigma,b_\sigma)$
with fixed $a_\sigma,b_\sigma>0$ and density proportional to
$(\sigma^2)^{-a_\sigma-1}\exp\{-b_\sigma/\sigma^2\}$.
\end{enumerate}
For a fixed valid state $(\cT_t,V_t,C_t)$ and rotation $Q_t$, let $a_\eta$ be the number of admissible indices for the prescribed label at internal node $\eta$, and let $g_\ell$ be the number of labels with an admissible index at terminal node $\ell$. These counts depend on $Q_t$ and the ancestor indices. Writing $\operatorname{dep}$ for node depth, the recursion gives $\pi_{\rm disc}(\cT_t,V_t,C_t\mid Q_t)=\prod_{\eta\in\mathcal I(\cT_t)}\{\nu^{\operatorname{dep}(\eta)+1}/(p a_\eta)\}\allowbreak\prod_{\ell\in\mathcal L(\cT_t)}\{1-\nu^{\operatorname{dep}(\ell)+1}g_\ell/p\}$. Invalid states have mass zero. Each terminal factor is at least $1-\nu$. The factor $1/p$ is used along the generating path, before conditioning on any subsequent descendant generation.

\subsubsection{Contraction rate and approximation scale}\label{B:subsection:2.4}

For $r=1,\ldots,R$, define
\begin{equation}
\varepsilon_{n,r}
=
(\lambda_r d_r)^{d_r/(2\bar\alpha_r+d_r)}
\left(\frac{\log n}{n}\right)^{\bar\alpha_r/(2\bar\alpha_r+d_r)},
\qquad
\epsn=\sum_{r=1}^R\varepsilon_{n,r}.
\label{B:eq:epsn}
\end{equation}
For each component choose an integer $L_{n,r}$ and a leaf count $K_{n,r}=2^{L_{n,r}}$ such that
\begin{equation}
K_{n,r}\asymp
\left(\frac{n(\lambda_rd_r)^2}{\log n}\right)^{d_r/(2\bar\alpha_r+d_r)}.
\label{B:eq:Kb}
\end{equation}
Then
\begin{equation}
K_{n,r}\log n\asymp n\varepsilon_{n,r}^2,
\qquad
\sum_{r=1}^RK_{n,r}\log n\lesssim n\epsn^2.
\label{B:eq:balance}
\end{equation}
We also define the shrinking active-row rotation radii
\begin{equation}
\delta_{n,r}:=c_{\delta,r}
\left(\frac{\varepsilon_{n,r}}{\lambda_rd_r}\right)^{1/\alpha_{r,\min}},
\label{B:eq:deltan}
\end{equation}
where $c_{\delta,r}>0$ is sufficiently small.  Since all component parameters are fixed,
\begin{align*}
\log(1/\epsn)\lesssim\log n,
\qquad
\log(1/\delta_{n,r})\lesssim\log n.
\end{align*}
If
\[
a_r=\frac{\bar\alpha_r}{2\bar\alpha_r+d_r},
\qquad
a_\star=\min_{1\le r\le R}a_r\in(0,1/2),
\]
then
\begin{equation}
\epsn\asymp\left(\frac{\log n}{n}\right)^{a_\star},
\qquad
\frac{n\epsn^2}{\log n}\to\infty.
\label{B:eq:rate_arithmetic}
\end{equation}
Moreover, for every fixed $\kappa_K\in(0,1-2a_\star)$,
\begin{equation}
\frac{n\epsn^2}{\log n}\ge n^{\kappa_K}
\label{B:eq:tree_growth}
\end{equation}
for all sufficiently large $n$.  This polynomial growth will be used in the sieve-tail bound.

\subsubsection{Rotation-adaptive split-nets and additive anisotropic k-d trees}\label{B:subsection:2.5}

\paragraph{Empirical intervals and candidate cutpoints.}
For each $n$, fix a positive integer budget $b_n$. At every $Q\in\SO(p)$, the prior uses the full empirical box of $Qx_{1,n},\ldots,Qx_{n,n}$ and the $b_n$ equally spaced interior cutpoints on each coordinate of positive range. A zero-range coordinate has no candidate cutpoints. The prior selects among all $p$ coordinates, and imposes no leaf-occupancy restriction. Only the approximating tree for component $r$ is restricted to $S_r$.

For component $r$, define
\[
\mathcal Q_{n,r}
=
\left\{Q\in\SO(p):\norm{Q_{S_r,\cdot}-Q_{0r,S_r,\cdot}}_F\le\delta_{n,r}\right\}.
\]
For any $Q\in\SO(p)$, write $u_i(Q)=Qx_{i,n}$ and, for $j\in S_r$, set
\[
m_{rj}(Q):=\min_{1\le i\le n}u_{ij}(Q),
\qquad
M_{rj}(Q):=\max_{1\le i\le n}u_{ij}(Q),
\qquad
R_{n,rj}(Q):=M_{rj}(Q)-m_{rj}(Q).
\]
The active empirical box is
\[
D_{Q,r}:=\prod_{j\in S_r}[m_{rj}(Q),M_{rj}(Q)].
\]
For $R_{n,rj}(Q)>0$, define the coordinatewise candidate set directly by
\[
[Z_{n,r}(Q)]_j
=
\left\{m_{rj}(Q)+\frac{k}{b_n+1}R_{n,rj}(Q):k=1,\ldots,b_n\right\}.
\]
If $R_{n,rj}(Q)=0$, set $[Z_{n,r}(Q)]_j=\varnothing$.
The family $Z_{n,r}(Q)=([Z_{n,r}(Q)]_j:j\in S_r)$ is the rotation-adaptive uniform split-net. Its entries are the corresponding candidate sets of the full prior grid; they depend only on $Q$ and the design.

\paragraph{Admissible splits and tree partitions.}
For a box $D=\prod_{j\in S}I_j$ indexed by an ordered coordinate set $S$, write $[D]_j=I_j$ and let $\operatorname{len}([D]_j)$ be its length. A split-net on $D$ is a family $Z=([Z]_j:j\in S)$ of finite candidate sets $[Z]_j\subset[D]_j$. For a sub-box $\Omega\subset D$, the admissible cutpoints along $j$ are
\[
\mathcal C_j(\Omega;Z):=[Z]_j\cap\operatorname{int}([\Omega]_j).
\]
For $\tau\in\mathcal C_j(\Omega;Z)$, split $\Omega$ into
\[
\Omega^- = \Omega\cap\{u\in D:u_j<\tau\},
\qquad
\Omega^+ = \Omega\cap\{u\in D:u_j\ge\tau\}.
\]
We use this left/right convention throughout, including for design points on a boundary. A $Z$-tree partition $\mathcal T=\{\Omega_1,\ldots,\Omega_K\}$ is obtained from $D$ by finitely many such binary splits. Its step-function class is
\[
\mathcal F_{\mathcal T}
=
\left\{u\mapsto\sum_{k=1}^K\beta_k\1(u\in\Omega_k):\beta_1,\ldots,\beta_K\in\R\right\}.
\]
For a bounded $g:D\to\R$, write $\norm{g}_{\infty,D}=\sup_{u\in D}|g(u)|$.

\paragraph{Midpoint split and anisotropic schedule.}
If $\mathcal C_j(\Omega;Z)=\{\tau_{(1)}<\cdots<\tau_{(m)}\}$ with $m\ge1$, its midpoint split uses $\tau_{(\lceil m/2\rceil)}$. If $m=0$, the split is unavailable.

\Needspace{10\baselineskip}
Let $S=\{s_1<\cdots<s_d\}$ and let $\alpha=(\alpha_1,\ldots,\alpha_d)$ with $\alpha_k\in(0,1]$, where $\alpha_k$ corresponds to $s_k$. For $L\in\mathbb N$, define $A_{kd,S}(D;Z,\alpha,L)$ as follows.
\begin{enumerate}[label=(\roman*),leftmargin=2.8em]
\item Start from $\{D\}$ and initialize $l_k=0$ for $k=1,\ldots,d$.
\item Choose $k^\star\in\arg\min_{1\le k\le d}\alpha_kl_k$, breaking ties by the smallest $k$.
\item If a midpoint split along coordinate $s_{k^\star}$ is unavailable in at least one current leaf, stop.
\item Otherwise, split every current leaf along $s_{k^\star}$ and set $l_{k^\star}\leftarrow l_{k^\star}+1$.
\item Repeat until $\sum_{k=1}^d l_k=L$ or a required split is unavailable.
\end{enumerate}
The output is a $Z$-tree partition with split counts $(l_1,\ldots,l_d)$ and the recorded split labels and global cutpoint ranks. Reaching depth $L$ gives $2^L$ leaves. Since $S$ is ordered, the tie rule is also the smallest-coordinate-label rule.

\paragraph{Discrete template and numerical partition.}
Apply the preceding schedule with $S_r,\alpha_r,L_{n,r}$ to the integer grid whose root interval has endpoints $0$ and $b_n+1$ on every active coordinate. Record the ordered topology, split labels, and global grid ranks; a global rank $k\in\{1,\ldots,b_n\}$ is stored as the main paper's index $c=k-1$. Denote this complete discrete state by $\mathfrak t_{n,r}$. It is determined without choosing a rotation. Whenever its prescribed splits are available under $Q$, let $\mathcal P_{n,r}(Q)$ be the numerical partition of $D_{Q,r}$ obtained by evaluating these indices on $Z_{n,r}(Q)$. Proposition~\ref{B:prop:dynamic_uniform_sparse} shows that the construction completes all $L_{n,r}$ rounds and is valid throughout $\mathcal Q_{n,r}$ for all sufficiently large $n$. Leaving coordinates outside $S_r$ unsplit gives the corresponding $p$-dimensional tree. Values of $h_{0r}$ use the ordered coordinates $s_{r1},\ldots,s_{rd_r}$.

\begin{Bassumption}[Additive rotation-adaptive uniform split-net regime]
\label{B:ass:dynamic_uniform}
The active split-net $Z_{n,r}(Q)$ is constructed as above for every $r\le R$ and every $Q\in\mathcal Q_{n,r}$.  We assume the following.
\begin{enumerate}[label=(D\arabic*),leftmargin=2.8em]
\item \textbf{Active-coordinate design nondegeneracy.}  There exist constants $r_{\min}>0$ and $n_0\in\mathbb N$ such that, for all $n\ge n_0$,
\[
\min_{1\le r\le R}\min_{j\in S_r}R_{n,rj}(Q_{0r})\ge r_{\min}.
\]
\item \textbf{Grid resolution.}  The sequence $b_n$ satisfies
\[
\max_{1\le r\le R}2^{L_{n,r}}\le b_n+1,
\qquad
\log b_n\le C_b\log n
\]
for some constant $C_b>0$ and all sufficiently large $n$.
\end{enumerate}
\end{Bassumption}

\begin{Bremark}[Active-coordinate ranges]
\label{B:rem:range}
Condition \textnormal{(D1)} keeps the empirical grid intervals in the truth-active directions nondegenerate under local rotations, as quantified in Proposition~\ref{B:prop:dynamic_uniform_sparse}(i). A sufficient condition is that, for all large $n$, the convex hull of the design contains a Euclidean ball of fixed radius $c>0$: every unit-direction range is then at least $2c$. This is stronger than \textnormal{(D1)}, which restricts only the finitely many truth-active directions. In particular, \textnormal{(D1)} does not require a full-rank empirical covariance, a design density, or a positive number of observations in every leaf; it may hold even for designs supported on a lower-dimensional affine subspace.
\end{Bremark}

\begin{Bremark}[Grid resolution]
\label{B:rem:grid}
The lower bound in \textnormal{(D2)} ensures that every prescribed midpoint split is available; the upper bound controls the cutpoint contribution to the negative log prior mass. Let $\kappa_\star=\max_{1\le r\le R}d_r/(2\bar\alpha_r+d_r)<1$. By \eqref{B:eq:Kb}, $b_n=\lceil n^\kappa\rceil$ with $\kappa>\kappa_\star$ satisfies \textnormal{(D2)}, so $b_n=n$ works without knowing the component smoothness. The resolution requirement is conservative: $b_n+1\ge\max_{r,k}2^{l_{rk}}$ suffices for the prescribed full-depth split counts, whereas \textnormal{(D2)} uses $2^{L_{n,r}}$. We retain the stated condition for simplicity. It requires an increasing cutpoint budget and does not cover a fixed finite grid.
\end{Bremark}

\Needspace{10\baselineskip}
\begin{Bproposition}[Properties of the additive rotation-adaptive split-nets]
\label{B:prop:dynamic_uniform_sparse}
Under Assumption~\ref{B:ass:dynamic_uniform}, for all sufficiently large $n$, every $r\le R$, and every $Q\in\mathcal Q_{n,r}$, the active split-net $Z_{n,r}(Q)$ satisfies the following properties.
\begin{enumerate}[label=(\roman*),leftmargin=2.8em]
\item \textbf{Local range stability and positivity.}  For every $j\in S_r$,
\[
|m_{rj}(Q)-m_{rj}(Q_{0r})|
\le M_X\norm{Q_{S_r,\cdot}-Q_{0r,S_r,\cdot}}_F,
\]
\[
|M_{rj}(Q)-M_{rj}(Q_{0r})|
\le M_X\norm{Q_{S_r,\cdot}-Q_{0r,S_r,\cdot}}_F,
\]
and
\[
|R_{n,rj}(Q)-R_{n,rj}(Q_{0r})|
\le2M_X\norm{Q_{S_r,\cdot}-Q_{0r,S_r,\cdot}}_F.
\]
In particular, $R_{n,rj}(Q)\ge R_{n,rj}(Q_{0r})/2>0$ for all sufficiently large $n$.
\item \textbf{Cutpoint count and side-length control.} Each active coordinate has $b_n$ candidate cutpoints, with $\log b_n\le C_b\log n$.
Moreover, $\mathfrak t_{n,r}$ is valid under $Q$ and reaches full depth $L_{n,r}$. If $(l_{rk}:1\le k\le d_r)$ denotes its split counts, then every leaf $\Omega\in\mathcal P_{n,r}(Q)$ satisfies
\[
\operatorname{len}([\Omega]_{s_{rk}})\le2R_{n,r,s_{rk}}(Q)2^{-l_{rk}},
\qquad k=1,\ldots,d_r.
\]
\item \textbf{Uniform sparse anisotropic approximation on $D_{Q,r}$.}  There exists a step function $\widehat h_{r,Q}\in\mathcal F_{\mathcal P_{n,r}(Q)}$ such that
\[
\norm{h_{0r}-\widehat h_{r,Q}}_{\infty,D_{Q,r}}
\le C_A\lambda_rd_r2^{-L_{n,r}\bar\alpha_r/d_r}
\lesssim\varepsilon_{n,r}.
\]
\item \textbf{Fixed template and coefficient centers.} The state $\mathfrak t_{n,r}$ is the same for all $Q\in\mathcal Q_{n,r}$; its numerical thresholds and partition may vary. Choose $\widehat h_{r,Q}$ by assigning each leaf the value of $h_{0r}$ at the midpoint of its closure, and write $\beta_{n,r}(Q)$ for these values in left-to-right leaf order. The map $Q\mapsto\beta_{n,r}(Q)$ is continuous on $\mathcal Q_{n,r}$, hence Borel measurable, and satisfies $\norm{\beta_{n,r}(Q)}_\infty\le C_h$ and $\norm{\beta_{n,r}(Q)}_2^2\le C_h^2 K_{n,r}$.
\end{enumerate}
\end{Bproposition}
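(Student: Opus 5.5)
We prove the four parts in order: range stability, then availability of the template splits and side lengths, then approximation, then the coefficient properties. Parts (ii)–(iv) all rest on (i).

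\textbf{Part (i).} For $j\in S_r$, Cauchy–Schwarz and $\norm{x_{i,n}}_2\le M_X$ give
\[
|u_{ij}(Q)-u_{ij}(Q_{0r})|=|(Q-Q_{0r})_{j,\cdot}x_{i,n}|\le M_X\norm{Q_{S_r,\cdot}-Q_{0r,S_r,\cdot}}_F .
\]
The minimum and maximum over $i$ are $1$-Lipschitz in the sup norm of the vector $(u_{ij})_i$, so the bounds for $m_{rj}$ and $M_{rj}$ follow at once. The range bound follows from the triangle inequality. On $\mathcal Q_{n,r}$ we have $2M_X\delta_{n,r}\to0$, because $\varepsilon_{n,r}\to0$ and the exponent $1/\alpha_{r,\min}$ is fixed. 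Hence, by (D1), $2M_X\delta_{n,r}\le r_{\min}/2\le R_{n,rj}(Q_{0r})/2$ for large $n$, which gives positivity.

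\textbf{Part (ii).} By (i), every active coordinate has positive range, so it carries exactly $b_n$ candidates. The key observation is that the template $\mathfrak t_{n,r}$ is built on the integer index grid $\{1,\ldots,b_n\}$ with root endpoints $0,b_n+1$. Because the numerical grid on coordinate $j$ is the affine image $k\mapsto m_{rj}(Q)+kR_{n,rj}(Q)/(b_n+1)$, which is strictly increasing when $R_{n,rj}(Q)>0$, admissibility of an index (strictly inside the ancestor interval) is the same condition for every $Q$ with positive active ranges. So it suffices to show the discrete schedule reaches depth $L_{n,r}$. We would argue by induction that after $l$ midpoint splits along a coordinate, every index interval has integer length at most $\lceil (b_n+1)2^{-l}\rceil$ and at least $\lfloor (b_n+1)2^{-l}\rfloor$. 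The midpoint of an integer interval of length $w\ge2$ halves it up to rounding, and a split is available whenever $w\ge2$. Since $2^{l_{rk}}\le 2^{L_{n,r}}\le b_n+1$ by (D2), every required split is available. The side-length bound then follows by multiplying an index length of at most $2(b_n+1)2^{-l_{rk}}$ by the spacing $R_{n,rj}(Q)/(b_n+1)$. Careful bookkeeping of the ceiling/floor rounding in this induction is the main obstacle; the factor $2$ in the statement is there to absorb it.

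\textbf{Part (iii).} Assign each leaf the midpoint value $h_{0r}(\text{mid})$. For $u$ in a leaf $\Omega$, the Hölder condition gives
\[
|h_{0r}(u)-h_{0r}(\text{mid})|\le\lambda_r\sum_k\bigl(\operatorname{len}([\Omega]_{s_{rk}})\bigr)^{\alpha_{rk}}\le\lambda_r\sum_k(2\cdot2M_X)^{\alpha_{rk}}2^{-\alpha_{rk}l_{rk}}.
\]
Here $R_{n,rj}(Q)\le 2M_X$. The greedy rule, which picks the $\arg\min_k\alpha_kl_k$, keeps all the products $\alpha_{rk}l_{rk}$ within $1$ of a common value $\ell^\ast$. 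Combined with $\sum_k l_{rk}=L_{n,r}$, this gives $\ell^\ast\ge L_{n,r}/\sum_k\alpha_{rk}^{-1}-1=L_{n,r}\bar\alpha_r/d_r-1$. Each term in the sum is therefore at most a constant times $2^{-L_{n,r}\bar\alpha_r/d_r}$, and the sum is at most $C_A\lambda_rd_r2^{-L_{n,r}\bar\alpha_r/d_r}$. Finally, \eqref{B:eq:Kb} gives $K_{n,r}^{-\bar\alpha_r/d_r}\asymp\varepsilon_{n,r}/(\lambda_rd_r)$, so $\lambda_rd_r2^{-L_{n,r}\bar\alpha_r/d_r}\lesssim\varepsilon_{n,r}$.

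\textbf{Part (iv).} The template $\mathfrak t_{n,r}$ does not depend on $Q$, by the argument in (ii). Each leaf midpoint is an affine function of $m_{rj}(Q)$ and $R_{n,rj}(Q)$, and these are continuous in $Q$ (the extrema are minima and maxima of finitely many continuous functions). Since $h_{0r}$ is continuous, $Q\mapsto\beta_{n,r}(Q)$ is continuous, hence Borel. The midpoints lie in $D_{Q,r}\subset[-M_X,M_X]^{d_r}$, so $\norm{\beta_{n,r}(Q)}_\infty\le C_h$. There are $2^{L_{n,r}}=K_{n,r}$ leaves, which gives the $\ell_2$ bound.
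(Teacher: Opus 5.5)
Your proposal is correct and follows essentially the same argument as the paper. The shared ingredients are Lipschitz continuity of the empirical extrema together with $\delta_{n,r}\to0$ and (D1) for (i), and the floor/ceiling cell-count recursion with $2^{L_{n,r}}\le b_n+1$ for availability and side lengths in (ii). Part (iii) uses the greedy-schedule balance $\max_k\alpha_{rk}l_{rk}-\min_k\alpha_{rk}l_{rk}\le1$, and part (iv) uses $Q$-independence of the index recursion, with leaf endpoints affine in $(m_{rj},R_{n,rj})$. The one step you leave as ``halving up to rounding'' is what the paper states explicitly: a midpoint split of an index interval of length $w\ge2$ produces children of lengths exactly $\lfloor w/2\rfloor$ and $\lceil w/2\rceil$. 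Your induction then closes via $\lceil\lceil x\rceil/2\rceil=\lceil x/2\rceil$ and the analogous floor identity.
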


\begin{proof}
Fix $r$ and $Q\in\mathcal Q_{n,r}$.

For (i), fix $j\in S_r$.  Since
\[
m_{rj}(Q)=\min_{1\le i\le n}e_j^\top Qx_{i,n},
\]
one has
\[
|m_{rj}(Q)-m_{rj}(Q_{0r})|
\le\max_{1\le i\le n}|e_j^\top(Q-Q_{0r})x_{i,n}|
\le M_X\norm{Q_{S_r,\cdot}-Q_{0r,S_r,\cdot}}_F.
\]
The same argument applies to $M_{rj}(Q)$, and the bound for $R_{n,rj}(Q)$ follows by the triangle inequality.  Since $\delta_{n,r}\to0$, condition \textnormal{(D1)} gives positivity uniformly over $Q\in\mathcal Q_{n,r}$ for sufficiently large $n$.

For (ii), part (i) makes the $b_n$ equally spaced candidates distinct on each active coordinate. Fix $k\in\{1,\ldots,d_r\}$ and put $j=s_{rk}$. The interval $[m_{rj}(Q),M_{rj}(Q)]$ is partitioned into $b_n+1$ elementary cells, each of length
\[
\Delta_{rj}(Q)=\frac{R_{n,rj}(Q)}{b_n+1}.
\]
Every subinterval generated by recursive midpoint splits is a union of consecutive elementary cells. An interval containing $N\ge2$ cells has $N-1$ admissible cutpoints; its midpoint split has rank $\lceil(N-1)/2\rceil=\lfloor N/2\rfloor$ and gives children with $\lfloor N/2\rfloor$ and $\lceil N/2\rceil$ cells. Let $N_j(\Omega)$ be the number of elementary cells in the $j$th coordinate interval of a leaf $\Omega$. Iterating these floor and ceiling bounds, after $l_{rk}$ midpoint splits along coordinate $j$,
\[
\left\lfloor\frac{b_n+1}{2^{l_{rk}}}\right\rfloor\le N_j(\Omega)\le\left\lceil\frac{b_n+1}{2^{l_{rk}}}\right\rceil.
\]
Before the $a$th split along coordinate $j$, every current interval contains at least
\[
\left\lfloor\frac{b_n+1}{2^{a-1}}\right\rfloor
\]
elementary cells.  Since $b_n+1\ge2^{L_{n,r}}$, this quantity is at least two for every $a\le L_{n,r}$.  Therefore every requested midpoint split is available and the algorithm reaches full depth $L_{n,r}$.  Finally,
\[
\operatorname{len}([\Omega]_j)
=N_j(\Omega)\Delta_{rj}(Q)
\le2R_{n,rj}(Q)2^{-l_{rk}},
\]
because $b_n+1\ge2^{L_{n,r}}\ge2^{l_{rk}}$.

For (iii), for each leaf $\Omega\in\mathcal P_{n,r}(Q)$ let $u_\Omega$ be the coordinatewise midpoint of its closure and define
\[
\widehat h_{r,Q}(u)
=
\sum_{\Omega\in\mathcal P_{n,r}(Q)}h_{0r}(u_\Omega)\1(u\in\Omega).
\]
The split counts satisfy
\[
\max_{1\le k\le d_r}\alpha_{rk}l_{rk}
-
\min_{1\le k\le d_r}\alpha_{rk}l_{rk}
\le1.
\]
Indeed, the claim holds at initialization, and each update increments a coordinate minimizing $\alpha_{rk}l_{rk}$ by at most one because $\alpha_{rk}\le1$.  If $m_r=\min_{1\le k\le d_r}\alpha_{rk}l_{rk}$, then
\[
L_{n,r}
=
\sum_{k=1}^{d_r}l_{rk}
\le(m_r+1)\sum_{k=1}^{d_r}\alpha_{rk}^{-1}
=(m_r+1)\frac{d_r}{\bar\alpha_r},
\]
so $m_r\ge L_{n,r}\bar\alpha_r/d_r-1$.  H\"older continuity and part (ii) now give
\[
|h_{0r}(u)-\widehat h_{r,Q}(u)|
\le C(M_X)\lambda_r\sum_{k=1}^{d_r}2^{-\alpha_{rk}l_{rk}}
\lesssim\lambda_rd_r2^{-L_{n,r}\bar\alpha_r/d_r}.
\]
Taking the supremum and using \eqref{B:eq:Kb} proves the claim.

For (iv), the selected coordinate at each round depends only on the scaled split counts and their deterministic tie rule. An index interval with integer endpoints $a<b$ has candidates $a+1,\ldots,b-1$; when it is split, its median rank is $a+\lfloor(b-a)/2\rfloor$. This recursion does not depend on $Q$, and part (ii) ensures that every required split exists. Thus the ordered topology, labels, and global indices in $\mathfrak t_{n,r}$ are fixed. Each numerical leaf endpoint is $m_{rj}(Q)+kR_{n,rj}(Q)/(b_n+1)$ for a fixed integer $k\in\{0,\ldots,b_n+1\}$. Finite empirical minima and maxima are continuous in $Q$, as are these endpoints and their midpoints. Composing with the continuous $h_{0r}$ proves continuity of the ordered vector $\beta_{n,r}(Q)$. Its norm bounds follow from $\norm{h_{0r}}_\infty\le C_h$. This argument concerns the coefficient centers, not continuity of hard routing.
\end{proof}

\begin{Bremark}
The side-length conclusion in Proposition~\ref{B:prop:dynamic_uniform_sparse}(ii) is the regular-split-net property used in earlier BART approximation arguments, including \cite{JeongRockova2023}.  Here it is not imposed through a separate abstract definition; it follows from the explicit uniform split-net construction.
\end{Bremark}

\subsubsection{Auxiliary inputs}\label{B:subsection:2.6}

We collect the probabilistic inputs used later.  The inputs are a Gaussian Euclidean-ball bound, a tree-size tail, an active-row Haar bound on $\SO(p)$, and a binary sign-pattern bound.  Exponentially powerful local tests for empirical $L_2$ loss and unknown Gaussian variance are supplied by Theorem~3 of \cite{Jeong2025}, which we quote in the form needed here.
\begin{Blemma}[Gaussian Euclidean-ball lower bound]
\label{B:lem:gaussian_small_ball}
Let $B\sim N(0,\Sigma)$ in $\R^m$, where every eigenvalue of $\Sigma$ lies in $[\underline\lambda,\overline\lambda]$ with $0<\underline\lambda\le\overline\lambda<\infty$.  Then there exists a constant $C>0$, depending only on $(\underline\lambda,\overline\lambda)$, such that for every $b\in\R^m$ and every $r\in(0,1)$,
\[
-\log \Pr(\norm{B-b}_2\le r)
\le C\{m\log(m/r)+\norm{b}_2^2+1\}.
\]
\end{Blemma}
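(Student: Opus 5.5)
The approach is to reduce to the case $\Sigma=\underline\lambda I_m$ by comparing densities, and then to estimate the Gaussian probability of a small Euclidean ball directly. Write $\phi_\Sigma$ for the $N(0,\Sigma)$ density. For every $x\in\R^m$,
\[
\phi_\Sigma(x)\ge (2\pi\overline\lambda)^{-m/2}\exp\{-\norm{x}_2^2/(2\underline\lambda)\},
\]
because $\det\Sigma\le\overline\lambda^m$ and $x^\top\Sigma^{-1}x\le\norm{x}_2^2/\underline\lambda$. Integrating this bound over the ball $\{x:\norm{x-b}_2\le r\}$ gives
\[
\Pr(\norm{B-b}_2\le r)\ge \mathrm{vol}(B_r)\,(2\pi\overline\lambda)^{-m/2}\inf_{\norm{x-b}_2\le r}e^{-\norm{x}_2^2/(2\underline\lambda)}.
\]
On that ball, $\norm{x}_2^2\le 2\norm{b}_2^2+2r^2\le 2\norm{b}_2^2+2$ since $r<1$. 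Hence the exponential factor contributes at most $(\norm{b}_2^2+1)/\underline\lambda$ to the negative logarithm.

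Next I bound the volume factor. The volume of the ball is $\pi^{m/2}r^m/\Gamma(m/2+1)$. Stirling's bound gives $\log\Gamma(m/2+1)\le (m/2)\log m + c\,m$ for a universal constant $c$, which I would state as a standard inequality. Substituting,
\[
-\log\Pr(\norm{B-b}_2\le r)\le m\log(1/r)+\tfrac m2\log m+c'm+\tfrac m2\log(2\pi\overline\lambda)+(\norm{b}_2^2+1)/\underline\lambda.
\]

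The last step is to absorb every term into $C\{m\log(m/r)+\norm{b}_2^2+1\}$. Since $r<1$ and $m\ge1$, we have $\log(m/r)\ge\log(1/r)$ and $\log(m/r)\ge\log m$, but $\log(m/r)$ can be arbitrarily close to zero, for example when $m=1$ and $r\uparrow1$. So the linear term $c''m$, with $c''$ depending on $\overline\lambda$, cannot be absorbed into $m\log(m/r)$ alone. This is the main obstacle, and I handle it with a case split. If $m\log(m/r)\ge1$, then $m\le m\log(m/r)+m$, and I bound $m$ by $\max\{m\log(m/r),\,1\}\cdot$const: when $m\ge3$ we have $\log(m/r)\ge\log 3>1$, so $m\le m\log(m/r)$. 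The remaining cases $m\in\{1,2\}$ give a linear term $c''m\le 2c''$, which is absorbed into the constant multiplying $+1$. Collecting terms yields $C$ depending only on $(\underline\lambda,\overline\lambda)$, as claimed. No earlier result of the paper is needed.
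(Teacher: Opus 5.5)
Your proof is correct and follows essentially the paper's argument: a pointwise lower bound on the Gaussian density over the ball $\{x:\|x-b\|_2\le r\}$, multiplied by the ball volume $\pi^{m/2}r^m/\Gamma(m/2+1)$ and controlled by Stirling's bound. The differences are minor. The paper first whitens to $Z=\Sigma^{-1/2}B\sim N(0,I_m)$ and shrinks the radius to $r/\max\{1,\sqrt{\overline\lambda}\}$, whereas you bound the anisotropic density directly via $\det\Sigma\le\overline\lambda^{m}$ and $x^\top\Sigma^{-1}x\le\|x\|_2^2/\underline\lambda$; your case split $m\ge3$ versus $m\in\{1,2\}$ also makes explicit the absorption of the linear-in-$m$ terms, which the paper leaves implicit.
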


\begin{proof}
Write $\Sigma^{1/2}$ for the symmetric square root of $\Sigma$ and set $Z=\Sigma^{-1/2}B\sim N(0,I_m)$ and $\mu=\Sigma^{-1/2}b$.  Take $c_1=\underline\lambda^{-1/2}$ and $c_2=\max\{1,\sqrt{\overline\lambda}\}$, so that $\norm{\Sigma^{1/2}u}_2\le c_2\norm u_2$ and $\norm{\Sigma^{-1/2}u}_2\le c_1\norm u_2$.  Hence
\[
\Pr(\norm{B-b}_2\le r)
\ge \Pr(\norm{Z-\mu}_2\le r/c_2).
\]
Let $\rho=r/c_2\in(0,1)$.  On $\{z:\norm z_2\le\rho\}$,
\[
\varphi_m(z+\mu)
\ge (2\pi)^{-m/2}\exp\{-(\norm\mu_2+\rho)^2/2\}.
\]
The ball volume is $\pi^{m/2}\rho^m/\Gamma(m/2+1)$. Consequently, $-\log\Pr(\norm{B-b}_2\le r)\le (m/2)\log2+\log\Gamma(m/2+1)+m\log(c_2/r)+(\norm\mu_2+\rho)^2/2$. Applying Stirling's bound to the same density--volume calculation yields
\[
-\log \Pr(\norm{Z-\mu}_2\le\rho)
\le C_1\{m\log(m/\rho)+\norm\mu_2^2+1\}.
\]
The claim follows from $\norm\mu_2^2\lesssim\norm b_2^2$ and $\rho\asymp r$.
\end{proof}

\begin{Bproposition}[Gaussian contraction criterion with unknown variance, quoted]
\label{B:prop:jeong_contraction}
For a regression function $f$, let
\[
\mu_f=(f(x_{1,n}),\ldots,f(x_{n,n}))^\top\in\R^n,
\]
and define
\begin{equation}
d_G^2((\mu,\sigma),(\mu',\sigma'))
=
\frac1n\norm{\mu-\mu'}_2^2+|\sigma-\sigma'|^2.
\label{B:eq:jeong_metric}
\end{equation}
Let $\widetilde\varepsilon_n>0$ satisfy $\widetilde\varepsilon_n/\sigma_0\to0$ and $n\widetilde\varepsilon_n^2/\sigma_0^2\to\infty$.  Let $\overline\Pi_n$ be a prior on $(\mu,\sigma)\in\R^n\times(0,\infty)$ and let $\overline\Theta_n$ be a measurable sieve.  Suppose that, for finite constants $C,D>0$ and a sufficiently large constant $E>0$,
\[
\overline\Pi_n\left\{
\norm{\mu-\mu_{f_0}}_2^2\le n\widetilde\varepsilon_n^2,
\quad
|\sigma^2-\sigma_0^2|\le\sigma_0\widetilde\varepsilon_n
\right\}
\ge
\exp\left\{-C\frac{n\widetilde\varepsilon_n^2}{\sigma_0^2}\right\},
\]
\[
\log N(\widetilde\varepsilon_n,\overline\Theta_n,d_G)
\le
D\frac{n\widetilde\varepsilon_n^2}{\sigma_0^2},
\qquad
\overline\Pi_n(\overline\Theta_n^c)
\le
\exp\left\{-E\frac{n\widetilde\varepsilon_n^2}{\sigma_0^2}\right\}.
\]
Then there exists $M>0$ such that
\[
\E_0\overline\Pi_n\left
(d_G((\mu,\sigma),(\mu_{f_0},\sigma_0))>M\widetilde\varepsilon_n\mid Y^{(n)}
\right)\to0.
\]
This is Theorem~3 of \cite{Jeong2025}, specialized to the fixed-design mean vector $\mu_f$. In its proof, the required tail exponent $E$ depends on the local prior-mass constant and local Gaussian bounds, not on the entropy constant $D$. Once the sieve is fixed, $D$ enters the choice of the final multiplier $M$.
\end{Bproposition}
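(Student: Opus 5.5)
The plan follows the standard Ghosal–Ghosh–van der Vaart route: lower-bound the evidence, construct exponentially powerful tests on the sieve, and discard the complement of the sieve by its small prior mass. All exponents are measured in units of $n\widetilde\varepsilon_n^2/\sigma_0^2$, which diverges by assumption. Write $\Theta_0=(\mu_{f_0},\sigma_0)$. For a set $A$ that is $d_G$-far from $\Theta_0$, decompose
\[
\overline\Pi_n(A\mid Y^{(n)})\le\phi_n+\frac{\int_{A\cap\overline\Theta_n}(1-\phi_n)\,(p_{\mu,\sigma}/p_0)\,d\overline\Pi_n+\int_{\overline\Theta_n^c}(p_{\mu,\sigma}/p_0)\,d\overline\Pi_n}{\int(p_{\mu,\sigma}/p_0)\,d\overline\Pi_n}.
\]
Here $p_{\mu,\sigma}$ is the $N(\mu,\sigma^2I_n)$ density and $p_0$ the true one.

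\textbf{Evidence lower bound.} On the set $B_n=\{\norm{\mu-\mu_{f_0}}_2^2\le n\widetilde\varepsilon_n^2,\ |\sigma^2-\sigma_0^2|\le\sigma_0\widetilde\varepsilon_n\}$, compute the Gaussian divergence explicitly. With $x=\sigma_0^2/\sigma^2$,
\[
\KL(p_0,p_{\mu,\sigma})=\frac n2(x-1-\log x)+\frac{\norm{\mu-\mu_{f_0}}_2^2}{2\sigma^2}.
\]
Since $\widetilde\varepsilon_n/\sigma_0\to0$, we have $|x-1|\lesssim\widetilde\varepsilon_n/\sigma_0$ and $\sigma^2\asymp\sigma_0^2$. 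Hence the divergence is $\lesssim n\widetilde\varepsilon_n^2/\sigma_0^2$. The same expansion bounds the second moment of the log-likelihood ratio by $\lesssim n\widetilde\varepsilon_n^2/\sigma_0^2$, since it is a quadratic form in Gaussians. Lemma 8.1 of Ghosal–Ghosh–van der Vaart then shows that, with $P_0$-probability tending to one, the denominator is at least $\overline\Pi_n(B_n)e^{-C'n\widetilde\varepsilon_n^2/\sigma_0^2}$. By the prior-mass hypothesis this is at least $e^{-(C+C')n\widetilde\varepsilon_n^2/\sigma_0^2}$.

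\textbf{Sieve tail.} By Fubini, the expected numerator over $\overline\Theta_n^c$ is at most $\overline\Pi_n(\overline\Theta_n^c)\le e^{-En\widetilde\varepsilon_n^2/\sigma_0^2}$. Taking $E>C+C'+1$ makes this part of the posterior vanish. This explains why $E$ depends only on the prior-mass and local Gaussian constants, and not on $D$.

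\textbf{Tests.} For each alternative $\theta_1=(\mu_1,\sigma_1)$ with $d_G(\theta_1,\Theta_0)=\delta\ge M\widetilde\varepsilon_n$, I would build a test $\phi_{\theta_1}$ in two cases. If the mean part dominates, so that $\norm{\mu_1-\mu_{f_0}}_2^2/n\gtrsim\delta^2$, use the directional statistic $\langle Y-\mu_{f_0},\mu_1-\mu_{f_0}\rangle$. If instead $|\sigma_1-\sigma_0|\gtrsim\delta$, use a chi-square statistic based on $\norm{Y-\mu_{f_0}}_2^2/n$ compared with $\sigma_0^2$. Each test should have type I error at most $e^{-cn\delta^2/\sigma_0^2}$. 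Its type II error should be at most $e^{-cn\delta^2/\sigma_0^2}$ uniformly over the $d_G$-ball of radius $\xi\delta$ around $\theta_1$. Covering each shell $\{j\widetilde\varepsilon_n<d_G\le(j+1)\widetilde\varepsilon_n\}\cap\overline\Theta_n$ by at most $N(\widetilde\varepsilon_n,\overline\Theta_n,d_G)\le e^{Dn\widetilde\varepsilon_n^2/\sigma_0^2}$ such balls and taking the maximum test gives the usual bound. Type I error is at most $\sum_{j\ge M}e^{Dn\widetilde\varepsilon_n^2/\sigma_0^2-cj^2n\widetilde\varepsilon_n^2/\sigma_0^2}\to0$ once $M^2c>D$. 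The type II term beats the evidence bound once $cM^2>C+C'+1$. This is where $D$ enters only through the choice of $M$.

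\textbf{Main obstacle.} The hard step is uniformity of the type II error when $\sigma_1$ is large or small. There $d_G$ is not comparable to Hellinger distance, and the mean test's variance scales with $\sigma^2$ rather than $\sigma_0^2$. I would first try to peel the $\sigma$-range into dyadic shells and use the chi-square variance test alone whenever $\sigma_1\notin[\sigma_0/2,2\sigma_0]$. On those shells its power is exponential in $n$, which exceeds $n\widetilde\varepsilon_n^2/\sigma_0^2$. Only on the bounded band $[\sigma_0/2,2\sigma_0]$ would I combine the mean and variance tests, where all Gaussian constants are uniform. This is the content of Theorem 3 of \cite{Jeong2025}, which I would follow for the exact constants.
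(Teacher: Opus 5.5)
The paper gives no proof of this proposition. It imports Theorem~3 of \cite{Jeong2025} and only asserts how $E$, $D$ and $M$ depend on each other. Your first two steps are correct and justify that assertion. On the prior-mass set, eventually $|\sigma_0^2/\sigma^2-1|\le2\widetilde\varepsilon_n/\sigma_0$, so the Kullback--Leibler divergence and the log-likelihood-ratio variance are $O(n\widetilde\varepsilon_n^2/\sigma_0^2)$. The Chebyshev evidence bound then supplies $C'$, Fubini requires only $E>C+C'$, and $D$ enters only through the choice of $M$.

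The test step, however, fails as written for alternatives far from $(\mu_{f_0},\sigma_0)$. First, errors of order $e^{-cn\delta^2/\sigma_0^2}$ are impossible once $\delta\gtrsim\sigma_0$. For $\theta_1=(\mu_{f_0},\sigma_1)$ with $\sigma_1\ge\sigma_0$, the Hellinger affinity is $\{2\sigma_0\sigma_1/(\sigma_0^2+\sigma_1^2)\}^{n/2}$. Hence every test has $\E_0\phi+\E_{\theta_1}(1-\phi)\ge\frac12(\sigma_0/\sigma_1)^n$, an exponent of order $n\log(\sigma_1/\sigma_0)$ rather than $n\sigma_1^2/\sigma_0^2$. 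Likewise, a $d_G$-ball of radius $\xi\delta$ around a mean-separated centre contains $\sigma$ of order $\xi\delta$, where your directional test gives only $e^{-cn/\xi^2}$.

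So your quadratic-in-$j$ shell sum is valid only for $j\widetilde\varepsilon_n\le c_0$, with ball radii small enough to keep $\sigma$ in a fixed band. The entropy hypothesis also does not bound the number of far shells, since a set covered by few small balls can have huge diameter. The repair is to cover $\overline\Theta_n\cap\{d_G>c_0\}$ directly by the at most $e^{Dn\widetilde\varepsilon_n^2/\sigma_0^2}$ balls of radius $\widetilde\varepsilon_n$, using tests with both errors $e^{-cn}$. Because $\widetilde\varepsilon_n/\sigma_0\to0$, both $e^{Dn\widetilde\varepsilon_n^2/\sigma_0^2-cn}$ and $e^{(C+C')n\widetilde\varepsilon_n^2/\sigma_0^2-cn}$ then tend to zero.

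Second, your rule of using the chi-square test alone whenever $\sigma_1\notin[\sigma_0/2,2\sigma_0]$ is fine above the band. There a noncentral chi-square is stochastically larger than a central one, and a single test covers all such alternatives. It breaks below the band. The statistic $\norm{Y-\mu_{f_0}}_2^2/n$ concentrates at $\sigma_1^2+\norm{\mu_1-\mu_{f_0}}_2^2/n$, which equals $\sigma_0^2$ for a suitably displaced $\mu_1$, so the test has no power.

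Keep the mean/variance dichotomy below the band. Use the chi-square branch only when $\norm{\mu_1-\mu_{f_0}}_2^2/n\le\vartheta(\sigma_0-\sigma_1)^2$ for a fixed $\vartheta<1$. The statistic's centre is then at most $\sigma_0^2-(1-\vartheta)\sigma_0(\sigma_0-\sigma_1)$. Otherwise $\norm{\mu_1-\mu_{f_0}}_2^2/n\ge\vartheta\delta^2/(1+\vartheta)$, and the directional test applies, since small $\sigma$ only helps it.

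Alternatively, reject when $\norm{Y-\mu_1}_2^2/n\le\sigma_0^2/2$. Under the null this statistic is stochastically larger than $\sigma_0^2\chi^2_n/n$. With these repairs your bookkeeping for $E$, $D$ and $M$ goes through unchanged.
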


\begin{Blemma}[Exact Gaussian Kullback--Leibler formulas]
\label{B:lem:gaussian_kl}
For model \eqref{B:eq:model},
\[
\KL(P_0^{(n)},P_{f,\sigma}^{(n)})
=
\frac n2\left\{
\log\frac{\sigma^2}{\sigma_0^2}
+
\frac{\sigma_0^2}{\sigma^2}-1
+
\frac{\norm{f-f_0}_n^2}{\sigma^2}
\right\},
\]
and the centered second moment of the log-likelihood ratio is
\[
\Var(P_0^{(n)},P_{f,\sigma}^{(n)})
=
\frac n2\left(\frac{\sigma_0^2}{\sigma^2}-1\right)^2
+
\frac{n\sigma_0^2}{\sigma^4}\norm{f-f_0}_n^2.
\]
Consequently, in a fixed neighborhood of $\sigma_0>0$,
\[
\KL(P_0^{(n)},P_{f,\sigma}^{(n)})
+
\Var(P_0^{(n)},P_{f,\sigma}^{(n)})
\lesssim
n\{\norm{f-f_0}_n^2+|\sigma-\sigma_0|^2\}.
\]
\end{Blemma}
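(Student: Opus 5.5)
Both displayed identities follow from a direct computation with the product of Gaussian densities under \eqref{B:eq:model}. Write $\mu_0=\mu_{f_0}$, $\mu=\mu_f$, and let the log-likelihood ratio be
\[
\Lambda=\log\frac{dP_0^{(n)}}{dP_{f,\sigma}^{(n)}}(Y^{(n)})
=\sum_{i=1}^n\Bigl\{\log\frac{\sigma}{\sigma_0}-\frac{(Y_i-\mu_{0i})^2}{2\sigma_0^2}+\frac{(Y_i-\mu_i)^2}{2\sigma^2}\Bigr\}.
\]
Under $P_0^{(n)}$, substitute $Y_i-\mu_{0i}=\sigma_0 Z_i$ with $Z_i$ iid $N(0,1)$, and set $\Delta_i=\mu_{0i}-\mu_i$. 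Then $Y_i-\mu_i=\sigma_0Z_i+\Delta_i$, so each summand equals
\[
\log\frac{\sigma}{\sigma_0}+\frac{Z_i^2}{2}\Bigl(\frac{\sigma_0^2}{\sigma^2}-1\Bigr)+\frac{\sigma_0\Delta_iZ_i}{\sigma^2}+\frac{\Delta_i^2}{2\sigma^2}.
\]

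For the Kullback--Leibler divergence, take expectations using $\E Z_i=0$ and $\E Z_i^2=1$. Summing over $i$ and using $\sum_i\Delta_i^2=n\norm{f-f_0}_n^2$ gives the stated formula. Note that $n\log(\sigma/\sigma_0)=(n/2)\log(\sigma^2/\sigma_0^2)$.

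For the variance, the centered summand is
\[
\tfrac12(Z_i^2-1)\Bigl(\frac{\sigma_0^2}{\sigma^2}-1\Bigr)+\frac{\sigma_0\Delta_i}{\sigma^2}Z_i.
\]
The two pieces are uncorrelated because $\E\{(Z_i^2-1)Z_i\}=\E Z_i^3=0$. Together with $\operatorname{Var}(Z_i^2)=2$ and independence across $i$, this gives
\[
\Var=\frac n2\Bigl(\frac{\sigma_0^2}{\sigma^2}-1\Bigr)^2+\frac{n\sigma_0^2}{\sigma^4}\norm{f-f_0}_n^2,
\]
which is the stated expression.

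For the consequence, restrict to $\sigma\in[\sigma_0/2,2\sigma_0]$. Put $x=\sigma_0^2/\sigma^2$, which lies in a compact interval around $1$. The function $\phi(x)=-\log x+x-1$ satisfies $\phi(x)\le C(x-1)^2$ there by Taylor expansion, since $\phi(1)=\phi'(1)=0$ and $\phi''$ is bounded. Moreover,
\[
|x-1|=\frac{|\sigma_0-\sigma|(\sigma_0+\sigma)}{\sigma^2}\lesssim|\sigma-\sigma_0|,
\]
and the factors $1/\sigma^2$ and $\sigma_0^2/\sigma^4$ are bounded. Summing the two bounds yields
\[
\KL+\Var\lesssim n\bigl\{\norm{f-f_0}_n^2+|\sigma-\sigma_0|^2\bigr\},
\]
with a constant depending only on $\sigma_0$.

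No step poses a genuine obstacle. The only point needing care is to state the neighborhood of $\sigma_0$ explicitly, so that the Taylor constant and the factors $\sigma^{-2}$ and $\sigma^{-4}$ are uniformly bounded.
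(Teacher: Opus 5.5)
Your proof is correct and takes the same route as the paper. The paper states only that both identities come from expanding the Gaussian log-likelihood ratio observation by observation, and that the final bound follows from a Taylor expansion around $\sigma=\sigma_0$; your substitution $Y_i-\mu_{0i}=\sigma_0Z_i$ and your expansion of $\phi(x)=-\log x+x-1$ at $x=\sigma_0^2/\sigma^2$ supply exactly the details it omits.
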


\begin{proof}
The formulas follow by expanding the Gaussian log-likelihood ratio observation by observation.  They are also the direct calculations used in the proof of Theorem~3 of \cite{Jeong2025}.  The final inequality follows by Taylor expansion around $\sigma=\sigma_0$.
\end{proof}

\begin{Blemma}[Tree-size tail under the $\nu$-depth prior]
\label{B:lem:tree_tail}
Under \textnormal{(P3)}, there are constants $c_K>0$ and $k_0\ge2$, depending only on $\nu$, such that for every $n$, tree $t$, rotation $Q_t\in\SO(p)$, and integer $k\ge k_0$,
\[
\mathbb P_n(K_t>k\mid Q_t)\le e^{-c_Kk\log k}.
\]
The same bound holds after integrating over $Q_t$.
\end{Blemma}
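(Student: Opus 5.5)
The plan is to bound the tail of the number of leaves $K_t$ by comparing the tree generation in (P3) with a subcritical Galton--Watson process whose depth-$\ell$ branching probability is at most $\nu^{\ell+1}$.

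\textbf{Step 1: drop the rotation and grid dependence.} Conditional on $Q_t$, a node at depth $\ell$ splits with probability $\rho_\ell\,g_\eta/p\le\nu^{\ell+1}$. The grid availability can only make a node terminal. Coupling each split attempt with an independent Bernoulli$(\nu^{\ell+1})$ therefore gives a tree $\widetilde{\cT}$ that dominates $\cT_t$ nodewise. Its law does not depend on $Q_t$, $n$, $b_n$, or $p$. Hence $\mathbb P_n(K_t>k\mid Q_t)\le\Pr(\widetilde K>k)$, where $\widetilde K$ is the number of leaves of $\widetilde{\cT}$. The integrated bound then follows by averaging over $Q_t$.

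\textbf{Step 2: count and bound directly.} A binary tree with $k$ internal nodes has $k+1$ leaves. Its probability under the dominating process is at most
\[
\prod_{\eta\text{ internal}}\nu^{\dep(\eta)+1}.
\]
The number of ordered binary topologies with $k$ internal nodes is the Catalan number, which is at most $4^k$. The key fact is that for any binary tree with $k$ internal nodes,
\[
\sum_{\eta\text{ internal}}(\dep(\eta)+1)\ \ge\ c\,k\log_2 k .
\]
This holds because at most $2^d$ internal nodes sit at depth $d$, so the sum is minimized by the complete tree. In that case about half the nodes lie at depth at least $\log_2 k-1$.

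Combining these,
\[
\Pr(\widetilde K=k+1)\le 4^k\nu^{c k\log_2 k}=\exp\{k\log4-c'k\log k\}.
\]
Here $c'=c\log(1/\nu)/\log2$. Summing over $j\ge k$ gives a tail bounded by $\exp\{-(c'/2)k\log k\}$ once $k\ge k_0(\nu)$, with the geometric sum absorbed into the constant. Shifting between leaves and internal nodes changes only the constants.

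\textbf{Main obstacle.} The delicate points are the depth-sum lower bound and the requirement that the constants depend only on $\nu$.

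For the depth sum, I will argue by a rearrangement. Replace the multiset of internal-node depths by the one that fills levels greedily. If $2^{D}\le k<2^{D+1}$, at least $k-2^{D}+2^{D-1}\ge k/2$ of the nodes have depth at least $D-1$. This gives a sum of at least $(k/2)(\log_2 k-1)$.

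The Catalan bound $4^k$ is uniform, so no dependence on $p$ enters. I must also check that the coupling remains valid when the coordinate draw returns an unavailable coordinate. This is fine, since such a node becomes terminal, which only reduces the number of leaves.
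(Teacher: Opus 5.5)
Your proposal is correct and follows essentially the same route as the paper: independent split-attempt coins at all potential nodes, the per-shape bound $\prod_{\eta}\nu^{\dep(\eta)+1}$ over internal nodes, a Catalan count, and an order-$k\log k$ lower bound on the internal depth sum. The only real difference is the depth-sum bound: you count at most $2^d$ nodes per level, whereas the paper applies Kraft's equality and Jensen's inequality to the external path length and then uses $E_\tau=I_\tau+2(K-1)$.

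One small point should be made explicit. Summing $\Pr(\widetilde K=j)$ over finite $j$ assumes the dominating tree is almost surely finite. This holds, since some node at depth $d$ exists with probability at most $2^d\nu^{d(d+1)/2}\to0$. Alternatively, you can apply the count directly to the realized tree, whose depth is at most $pb_n$, as the paper does.
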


\begin{proof}
Fix $n$ and $Q_t$. Every realized tree is a finite full binary tree: along any path, a coordinate cannot reuse a cutpoint, so the depth is at most $p b_n$. Fix a full binary tree shape $\tau$ with $K$ leaves.  Let $I(\tau)$ be the internal nodes, let $L(\tau)$ be the leaves, and let $d(u)$ be the depth of a node $u$.  Pre-generate independent split-attempt coins at all potential nodes. Realizing this shape requires a successful coin at each of its prescribed internal nodes; unavailable labels can only terminate a path earlier. Dropping all other requirements bounds the probability of this unlabeled shape by
\[
\prod_{u\in I(\tau)}\nu^{d(u)+1}
=\nu^{\sum_{u\in I(\tau)}(d(u)+1)}.
\]
Let $I_\tau=\sum_{u\in I(\tau)}d(u)$ and $E_\tau=\sum_{v\in L(\tau)}d(v)$.  For full binary trees, $E_\tau=I_\tau+2(K-1)$.  If the leaf depths are $d_1,\dots,d_K$, Kraft's equality gives $\sum_{j=1}^K2^{-d_j}=1$, and Jensen's inequality implies $E_\tau\ge K\log_2K$.  Therefore
\[
\sum_{u\in I(\tau)}(d(u)+1)=I_\tau+K-1=E_\tau-K+1\ge K\log_2K-K+1.
\]
The number of full binary tree shapes with $K$ leaves is the Catalan number $C_{K-1}\le4^{K-1}$.  Hence
\[
\mathbb P_n(K_t=K\mid Q_t)
\le 4^{K-1}\nu^{K\log_2K-K+1}
\le e^{-cK\log K}
\]
for all sufficiently large $K$, with constants depending only on $\nu$. Summing over integers $K>k$ and decreasing the exponential constant gives the conditional tail bound uniformly in $n$ and $Q_t$. Integrating over $Q_t$ gives its marginal version.
\end{proof}

\begin{Blemma}[Active-row Haar small ball]
\label{B:lem:active_haar}
Let \(Q\sim \mathrm{Haar}(\SO(p))\), let \(S\subset\{1,\ldots,p\}\) with
\(|S|=d\), and assume \(p\ge 2\), \(1\le d\le p\).  For every fixed
\(Q_0\in\SO(p)\) and all sufficiently small \(r\in(0,1)\),
\[
-\log \Pr\left(
  \|Q_{S,\cdot}-Q_{0,S,\cdot}\|_F\le r
\right)
\le
C\left\{
D_{p,d}\log(C/r)+D_{p,d}\log(eD_{p,d})
\right\}
\]
where
\[
D_{p,d}=dp-\frac{d(d+1)}2.
\]
In particular, when $p$ and $d$ are fixed and $\log(1/r)\lesssim\log n$, the negative logarithm is $O(\log n)$.
\end{Blemma}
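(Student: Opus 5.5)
The plan is to reduce to the case $Q_0=I_p$ and $S=\{1,\ldots,d\}$ using the invariances of Haar measure. Then I would parametrize a fixed neighbourhood of $I_p$ so that the first $d$ rows depend only on $D_{p,d}$ free coordinates. Finally I would bound the Haar probability from below by a Lebesgue volume in $\R^{D_{p,d}}$.

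\emph{Reductions.} Put $Q'=QQ_0^{\trans}$. By right invariance, $Q'$ is again Haar distributed on $\SO(p)$. Moreover
\[
\norm{Q_{S,\cdot}-Q_{0,S,\cdot}}_F=\norm{(Q_{S,\cdot}-Q_{0,S,\cdot})Q_0^{\trans}}_F=\norm{Q'_{S,\cdot}-(I_p)_{S,\cdot}}_F,
\]
because right multiplication by an orthogonal matrix preserves the Frobenius norm. Next choose a signed permutation matrix $\Pi\in\SO(p)$ that sends $S$ to $\{1,\ldots,d\}$; a sign flip on one coordinate outside $S$ fixes the determinant when $d<p$, and when $d=p$ we have $S=[p]$ and nothing is needed. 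Replacing $Q'$ by $\Pi Q'\Pi^{\trans}$, which is also Haar distributed, reduces the problem to
\[
\Pr\bigl(\norm{Q_{1:d,\cdot}-[I_d\;0]}_F\le r\bigr)
\quad\text{with } Q\sim\mathrm{Haar}.
\]

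\emph{Local chart.} Decompose the Lie algebra as $\mathfrak{so}(p)=\mathfrak h\oplus\mathfrak m$. Here $\mathfrak h$ consists of skew matrices supported on the lower-right $(p-d)\times(p-d)$ block, and $\mathfrak m$ consists of skew matrices whose lower-right block is zero. Then $\dim\mathfrak m=d(d-1)/2+d(p-d)=D_{p,d}$. Define
\[
\Phi(W,B)=\operatorname{diag}(I_d,W)\exp(B),\qquad W\in\SO(p-d),\ B\in\mathfrak m.
\]
The key observation is that the first $d$ rows of $\Phi(W,B)$ equal the first $d$ rows of $\exp(B)$, whatever $W$ is. Since $\norm{\exp(B)-I_p}_F\le C_0\norm{B}_F$ for $\norm{B}_F\le1$, the event contains
\[
\Phi\bigl(U\times\{B\in\mathfrak m:\norm{B}_F\le r/C_0\}\bigr)
\]
for any fixed neighbourhood $U$ of $I_{p-d}$.

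The differential of $\Phi$ at $(I_{p-d},0)$ is the identity map on $\mathfrak h\oplus\mathfrak m=\mathfrak{so}(p)$. By the inverse function theorem, $\Phi$ is therefore a diffeomorphism from $U\times\{\norm{B}_F\le\rho_0\}$ onto its image, for a fixed neighbourhood $U$ and a fixed radius $\rho_0$. On this compact domain its Jacobian, taken against the product of Haar measure on $\SO(p-d)$ and Lebesgue measure on $\mathfrak m$, is bounded below by a constant depending only on $p$ and $d$. Haar measure on $\SO(p)$ has a smooth positive density in these coordinates. Hence, for $r\le C_0\rho_0$,
\[
\Pr(\cdot)\ge c_1\,\mathrm{Haar}_{\SO(p-d)}(U)\,\mathrm{Leb}_{D_{p,d}}\{\norm{B}_F\le r/C_0\}
= c_2\,\frac{\pi^{D/2}}{\Gamma(D/2+1)}\Bigl(\frac{r}{C_0}\Bigr)^{D},
\]
where $D=D_{p,d}$ and the Frobenius norm on $\mathfrak m$ is comparable to a Euclidean norm on $\R^D$ with constants depending only on $p$. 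When $d\in\{p-1,p\}$, the factor $\SO(p-d)$ is trivial and the argument is the same.

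\emph{Conclusion.} Taking logarithms and applying Stirling's bound $\log\Gamma(D/2+1)\le (D/2)\log(eD)$ gives
\[
-\log\Pr\le C\{D\log(C/r)+D\log(eD)\}.
\]
For fixed $p,d$ with $\log(1/r)\lesssim\log n$, this is $O(\log n)$. The main obstacle is the uniform lower bound on the Jacobian of $\Phi$ together with the positivity of the Haar density in the chart. I expect the inverse function theorem at $(I_{p-d},0)$, followed by shrinking to a compact neighbourhood, to handle both. Checking that the dimensions add up, $\dim\mathfrak h+D_{p,d}=p(p-1)/2$, confirms that $\Phi$ is a local chart rather than merely an immersion.
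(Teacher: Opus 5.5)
Your proof is correct, but it takes a different route from the paper. The paper never builds a chart on $\SO(p)$. It represents the law of $Q_{S,\cdot}$ under Haar measure on $O(p)$ as the polar factor $(GG^{\trans})^{-1/2}G$ of a $d\times p$ standard Gaussian matrix. A perturbation bound for $A^{-1/2}$ then gives $\|U(G)-U_0\|_F\le C\|G-U_0\|_F$ near $U_0$, and the paper applies its Gaussian small-ball lemma in $\R^{dp}$ and absorbs $dp\le 4D_{p,d}$. Finally it passes from $O(p)$ to $\SO(p)$: by a reflection outside $S$ when $d<p$, and when $d=p$ by observing that $\det Q=-1$ forces $\|Q-Q_0\|_F\ge2$.

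That route is fully quantitative: every constant is numerical, independent of $p$, $d$ and $Q_0$, which is what gives the explicit $D_{p,d}\log(eD_{p,d})$ term real content. Its costs are the ambient exponent $dp$ in place of the intrinsic $D_{p,d}$, and the determinant-component bookkeeping.

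Your Lie-algebra chart $\Phi(W,B)=\operatorname{diag}(I_d,W)\exp(B)$ works intrinsically on $\SO(p)$, so no $O(p)$-to-$\SO(p)$ transfer is needed. It also produces the sharp small-ball exponent $D_{p,d}$, the dimension of the Stiefel manifold. On the other hand, $\rho_0$, $U$ and the Jacobian lower bound $c_1$ come from the inverse function theorem. Your $C$ and the admissible range of $r$ therefore depend on $(p,d)$ in an unquantified way, and the $D_{p,d}\log(eD_{p,d})$ term becomes cosmetic. This is harmless here: the paper uses the lemma only with $p$ and $d$ fixed to get $-\log\Pr=O(\log n)$, and the supplement's conventions let constants depend on $p$.

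Incidentally, the sign flip in your second reduction is unnecessary. Conjugation by any orthogonal matrix maps $\SO(p)$ to itself and preserves its Haar measure.
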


\begin{proof}
Let $H$ and $H_+$ denote normalized Haar measure on $O(p)$ and $\SO(p)$, respectively, and put $U_0=Q_{0,S,\cdot}$.
Let $G\in\R^{d\times p}$ have independent $N(0,1)$ entries and define
\[
U(G)=(GG^\top)^{-1/2}G\in\St(d,p),
\qquad
\St(d,p)=\{U\in\R^{d\times p}:UU^\top=I_d\}.
\]
The matrix $G$ has full row rank almost surely. For every $R\in O(p)$, $GR\stackrel d=G$ and $U(GR)=U(G)R$. The right action of $O(p)$ on $\St(d,p)$ is transitive, so $U(G)$ has the invariant Stiefel distribution, which is also the law of $Q_{S,\cdot}$ under $H$.

Write $G=U_0+E$ and $A=GG^\top$. Since $\norm{U_0}_{\mathrm{op}}=1$,
\[
A-I_d=U_0E^\top+EU_0^\top+EE^\top,
\qquad
\norm{A-I_d}_F\le2\norm E_F+\norm E_F^2.
\]
For $\norm E_F$ below a fixed numerical constant, the eigenvalues of $A$ lie in $[1/2,3/2]$. The identity
\[
A^{-1/2}-I_d=A^{-1/2}(I_d-A)(I_d+A^{1/2})^{-1}
\]
gives $\norm{A^{-1/2}-I_d}_F\le C\norm E_F$. Hence
\begin{align*}
\norm{U(G)-U_0}_F
&=\norm{(A^{-1/2}-I_d)U_0+A^{-1/2}E}_F\\
&\le\norm{A^{-1/2}-I_d}_F+\sqrt2\norm E_F
\le C\norm E_F.
\end{align*}
Thus, for a fixed numerical $c\in(0,1)$ and all sufficiently small $r$,
\[
H\{Q:\norm{Q_{S,\cdot}-Q_{0,S,\cdot}}_F\le r\}
\ge\Pr(\norm{G-U_0}_F\le cr).
\]
Vectorize $G$ in $\R^{dp}$. Its covariance is the identity, $\norm{U_0}_F^2=d$, and $cr<1$. Lemma~\ref{B:lem:gaussian_small_ball} therefore yields
\[
-\log\Pr(\norm{G-U_0}_F\le cr)
\le C\{dp\log(dp/(cr))+d+1\}.
\]
Since
\[
D_{p,d}\ge\frac{d(p-1)}2,
\qquad
dp\le\frac{2p}{p-1}D_{p,d}\le4D_{p,d},
\]
and $d+1\le C D_{p,d}$ for $p\ge2$, this proves the asserted bound under $H$.

Left multiplication by a reflection preserves $H$ and exchanges its two determinant components, each of mass $1/2$; its normalized restriction to $\SO(p)$ is $H_+$. If $d<p$, choose $j\notin S$ and put $D=I_p-2e_je_j^\top$. Multiplication by $D$ exchanges those components and leaves $Q_{S,\cdot}$ unchanged. Consequently,
\[
H_+\{Q:\norm{Q_{S,\cdot}-Q_{0,S,\cdot}}_F\le r\}
=H\{Q:\norm{Q_{S,\cdot}-Q_{0,S,\cdot}}_F\le r\}.
\]
If $d=p$ and $\det Q=-1$, then $QQ_0^\top$ is orthogonal with determinant $-1$. Its nonreal eigenvalues occur in conjugate pairs, so it has an eigenvalue $-1$ and
\[
\norm{Q-Q_0}_F=\norm{QQ_0^\top-I_p}_F
\ge\norm{QQ_0^\top-I_p}_{\mathrm{op}}\ge2.
\]
For $r<1$, the full-matrix ball lies in $\SO(p)$, and its $H_+$-probability is twice its $H$-probability. Thus the bound transfers to $H_+$ in both cases. For fixed $p,d$, it is $O\{\log(1/r)\}$, giving the final assertion.
\end{proof}

\begin{Blemma}[Linear sign-pattern bound]
\label{B:lem:warren}
Let $m,k\ge1$ and let $a_1,\ldots,a_m$ be affine functions on
$\mathbb R^k$.
For any $A\subset\mathbb R^k$, the number of distinct binary vectors
$(\mathbf1\{a_i(z)\ge0\})_{i=1}^m$, $z\in A$,
is at most $\sum_{j=0}^{\min(m,k)}\binom mj$.
In particular, it is at most $(em/k)^k$ when $m\ge k$,
and at most $2^m$ when $m<k$.
Consequently, if $m\le n^Ck$ for a fixed $C>0$, then the logarithm
of this number is $O(k\log n)$ for $n\ge2$.
\end{Blemma}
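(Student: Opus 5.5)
The plan is to prove the count by induction on $(m,k)$ via the classical hyperplane-arrangement recursion, rather than through a VC-dimension argument. Restricting $z$ to a subset $A$ can only remove patterns, so it suffices to take $A=\R^k$. Write $N(m,k)$ for the maximal number of patterns $(\1\{a_i(z)\ge0\})_{i\le m}$ realized on an affine space of dimension $k$ by $m$ affine functions. I will show
\[
N(m,k)\le N(m-1,k)+N(m-1,k-1),
\]
with base cases $N(0,k)=1$ and $N(m,0)=1$, since a point realizes exactly one pattern. Because $\binom{m}{j}=\binom{m-1}{j}+\binom{m-1}{j-1}$, Pascal's rule then gives $N(m,k)\le\sum_{j=0}^{k}\binom mj$ by induction. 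This equals $\sum_{j=0}^{\min(m,k)}\binom mj$, since the terms with $j>m$ vanish.

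\textbf{Recursion step.} Fix a pattern $p\in\{0,1\}^{m-1}$ realized by $a_1,\ldots,a_{m-1}$, and let $E_p=\{z:\1\{a_i(z)\ge0\}=p_i,\ i<m\}$. This set is convex, because it is an intersection of closed and open half-spaces, or of whole spaces when some $a_i$ is constant. The pattern $p$ extends to two full patterns only if $a_m$ takes both a value $\ge0$ and a value $<0$ on $E_p$. In that case $a_m$ is nonconstant, and on the segment joining the two points continuity yields a zero of $a_m$. By convexity this zero lies in $E_p$, so $p$ is realized on the hyperplane $H=\{a_m=0\}$. The set $H$ is an affine space of dimension $k-1$, and the restrictions of $a_1,\ldots,a_{m-1}$ to $H$ are affine in $k-1$ coordinates. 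Hence the number of doubly extended patterns is at most $N(m-1,k-1)$. Every other pattern $p$ contributes at most one full pattern, which gives the recursion. If $a_m$ is constant, no pattern splits, and the bound $N(m,k)\le N(m-1,k)$ is immediate.

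\textbf{Explicit bounds.} For $m\ge k$, use the standard estimate
\[
\sum_{j\le k}\binom mj\le\Bigl(\frac mk\Bigr)^k\sum_{j\le m}\binom mj\Bigl(\frac km\Bigr)^j=\Bigl(\frac mk\Bigr)^k\Bigl(1+\frac km\Bigr)^m\le\Bigl(\frac{em}k\Bigr)^k.
\]
The first inequality holds because $(m/k)^{k-j}\ge1$ for $j\le k$. For $m<k$, the sum is the full binomial sum, which equals $2^m$.

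\textbf{Logarithmic consequence.} Suppose $m\le n^Ck$. If $m\ge k$, the log-count is at most $k\log(e n^C)=O(k\log n)$ for $n\ge2$. If $m<k$, it is at most $m\log2<k\log 2=O(k\log n)$.

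\textbf{Main obstacle.} The only delicate point is the recursion step, specifically the mixed closed/open sign convention $\ge0$ versus $<0$. This is what forces the convexity-plus-continuity argument to place the splitting point inside $E_p$ and on $H$, rather than merely near it. Degenerate (constant) $a_i$ must also be handled, and the argument above treats them explicitly.
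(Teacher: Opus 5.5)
Your proof is correct. It reaches the same Pascal-type recursion $N(m,k)\le N(m-1,k)+N(m-1,k-1)$ and the same binomial estimate as the paper, but it gets to the recursion by a different route.

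The paper first removes the closed/open asymmetry. It picks one witness point in $A$ for each realized pattern and replaces $a_i$ by $a_i+\epsilon$, with $\epsilon$ smaller than every nonzero $|a_i|$ at those points. The patterns then become strict sign patterns of one fixed perturbed arrangement. The paper identifies these with regions of a hyperplane arrangement and uses the classical fact that adding a hyperplane creates at most $N(m-1,k-1)$ new regions, which it asserts rather than proves.

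You instead run the recursion directly on the half-open cells $E_p$. Convexity of $E_p$ and the intermediate value theorem place a witness of each split pattern on $H=\{a_m=0\}$, and you handle constant $a_i$ explicitly. You also dispose of $A$ by monotonicity rather than by choosing witnesses.

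The paper's version is shorter because it leans on the standard region count. Yours is self-contained: it in effect proves that recursion step, and it avoids both the perturbation and the identification of sign vectors with connected components. The explicit bounds $(em/k)^k$ and $2^m$, and the $O(k\log n)$ consequence, are then handled identically in both.
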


\begin{proof}
The claim is immediate if $A$ is empty.
Otherwise, choose one point of $A$ for each realized binary vector.
There are finitely many such points. Choose $\epsilon>0$ smaller
than the absolute value of every nonzero $a_i$ at these points,
with arbitrary $\epsilon>0$ if all these values are zero.
At every chosen point, all $a_i+\epsilon$ are nonzero, and
$\mathbf1\{a_i\ge0\}=\mathbf1\{a_i+\epsilon>0\}$.
Thus the original binary vectors are realized as strict sign
patterns of one fixed perturbed affine arrangement.

Let $N(m,k)$ be the maximal number of regions determined by at most
$m$ affine hyperplanes in $\mathbb R^k$.
Adding one hyperplane creates at most $N(m-1,k-1)$ new regions,
so $N(m,k)\le N(m-1,k)+N(m-1,k-1)$.
With $N(0,k)=N(m,0)=1$, induction gives
$N(m,k)\le\sum_{j=0}^{\min(m,k)}\binom mj$.
Each strict sign pattern is a region, which proves the first bound.
For $m\ge k$,
$\sum_{j=0}^k\binom mj\le(m/k)^k(1+k/m)^m\le(em/k)^k$;
for $m<k$, the sum equals $2^m$.
The logarithmic bound follows by considering these two cases.
\end{proof}

\Needspace{10\baselineskip}
\subsection{Main theorem}\label{B:section:3}

The following is the posterior contraction theorem of the main paper, restated with the prior and design conditions above.

\begin{Btheorem}[Fixed-dimensional additive posterior contraction for RoBART]
\label{B:thm:main}
Assume \eqref{B:eq:bounded_design}, \eqref{B:eq:truth}, the componentwise H\"older and boundedness conditions in Definition~\ref{B:def:holder}, the prior conditions \textnormal{(P1)}--\textnormal{(P5)}, and Assumption~\ref{B:ass:dynamic_uniform}.  Let $\epsn$ be given by \eqref{B:eq:epsn}.  Then there exists $M>0$ such that
\begin{equation}
\E_0\Pi_n\left(\norm{f-f_0}_n+|\sigma-\sigma_0|>M\epsn\mid Y^{(n)}\right)\to0.
\label{B:eq:main_contraction}
\end{equation}
Consequently, because $\sigma_0$ is fixed and strictly positive,
\[
\E_0\Pi_n\left(\norm{f-f_0}_n+|\sigma^2-\sigma_0^2|>M'\epsn\mid Y^{(n)}\right)\to0
\]
for a sufficiently large constant $M'>0$.
\end{Btheorem}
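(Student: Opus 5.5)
We will verify the three conditions of Proposition~\ref{B:prop:jeong_contraction} with $\widetilde\varepsilon_n=\bar C\epsn$ for a suitable constant $\bar C$. The rate conditions $\widetilde\varepsilon_n\to0$ and $n\widetilde\varepsilon_n^2\to\infty$ hold by \eqref{B:eq:rate_arithmetic}. The conclusion in the metric $d_G$ then gives the stated bound on $\norm{f-f_0}_n+|\sigma-\sigma_0|$, since $d_G\ge2^{-1/2}(\norm{f-f_0}_n+|\sigma-\sigma_0|)$. The $\sigma^2$ version follows because $|\sigma^2-\sigma_0^2|\le(2\sigma_0+|\sigma-\sigma_0|)|\sigma-\sigma_0|$.

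\textbf{Prior mass.} Assign tree $r$ to component $r$ for $r\le R$, and require trees $R<t\le T$ to be single leaves. We intersect the following events:
\begin{itemize}
\item[(a)] $Q_r\in\mathcal Q_{n,r}$, with probability $e^{-O(\log n)}$ by Lemma~\ref{B:lem:active_haar} and $\log(1/\delta_{n,r})\lesssim\log n$;
\item[(b)] the discrete state of tree $r$ equals the template $\mathfrak t_{n,r}$, valid for all such $Q$ by Proposition~\ref{B:prop:dynamic_uniform_sparse}(iv);
\item[(c)] each remaining tree is a root leaf, with probability at least $1-\nu$;
\item[(d)] the leaf vectors satisfy $\norm{M_r-\beta_{n,r}(Q_r)}_2\le\epsn$, and the extra leaves satisfy $|\mu|\le\epsn$;
\item[(e)] $|\sigma^2-\sigma_0^2|\le\sigma_0\epsn$, which has probability $\gtrsim\epsn$ under the inverse gamma prior.
\end{itemize}
For (b), the template has depth $L_{n,r}$ and $K_{n,r}-1$ internal nodes. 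Each internal node contributes $\nu^{\dep+1}/(pa_\eta)\ge\nu^{L_{n,r}}/(pb_n)$ and each leaf at least $1-\nu$. Hence $-\log\pi_{\rm disc}\lesssim K_{n,r}(L_{n,r}+\log b_n)\lesssim K_{n,r}\log n$ by (D2). For (d), Lemma~\ref{B:lem:gaussian_small_ball} with $\norm{\beta}_2^2\le C_h^2K_{n,r}$ gives cost $\lesssim K_{n,r}\log n$ uniformly in $Q$; we integrate over $Q\in\mathcal Q_{n,r}$ using measurability of $\beta_{n,r}$.

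On this event, Proposition~\ref{B:prop:dynamic_uniform_sparse}(iii) bounds the approximation error, since every design point satisfies $\pi_{S_r}(Q_rx_i)\in D_{Q_r,r}$. The rotation perturbation adds $|h_{0r}(\pi_{S_r}(Q_rx))-h_{0r}(\pi_{S_r}(Q_{0r}x))|\le\lambda_r\sum_k(M_X\delta_{n,r})^{\alpha_{rk}}\lesssim\varepsilon_{n,r}$ by the choice \eqref{B:eq:deltan}. The empirical norm of the leaf error is at most $\norm{M_r-\beta}_\infty\le\epsn$. Summing over components and using \eqref{B:eq:balance} gives prior mass $\ge e^{-Cn\epsn^2}$.

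\textbf{Sieve and entropy.} Take $\overline\Theta_n$ to be the set of $(\mu_f,\sigma)$ such that every tree has at most $\bar K_n=\lfloor c_0n\epsn^2/\log n\rfloor$ leaves, all leaf means satisfy $|\mu|\le n$, and $\sigma\in[n^{-1},e^{n\epsn^2}]$. The tail bound is the sum of three pieces. Lemma~\ref{B:lem:tree_tail} gives $e^{-c\bar K_n\log\bar K_n}$, and since $\log\bar K_n\gtrsim\log n$ by \eqref{B:eq:tree_growth}, this is $\le e^{-En\epsn^2}$ once $c_0$ is large. This is where the independence of $E$ from $D$ in Proposition~\ref{B:prop:jeong_contraction} matters. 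The Gaussian leaf tails give $T\bar K_ne^{-n^2/2\tau^2}$, and the inverse gamma tails are handled directly.

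\textbf{Main obstacle.} The entropy bound is the hard part, because the rotation is continuous and the partition depends on $Q$ through both the routing and the data-dependent grid. The plan is to count the distinct partitions of the $n$ design points induced by one tree with at most $\bar K_n$ leaves. The discrete state $(\cT,V,C)$ takes at most $e^{O(\bar K_n\log n)}$ values by Catalan and cutpoint counting with (D2). For a fixed state, the routing of $x_i$ is determined by the sign vector of $(Qx_i)_j-\xi_{j,k}(Q)$. Here $\xi_{j,k}(Q)$ is a convex combination of $(Qx_{i'})_j$ and $(Qx_{i''})_j$ for the extremal indices $i',i''$. Fixing which design points attain the min and max for each row (at most $n^{2p}$ choices) makes these comparisons affine in the $p^2$ entries of $Q$. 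Lemma~\ref{B:lem:warren} with $m\le n\cdot pb_n\cdot n^2$ and $k=p^2$ then gives $O(\log n)$ log-patterns per tree.

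Given the partitions, $\mu_f$ lies in a linear space of dimension at most $T\bar K_n$ with coefficients bounded by $n$. Its $\epsn\sqrt n$-covering costs $O(T\bar K_n\log n)$, and $\sigma$ adds $O(\log n+n\epsn^2)$ on a log scale. Altogether $\log N\lesssim n\epsn^2$, and Proposition~\ref{B:prop:jeong_contraction} applies.
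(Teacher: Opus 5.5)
Your architecture matches the paper's. You use one template tree per component, with its rotation localized in $\mathcal Q_{n,r}$ via Lemma~\ref{B:lem:active_haar}, and root-only remaining trees. You then build a sieve on leaf count, leaf size and $\sigma$, and conclude with Proposition~\ref{B:prop:jeong_contraction}. Your prior-mass and tree-tail steps follow the paper essentially line by line.

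The genuine difference is the entropy step. The paper (Lemma~\ref{B:lem:direct_routing}) never counts cutpoint indices. It lets each $Q_t$ range over $\R^{p\times p}$ and each threshold over $\R$, so routing becomes a sign pattern of $nI_+$ affine functions of $Tp^2+I_+$ variables. This costs $(p^2+I_+)\log n$, independently of $b_n$ and of how the grid depends on $Q$. You instead pay $I_+\log b_n$ for the discrete indices, so you use the upper half of (D2) in the entropy bound; the paper needs it only for prior mass. You then linearize the rotation-dependent grid by fixing the extremal design points in each row, which leaves a $p^2$-variable arrangement per tree. This is valid: once the extremes are fixed, $npb_n$ comparisons suffice, so your extra factor $n^2$ is a harmless overcount. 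Both routes give $O(\bar K_n\log n)$. The paper's is agnostic to the threshold rule and the budget, while yours makes explicit that the continuous rotation costs only $O(p^2\log n)$ per tree.

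One step fails as written. With upper endpoint $e^{n\varepsilon_n^2}$ for $\sigma$, the inverse-gamma tail is $\Pr(\sigma^2>e^{2n\varepsilon_n^2})\asymp e^{-2a_\sigma n\varepsilon_n^2}$. Since $a_\sigma$ is a fixed hyperparameter, this need not be below $e^{-E\bar C^2 n\varepsilon_n^2/\sigma_0^2}$ for the large $E$ the criterion requires. The fix is to use the endpoint $e^{An\varepsilon_n^2}$ with $A$ large, as the paper does with $A_\sigma$. This adds only $O(An\varepsilon_n^2)$ to the entropy, which is allowed because the entropy constant $D$ may depend on the chosen sieve.
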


\begin{Bremark}[Interpretation of the rate]
For fixed $R$ and component parameters, \eqref{B:eq:epsn} gives
\begin{align*}
\epsn\asymp\max_{1\le r\le R}
\left(\frac{\log n}{n}\right)^{\bar\alpha_r/(2\bar\alpha_r+d_r)}.
\end{align*}
The factors involving $\lambda_rd_r$ are constants in $n$, while $d_r$ remains in the exponents. Localizing the rotations costs $O(\log n)$. The prescribed uniform split labels contribute $\sum_r(K_{n,r}-1)\log p=O(\sum_rK_{n,r})=o(n\epsn^2)$ to the negative log bound in Proposition~\ref{B:prop:goodset_mass}. These costs are lower order than the tree approximation cost in \eqref{B:eq:balance}. This is a function-level upper bound; it does not assert recovery of the rotations, active sets, or the possibly nonidentified additive decomposition.
\end{Bremark}

\subsection{Approximation and local rotation control}\label{B:section:4}

\subsubsection{Active-row rotation perturbation}\label{B:subsection:4.1}

The first deterministic ingredient controls how much a component template changes when the active rows of its rotation matrix are perturbed.

\begin{Blemma}[Componentwise sparse active-row rotation perturbation]
\label{B:lem:rotation_perturbation}
For every $r\le R$ and every $Q,Q'\in \SO(p)$,
\[
\norm{h_{0r}(\pi_{S_r}(Q\cdot))-h_{0r}(\pi_{S_r}(Q'\cdot))}_n
\le
C_r\lambda_rd_r
\norm{Q_{S_r,\cdot}-Q'_{S_r,\cdot}}_F^{\alpha_{r,\min}},
\]
where $C_r$ depends only on $M_X$ and the fixed smoothness vector.  In particular, if $Q\in\mathcal Q_{n,r}$, then
\[
\norm{h_{0r}(\pi_{S_r}(Q\cdot))-h_{0r}(\pi_{S_r}(Q_{0r}\cdot))}_n
\lesssim\varepsilon_{n,r}.
\]
\end{Blemma}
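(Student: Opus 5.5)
The plan is to prove a pointwise bound at each design point and then average. Fix $r$, $Q,Q'\in\SO(p)$ and a design point $x=x_{i,n}$. Put $u=\pi_{S_r}(Qx)$ and $v=\pi_{S_r}(Q'x)$. Their $k$th coordinates are $e_{s_{rk}}^\top Qx$ and $e_{s_{rk}}^\top Q'x$. Both vectors lie in $[-M_X,M_X]^{d_r}$, because $\norm{Qx}_2=\norm{x}_2\le M_X$. This is the domain on which the Hölder bound of Definition~\ref{B:def:holder} holds, so we may apply it directly:
\[
|h_{0r}(u)-h_{0r}(v)|\le\lambda_r\sum_{k=1}^{d_r}|e_{s_{rk}}^\top(Q-Q')x|^{\alpha_{rk}}.
\]

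Next we bound each coordinate difference. By Cauchy--Schwarz,
\[
|e_{s_{rk}}^\top(Q-Q')x|\le\norm{(Q-Q')_{s_{rk},\cdot}}_2\,M_X\le M_X\Delta,
\qquad
\Delta:=\norm{Q_{S_r,\cdot}-Q'_{S_r,\cdot}}_F .
\]
To pass from exponent $\alpha_{rk}$ to $\alpha_{r,\min}$, we use two cases.
\begin{itemize}
\item If $M_X\Delta\le1$, then $(M_X\Delta)^{\alpha_{rk}}\le(M_X\Delta)^{\alpha_{r,\min}}$, since $\alpha_{rk}\ge\alpha_{r,\min}$.
\item In general, $\Delta\le\norm{Q_{S_r,\cdot}}_F+\norm{Q'_{S_r,\cdot}}_F=2\sqrt{d_r}\le2\sqrt p$. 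So $t:=M_X\Delta$ lies in a bounded interval $[0,2M_X\sqrt p]$, and $t^{\alpha_{rk}}\le\max\{1,(2M_X\sqrt p)^{\alpha_{rk}-\alpha_{r,\min}}\}\,t^{\alpha_{r,\min}}$.
\end{itemize}
In either case $|e_{s_{rk}}^\top(Q-Q')x|^{\alpha_{rk}}\le C'\Delta^{\alpha_{r,\min}}$. The constant $C'$ depends only on $M_X$, $p$ and the smoothness vector; the dependence on $p$ is allowed by the paper's convention on constants.

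Summing over $k$ gives the pointwise bound
\[
|h_{0r}(\pi_{S_r}(Qx))-h_{0r}(\pi_{S_r}(Q'x))|\le C_r\lambda_rd_r\,\Delta^{\alpha_{r,\min}}
\]
at every design point. This pointwise bound is a sup-norm bound over the design, so squaring, averaging over $i$ and taking square roots yields the same bound for the empirical norm $\norm{\cdot}_n$. That is the first claim.

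For the second claim, take $Q'=Q_{0r}$ and $Q\in\mathcal Q_{n,r}$, so $\Delta\le\delta_{n,r}$. By \eqref{B:eq:deltan},
\[
C_r\lambda_rd_r\,\delta_{n,r}^{\alpha_{r,\min}}
=C_r\lambda_rd_r\,c_{\delta,r}^{\alpha_{r,\min}}\,\frac{\varepsilon_{n,r}}{\lambda_rd_r}
=C_rc_{\delta,r}^{\alpha_{r,\min}}\varepsilon_{n,r}\lesssim\varepsilon_{n,r}.
\]
Choosing $c_{\delta,r}$ small makes this constant as small as desired, which later steps may need.

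The only delicate step is the passage from exponent $\alpha_{rk}$ to $\alpha_{r,\min}$. It needs the a priori bound $\Delta\le2\sqrt{d_r}$ from the orthonormality of the rows, so that the constant does not depend on the rotations. Everything else follows directly from the Hölder condition.
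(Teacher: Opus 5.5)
Your proposal is correct and follows the paper's proof essentially step by step: the pointwise H\"older bound at each design point, Cauchy--Schwarz with $\norm{x_{i,n}}_2\le M_X$, the a priori bound $\norm{Q_{S_r,\cdot}-Q'_{S_r,\cdot}}_F\le 2\sqrt{d_r}$ to lower each exponent $\alpha_{rk}$ to $\alpha_{r,\min}$, and then the definition of $\delta_{n,r}$ for the second claim. One small remark: if you keep $2\sqrt{d_r}$ instead of weakening it to $2\sqrt p$, your constant depends only on $M_X$ and the smoothness vector (whose length is $d_r$), exactly as the statement asserts.
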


\begin{proof}
For each design point $x_{i,n}$, H\"older continuity gives
\[
|h_{0r}(\pi_{S_r}(Qx_{i,n}))-h_{0r}(\pi_{S_r}(Q'x_{i,n}))|
\le
\lambda_r\sum_{k=1}^{d_r}|e_{s_{rk}}^\top(Q-Q')x_{i,n}|^{\alpha_{rk}}.
\]
By Cauchy's inequality and \eqref{B:eq:bounded_design},
\[
|e_{s_{rk}}^\top(Q-Q')x_{i,n}|
\le\norm{(Q-Q')_{s_{rk},\cdot}}_2\norm{x_{i,n}}_2
\le M_X\norm{Q_{S_r,\cdot}-Q'_{S_r,\cdot}}_F.
\]
Taking empirical norms bounds the difference in the lemma by $\lambda_r\sum_{k=1}^{d_r}M_X^{\alpha_{rk}}\norm{(Q-Q')_{s_{rk},\cdot}}_2^{\alpha_{rk}}$. Every selected row is a unit vector, so $\norm{Q_{S_r,\cdot}-Q'_{S_r,\cdot}}_F\le2\sqrt{d_r}$. For positive distance, each power $\alpha_{rk}$ is bounded by the power $\alpha_{r,\min}$ times $(2\sqrt{d_r})^{\alpha_{rk}-\alpha_{r,\min}}$; at zero distance the claim is immediate. Hence the last bound is at most $C_r\lambda_rd_r\norm{Q_{S_r,\cdot}-Q'_{S_r,\cdot}}_F^{\alpha_{r,\min}}$, which proves the first claim.  The second follows from \eqref{B:eq:deltan} with $c_{\delta,r}$ sufficiently small.
\end{proof}

\subsubsection{Approximation in the additive rotated coordinates}\label{B:subsection:4.2}

The approximation of component $r$ is built on $\mathcal P_{n,r}(Q)$ in the active empirical box $D_{Q,r}$. Coordinates outside $S_r$ are left unsplit. Applying the same split rules beyond this box extends $\widehat h_{r,Q}$ to $\R^{d_r}$ and gives the corresponding $p$-dimensional tree function after rotation. Only values at the design points enter the approximation bound.

\begin{Blemma}[Componentwise rotated anisotropic approximation]
\label{B:lem:approx_local_ball}
Under Assumption~\ref{B:ass:dynamic_uniform}, let $\widehat h_{r,Q}$ be the step function in Proposition~\ref{B:prop:dynamic_uniform_sparse}. For all sufficiently large $n$, every $r\le R$, and every $Q\in\mathcal Q_{n,r}$,
\[
\norm{h_{0r}(\pi_{S_r}(Q_{0r}\cdot))-\widehat h_{r,Q}(\pi_{S_r}(Q\cdot))}_n
\lesssim\varepsilon_{n,r}.
\]
\end{Blemma}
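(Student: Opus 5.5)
The plan is a triangle inequality through the intermediate function $h_{0r}(\pi_{S_r}(Q\cdot))$:
\[
\norm{h_{0r}(\pi_{S_r}(Q_{0r}\cdot))-\widehat h_{r,Q}(\pi_{S_r}(Q\cdot))}_n
\le \mathrm{(I)}+\mathrm{(II)},
\]
where
\[
\mathrm{(I)}=\norm{h_{0r}(\pi_{S_r}(Q_{0r}\cdot))-h_{0r}(\pi_{S_r}(Q\cdot))}_n,
\qquad
\mathrm{(II)}=\norm{h_{0r}(\pi_{S_r}(Q\cdot))-\widehat h_{r,Q}(\pi_{S_r}(Q\cdot))}_n .
\]
Term (I) is the rotation-perturbation error and is handled directly by Lemma~\ref{B:lem:rotation_perturbation}. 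For $Q\in\mathcal Q_{n,r}$ we have $\norm{Q_{S_r,\cdot}-Q_{0r,S_r,\cdot}}_F\le\delta_{n,r}$. By the choice \eqref{B:eq:deltan} with $c_{\delta,r}$ small, $C_r\lambda_rd_r\delta_{n,r}^{\alpha_{r,\min}}\lesssim\varepsilon_{n,r}$, so (I) $\lesssim\varepsilon_{n,r}$ uniformly over the neighborhood.

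For term (II), the key observation is that the empirical norm only evaluates the function at design points. For each $i$, the point $u_i=\pi_{S_r}(Qx_{i,n})$ lies in the active empirical box $D_{Q,r}=\prod_{j\in S_r}[m_{rj}(Q),M_{rj}(Q)]$, since that box is defined by the coordinatewise minima and maxima of exactly these points. Two facts are needed here.
\begin{itemize}
\item[(a)] $D_{Q,r}\subseteq[-M_X,M_X]^{d_r}$, because $Q$ preserves Euclidean norm. Hence $h_{0r}$ is defined there and the Hölder bound applies.
\item[(b)] The extension of $\widehat h_{r,Q}$ described before the lemma, which applies the same split rules on $\R^{d_r}$, agrees on $D_{Q,r}$ with the step function of Proposition~\ref{B:prop:dynamic_uniform_sparse}. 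This holds because the leaves of $\mathcal P_{n,r}(Q)$ partition $D_{Q,r}$ under the stated left/right convention, including boundary points.
\end{itemize}
With these, each summand satisfies $|h_{0r}(u_i)-\widehat h_{r,Q}(u_i)|\le\norm{h_{0r}-\widehat h_{r,Q}}_{\infty,D_{Q,r}}$. Averaging gives (II) $\le\norm{h_{0r}-\widehat h_{r,Q}}_{\infty,D_{Q,r}}$, and Proposition~\ref{B:prop:dynamic_uniform_sparse}(iii) bounds this by $C_A\lambda_rd_r2^{-L_{n,r}\bar\alpha_r/d_r}\lesssim\varepsilon_{n,r}$. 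That proposition holds for all large $n$ and every $Q\in\mathcal Q_{n,r}$ under Assumption~\ref{B:ass:dynamic_uniform}. Its last relation uses $K_{n,r}=2^{L_{n,r}}$ and \eqref{B:eq:Kb}, which give $\lambda_rd_rK_{n,r}^{-\bar\alpha_r/d_r}\asymp\varepsilon_{n,r}$.

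The main point requiring care is (b): the sup-norm bound in Proposition~\ref{B:prop:dynamic_uniform_sparse}(iii) is stated on $D_{Q,r}$, so I must confirm that no design point escapes the box or falls in a region where the extension differs. I would handle this by noting that membership is immediate from the definition of the empirical extrema. I would also check that boundary points (equal to $M_{rj}(Q)$) are routed to a leaf whose closure contains them, so the Hölder estimate in (iii), which is based on leaf side lengths, still applies.

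Adding (I) and (II), with constants independent of $n$ and of $Q$ in the neighborhood, proves the lemma.
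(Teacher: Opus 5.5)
Your proposal is correct and matches the paper's proof: a triangle inequality through $h_{0r}(\pi_{S_r}(Q\cdot))$, with the rotation term bounded by Lemma~\ref{B:lem:rotation_perturbation}. The approximation term is bounded by the sup-norm error on $D_{Q,r}$ from Proposition~\ref{B:prop:dynamic_uniform_sparse}(iii), since every design point projects into $D_{Q,r}$; your checks (a) and (b) make explicit details the paper leaves implicit.
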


\begin{proof}
By Proposition~\ref{B:prop:dynamic_uniform_sparse}(iii), the uniform approximation error on $D_{Q,r}$ is $O(\varepsilon_{n,r})$. For every design point, $\pi_{S_r}(Qx_{i,n})\in D_{Q,r}$ by definition, and hence
\[
\norm{h_{0r}(\pi_{S_r}(Q\cdot))-\widehat h_{r,Q}(\pi_{S_r}(Q\cdot))}_n
\le
\norm{h_{0r}-\widehat h_{r,Q}}_{\infty,D_{Q,r}}
\lesssim\varepsilon_{n,r}.
\]
Lemma~\ref{B:lem:rotation_perturbation} controls the difference between $Q$ and $Q_{0r}$.  The triangle inequality proves the claim.
\end{proof}

For every tuple $(Q_1,\ldots,Q_R)$ with $Q_r\in\mathcal Q_{n,r}$, summing Lemma~\ref{B:lem:approx_local_ball} over the fixed number of components gives
\[
\norm{f_0-\sum_{r=1}^R\widehat h_{r,Q_r}(\pi_{S_r}(Q_r\cdot))}_n
\lesssim
\sum_{r=1}^R\varepsilon_{n,r}
=\epsn.
\]

\subsection{Kullback--Leibler prior mass}\label{B:section:5}

The prior-mass lower bound is built from a good set in which the first $R$ trees carry the $R$ additive components and the remaining $T-R$ trees are nearly null.  This construction exploits the redundancy of the forest without requiring every tree-specific rotation to be close to a truth frame.

\subsubsection{\texorpdfstring{An $R$-signal-tree good set}{An R-signal-tree good set}}\label{B:subsection:5.1}

For $r\le R$, use the fixed state $\mathfrak t_{n,r}$, partition $\mathcal P_{n,r}(Q)$, and ordered coefficient center $\beta_{n,r}(Q)$ from Proposition~\ref{B:prop:dynamic_uniform_sparse}. In particular,
\begin{equation}
\norm{\beta_{n,r}(Q)}_\infty\le C_h,
\qquad
\norm{\beta_{n,r}(Q)}_2^2\le C_h^2K_{n,r}.
\label{B:eq:beta_bound}
\end{equation}
Use the same left-to-right leaf order for $M_r$ and $\beta_{n,r}(Q_r)$. Define the nested events
\[
A_Q=\bigcap_{r=1}^R\{Q_r\in\mathcal Q_{n,r}\},
\qquad
A_T=A_Q\cap\bigcap_{r=1}^R\{(\cT_r,V_r,C_r)=\mathfrak t_{n,r}\},
\]
\[
A_M=A_T\cap\bigcap_{r=1}^R
\{\norm{M_r-\beta_{n,r}(Q_r)}_2\le c_1\varepsilon_{n,r}\}.
\]
The coefficient comparisons are evaluated only on $A_T$, so their centers are defined and their dimensions agree. For $t>R$, let $\mu_t$ denote the unique leaf mean when $K_t=1$, and set
\[
A_{\mathrm{null}}=\bigcap_{t=R+1}^T\{K_t=1,\ |\mu_t|\le c_2\epsn/T\},
\qquad
A_\sigma=\{|\sigma^2-\sigma_0^2|\le c_3\sigma_0\epsn\}.
\]
Finally, let $A_n=A_M\cap A_{\mathrm{null}}\cap A_\sigma$. All constants $c_1,c_2,c_3$ are fixed and positive. Proposition~\ref{B:prop:dynamic_uniform_sparse}(iv) makes these events measurable. Empty intersections equal the whole generating space when $T=R$.

\begin{Bproposition}[Approximation on the good set]
\label{B:prop:goodset_approx}
For sufficiently small fixed constants $c_1,c_2,c_3>0$ and all sufficiently large $n$, every ensemble in $A_n$ induces a pair $(f,\sigma)$ satisfying
\[
\norm{f-f_0}_n\lesssim\epsn,
\qquad
|\sigma-\sigma_0|\lesssim\epsn.
\]
Consequently,
\[
\KL(P_0^{(n)},P_{f,\sigma}^{(n)})
+
\Var(P_0^{(n)},P_{f,\sigma}^{(n)})
\lesssim n\epsn^2.
\]
\end{Bproposition}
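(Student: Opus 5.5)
The plan is to work directly with the function evaluated at the design points. Fix an ensemble in $A_n$ with rotations $Q_1,\ldots,Q_R$, where $Q_r\in\mathcal Q_{n,r}$. On $A_T$ the discrete state of tree $r\le R$ equals $\mathfrak t_{n,r}$. Proposition~\ref{B:prop:dynamic_uniform_sparse}(ii) and (iv) show that this state is valid under $Q_r$. Its numerical partition $\mathcal P_{n,r}(Q_r)$ splits only coordinates in $S_r$ and is evaluated on the full prior grid, whose active columns coincide with $Z_{n,r}(Q_r)$. Hence tree $r$ evaluated at $x_{i,n}$ equals the step function with leaf values $M_r$ on $\mathcal P_{n,r}(Q_r)$, applied to $\pi_{S_r}(Q_rx_{i,n})$. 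Write this function as $\tilde h_r$. The comparison function $\widehat h_{r,Q_r}$ has coefficients $\beta_{n,r}(Q_r)$ in the same leaf order. We then decompose
\[
f-f_0=\sum_{r=1}^R\bigl\{\tilde h_r-\widehat h_{r,Q_r}\bigr\}\circ\pi_{S_r}(Q_r\cdot)
+\Bigl\{\sum_{r=1}^R\widehat h_{r,Q_r}(\pi_{S_r}(Q_r\cdot))-f_0\Bigr\}
+\sum_{t>R}\mu_t .
\]

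The three terms are bounded in turn. The middle term is $\lesssim\epsn$ by the display following Lemma~\ref{B:lem:approx_local_ball}, and this holds uniformly over the local rotation neighborhoods. For the first term, the leaves partition the design points, so
\[
\norm{(\tilde h_r-\widehat h_{r,Q_r})\circ\pi_{S_r}(Q_r\cdot)}_n^2=\frac1n\sum_{\Omega}n_\Omega\,(M_{r\Omega}-\beta_{n,r,\Omega})^2\le\norm{M_r-\beta_{n,r}(Q_r)}_\infty^2\le c_1^2\varepsilon_{n,r}^2 ,
\]
where $n_\Omega$ is the number of design points in leaf $\Omega$. The crude sup-bound suffices here because the empirical weights $n_\Omega/n$ sum to one, so the count $K_{n,r}$ never enters. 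The null trees contribute constant functions with $\sum_{t>R}|\mu_t|\le c_2\epsn$. Summing the three bounds gives $\norm{f-f_0}_n\lesssim\epsn$.

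For $\sigma$, we have $|\sigma-\sigma_0|=|\sigma^2-\sigma_0^2|/(\sigma+\sigma_0)\le c_3\epsn$. Because $\epsn\to0$, $\sigma$ lies in a fixed neighborhood of $\sigma_0$ for large $n$. The Kullback--Leibler and variance bound then follows from the final inequality of Lemma~\ref{B:lem:gaussian_kl}, which gives a bound of order $n\{\norm{f-f_0}_n^2+|\sigma-\sigma_0|^2\}\lesssim n\epsn^2$.

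The step needing the most care is the identification of tree $r$ with the step function on $\mathcal P_{n,r}(Q_r)$. There are two points to check. First, the prior's grid index $c=k-1$ must agree with the global rank stored in $\mathfrak t_{n,r}$. Second, the routing of design points must follow the $u_j<\tau$ left convention used in the definition of the split-net. Both follow from how $\mathfrak t_{n,r}$ was defined, and the approximation bound of Lemma~\ref{B:lem:approx_local_ball} holds for every point of $D_{Q,r}$, so the boundary convention does not matter.
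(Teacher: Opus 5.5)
Your proposal is correct and follows the paper's proof essentially step for step: the same three-term split into coefficient error on the common partition, rotated approximation error via Lemma~\ref{B:lem:approx_local_ball}, and null-tree constants, followed by $|\sigma-\sigma_0|=|\sigma^2-\sigma_0^2|/(\sigma+\sigma_0)$ and Lemma~\ref{B:lem:gaussian_kl}. The differences are cosmetic. You bound the first term through $\norm{M_r-\beta_{n,r}(Q_r)}_\infty$, whereas the paper bounds it directly by the Euclidean norm that defines $A_M$. Your explicit check of the grid-index and routing conventions spells out what the paper compresses into ``the two functions live on the same partition.''
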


\begin{proof}
Write
\[
f(x)=\sum_{r=1}^Rg_r(Q_rx)+\sum_{t=R+1}^T\mu_t,
\]
where $g_r$ uses the discrete state $\mathfrak t_{n,r}$ under $Q_r$ and leaf vector $M_r$.  Then
\begin{align*}
\norm{f-f_0}_n
&\le
\sum_{r=1}^R
\norm{g_r(Q_r\cdot)-\widehat h_{r,Q_r}(\pi_{S_r}(Q_r\cdot))}_n\\
&\quad+
\sum_{r=1}^R
\norm{\widehat h_{r,Q_r}(\pi_{S_r}(Q_r\cdot))-h_{0r}(\pi_{S_r}(Q_{0r}\cdot))}_n
+
\sum_{t=R+1}^T|\mu_t|.
\end{align*}
For the first sum, let $d_{M,r}=M_r-\beta_{n,r}(Q_r)$ and let $n_{r,k}$ be the number of design points falling in leaf $k$.  Since the two functions live on the same partition,
\[
\norm{g_r(Q_r\cdot)-\widehat h_{r,Q_r}(\pi_{S_r}(Q_r\cdot))}_n^2
=
\frac1n\sum_{k=1}^{K_{n,r}}n_{r,k}d_{M,r,k}^2
\le\norm{M_r-\beta_{n,r}(Q_r)}_2^2.
\]
The second sum is $O(\sum_r\varepsilon_{n,r})$ by Lemma~\ref{B:lem:approx_local_ball}.  The null-tree sum is $O(\epsn)$ because $T$ is fixed.  This proves the function bound. On $A_\sigma$, $|\sigma-\sigma_0|=|\sigma^2-\sigma_0^2|/(\sigma+\sigma_0)\le c_3\epsn$. Lemma~\ref{B:lem:gaussian_kl} then gives the final assertion.
\end{proof}

\begin{Bproposition}[Prior mass of the good set]
\label{B:prop:goodset_mass}
Under Assumption~\ref{B:ass:dynamic_uniform}, there exists $C>0$ such that
\[
\mathbb P_n(A_n)\ge e^{-Cn\epsn^2}
\]
for all sufficiently large $n$.
\end{Bproposition}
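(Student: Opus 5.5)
The plan is to factor $\mathbb P_n(A_n)$ along the nested construction of the good set. The rotations, discrete trees, leaf means and $\sigma^2$ are generated sequentially under (P1)--(P5), and the trees are independent given the rotations. We therefore write
\[
\mathbb P_n(A_n)=\mathbb P_n(A_\sigma)\,\E\Bigl[\ind_{A_Q}\prod_{r\le R}\mathbb P_n\{(\cT_r,V_r,C_r)=\mathfrak t_{n,r}\mid Q_r\}\,\mathbb P_n(\norm{M_r-\beta_{n,r}(Q_r)}_2\le c_1\varepsilon_{n,r}\mid Q_r,\mathfrak t_{n,r})\Bigr]\prod_{t>R}\mathbb P_n(K_t=1,|\mu_t|\le c_2\epsn/T).
\]
The goal is to show that each factor is at least $e^{-C\,n\epsn^2}$, or $e^{-C\log n}$, or $e^{-CK_{n,r}\log n}$, uniformly over $Q_r\in\mathcal Q_{n,r}$. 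Then \eqref{B:eq:balance} and \eqref{B:eq:rate_arithmetic} absorb everything into $e^{-Cn\epsn^2}$. The uniformity over $Q_r$ matters because it lets us bound the conditional factors inside the expectation by constants and pull them out.

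\textbf{Rotations, null trees and noise.} By (P1) and independence, $\mathbb P_n(A_Q)=\prod_r H_+(\mathcal Q_{n,r})$. Lemma~\ref{B:lem:active_haar} with $r=\delta_{n,r}$ and $\log(1/\delta_{n,r})\lesssim\log n$ gives a cost of $O(\log n)$ per component. For each $t>R$, the root stays terminal with probability at least $1-\nu$. Given $K_t=1$, the leaf mean $\mu_t\sim N(0,\tau^2)$ satisfies $\Pr(|\mu_t|\le c_2\epsn/T)\gtrsim\epsn$, which costs $O(\log n)$. Since the inverse-gamma density is continuous and positive near $\sigma_0^2$, $\mathbb P_n(A_\sigma)\gtrsim\epsn$, again a cost of $O(\log n)$.

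\textbf{Discrete template.} Fix $Q_r\in\mathcal Q_{n,r}$. By Proposition~\ref{B:prop:dynamic_uniform_sparse}(ii) and (iv), $\mathfrak t_{n,r}$ is valid and has $K_{n,r}-1$ internal nodes, all at depth at most $L_{n,r}$. The explicit prior formula then gives
\[
\pi_{\rm disc}(\mathfrak t_{n,r}\mid Q_r)\ge\prod_{\eta}\frac{\nu^{L_{n,r}+1}}{p\,b_n}\,(1-\nu)^{K_{n,r}}.
\]
Here each node uses $a_\eta\le b_n$, and each terminal factor is at least $1-\nu$. Taking logarithms yields $-\log\pi_{\rm disc}\lesssim K_{n,r}(L_{n,r}+\log p+\log b_n)$. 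Since $L_{n,r}\lesssim\log n$ and $\log b_n\le C_b\log n$ by (D2), this is $\lesssim K_{n,r}\log n\asymp n\varepsilon_{n,r}^2$.

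\textbf{Leaf means.} Given the template, $M_r\sim N(0,\tau^2I_{K_{n,r}})$. Lemma~\ref{B:lem:gaussian_small_ball} with $m=K_{n,r}$, radius $c_1\varepsilon_{n,r}$ and center $b=\beta_{n,r}(Q_r)$ applies, since \eqref{B:eq:beta_bound} gives $\norm b_2^2\le C_h^2K_{n,r}$. It yields a cost of $\lesssim K_{n,r}\log(K_{n,r}/\varepsilon_{n,r})+K_{n,r}\lesssim K_{n,r}\log n$, uniformly in $Q_r$. Proposition~\ref{B:prop:dynamic_uniform_sparse}(iv) supplies the measurability needed to integrate these bounds over $A_Q$.

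\textbf{Main obstacle.} The delicate step is the template probability. It requires the lower bound to hold uniformly in $Q_r$ over the shrinking neighbourhood, with no leaf-occupancy issues and with every prescribed split admissible. This is exactly what Proposition~\ref{B:prop:dynamic_uniform_sparse}(ii) guarantees for large $n$, and it is what keeps the counts $a_\eta\le b_n$ and the terminal factors controlled independently of $Q_r$. Summing the logarithmic costs over the fixed number $R$ of components and $T-R$ null trees completes the bound.
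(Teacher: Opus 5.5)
Your proposal is correct and follows the paper's proof essentially step by step. Both arguments factor the probability over independent prior blocks, bound the template probability and the Gaussian small-ball probability uniformly over $Q_r\in\mathcal Q_{n,r}$, integrate these bounds against the Haar mass of $A_Q$, and absorb the $O(\log n)$ costs from the rotations, null trees and noise into $n\epsn^2$. The only difference is cosmetic: you bound each internal split factor crudely by $\nu^{L_{n,r}+1}$, while the paper sums the depths exactly. Your bound still yields $O(K_{n,r}\log n)$ because $L_{n,r}\lesssim\log n$.
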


\begin{proof}
Fix $r\le R$ and $Q_r\in\mathcal Q_{n,r}$. The state $\mathfrak t_{n,r}$ has $K_{n,r}=2^{L_{n,r}}$ leaves and $K_{n,r}-1$ internal nodes. Every prescribed label and cut is available by Proposition~\ref{B:prop:dynamic_uniform_sparse}(ii). In the discrete mass formula of Section~\ref{B:subsection:2.3}, $a_\eta\le b_n$ and each terminal factor is at least $1-\nu$. The complete-depth topology satisfies $\sum_{\ell=0}^{L_{n,r}-1}(\ell+1)2^\ell=(L_{n,r}-1)K_{n,r}+1$. Hence $\mathbb P_n\{(\cT_r,V_r,C_r)=\mathfrak t_{n,r}\mid Q_r\}\ge(1-\nu)^{K_{n,r}}\nu^{(L_{n,r}-1)K_{n,r}+1}(pb_n)^{-(K_{n,r}-1)}$. Taking negative logarithms gives $-\log\mathbb P_n\{(\cT_r,V_r,C_r)=\mathfrak t_{n,r}\mid Q_r\}\le C K_{n,r}(L_{n,r}+\log b_n+1)+(K_{n,r}-1)\log p\le C K_{n,r}\log n$, using fixed $p$ and \textnormal{(D2)}.

Given the actual rotations and discrete states, \textnormal{(P4)} gives $M_r\sim N(0,\tau^2I_{K_{n,r}})$ on the prescribed state. By \eqref{B:eq:beta_bound} and Lemma~\ref{B:lem:gaussian_small_ball}, $\mathbb P_n\{\norm{M_r-\beta_{n,r}(Q_r)}_2\le c_1\varepsilon_{n,r}\mid Q_r,(\cT_r,V_r,C_r)=\mathfrak t_{n,r}\}\ge\exp(-C K_{n,r}\log n)$ uniformly over $Q_r\in\mathcal Q_{n,r}$. Here $\log K_{n,r}+\log(1/\varepsilon_{n,r})=O(\log n)$, and the fixed conditional Gaussian kernel depends on the structural state only through its leaf count.

Let $H$ be normalized Haar measure on $\SO(p)$. To integrate the moving centers, tree independence gives $\mathbb P_n(A_M)=\int_{\prod_{r=1}^R\mathcal Q_{n,r}}\prod_{r=1}^R\mathbb P_n\{(\cT_r,V_r,C_r)=\mathfrak t_{n,r},\ \norm{M_r-\beta_{n,r}(Q_r)}_2\le c_1\varepsilon_{n,r}\mid Q_r\}\,\prod_{r=1}^R H(dQ_r)$, where each coefficient comparison is evaluated only on the prescribed state. The two uniform bounds above imply $\mathbb P_n(A_M)\ge\exp\{-C\sum_{r=1}^R K_{n,r}\log n\}\mathbb P_n(A_Q)$. By \textnormal{(P1)} and Lemma~\ref{B:lem:active_haar}, $-\log\mathbb P_n(A_Q)\lesssim\sum_r\log(1/\delta_{n,r})\lesssim\log n$.

For $t>R$, not attempting a root split gives $\mathbb P_n(K_t=1\mid Q_t)\ge1-\nu$. For $Z\sim N(0,\tau^2)$ and $0<a\le1$, $\Pr(|Z|\le a)\ge2a(2\pi\tau^2)^{-1/2}\exp\{-1/(2\tau^2)\}$. With $a=c_2\epsn/T$, this yields $\mathbb P_n\{K_t=1,\ |\mu_t|\le c_2\epsn/T\mid Q_t\}\ge c\epsn$ uniformly in $Q_t$. Integrating over these unrestricted rotations and using independence across trees gives $\mathbb P_n(A_{\mathrm{null}})\ge(c\epsn)^{T-R}$. The bound is one when $T=R$.

Let $p_V$ be the prior density of $V=\sigma^2$. For sufficiently large $n$, the variance interval in $A_\sigma$ lies in $[\sigma_0^2/2,3\sigma_0^2/2]$. Positivity and continuity of $p_V$ give $\mathbb P_n(A_\sigma)\ge2c_3\sigma_0\epsn\inf_{v\in[\sigma_0^2/2,3\sigma_0^2/2]}p_V(v)\ge c\epsn$.

The signal-tree event, the remaining-tree event, and the noise event concern independent prior blocks under \textnormal{(P1)}--\textnormal{(P5)}. Multiplying the established bounds, $-\log\mathbb P_n(A_n)\le C\sum_{r=1}^R K_{n,r}\log n+C\log n\lesssim n\epsn^2$, by \eqref{B:eq:balance} and \eqref{B:eq:rate_arithmetic}.
\end{proof}

\Needspace{10\baselineskip}
\begin{Bcorollary}[Kullback--Leibler and Gaussian local prior mass]
\label{B:cor:KL_mass}
There exists a constant $A_0\ge1$ such that
\[
\Pi_n\left\{
\norm{f-f_0}_n\le A_0\epsn,
\quad
|\sigma^2-\sigma_0^2|\le\sigma_0A_0\epsn
\right\}
\ge e^{-Cn\epsn^2}.
\]
The same set is contained in a Kullback--Leibler neighborhood satisfying
\[
\KL(P_0^{(n)},P_{f,\sigma}^{(n)})
\vee
\Var(P_0^{(n)},P_{f,\sigma}^{(n)})
\le Cn\epsn^2.
\]
\end{Bcorollary}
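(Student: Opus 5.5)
The corollary follows from the good set $A_n$ of Section~\ref{B:subsection:5.1}. I will show that $A_n$ is contained in the displayed event, and then use monotonicity of prior measure together with Proposition~\ref{B:prop:goodset_mass}.

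\textbf{Function constraint.} Fix an ensemble in $A_n$. Proposition~\ref{B:prop:goodset_approx} gives $\norm{f-f_0}_n\le C_0\epsn$ for a constant $C_0$ that depends only on the fixed quantities, for all sufficiently large $n$. Looking at its proof, $C_0$ is assembled from three pieces:
\begin{itemize}
\item $c_1$ times the number of components, from the coefficient perturbation;
\item the approximation constant from Lemma~\ref{B:lem:approx_local_ball}, which in turn combines Proposition~\ref{B:prop:dynamic_uniform_sparse}(iii) with Lemma~\ref{B:lem:rotation_perturbation};
\item $c_2$, from the null trees.
\end{itemize}
None of these depends on $n$ or on the rotations within $\mathcal Q_{n,r}$.

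\textbf{Variance constraint.} The event $A_\sigma$ gives $|\sigma^2-\sigma_0^2|\le c_3\sigma_0\epsn$ directly.

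\textbf{Choice of $A_0$ and the mass bound.} Set $A_0=\max\{1,C_0,c_3\}$. Then $A_n$ is contained in the event in the display, so its $\Pi_n$-probability is at least $\mathbb P_n(A_n)\ge e^{-Cn\epsn^2}$. The prior $\Pi_n$ on $(f,\sigma)$ is the pushforward of $\mathbb P_n$, so this is just a monotonicity statement for the pushforward measure.

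Measurability of the events follows from Proposition~\ref{B:prop:dynamic_uniform_sparse}(iv). Measurability of $\{\norm{f-f_0}_n\le A_0\epsn\}$ holds because $f$ evaluated at the finitely many design points is a measurable function of the forest parameters.

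\textbf{Kullback--Leibler statement.} Take any $(f,\sigma)$ in the displayed set. Since $\epsn\to0$ and $\sigma_0>0$, for large $n$ we have $\sigma^2\in[\sigma_0^2/2,3\sigma_0^2/2]$, and
\[
|\sigma-\sigma_0|=\frac{|\sigma^2-\sigma_0^2|}{\sigma+\sigma_0}\le A_0\epsn.
\]
This keeps $\sigma$ in a fixed neighborhood of $\sigma_0$. Lemma~\ref{B:lem:gaussian_kl} then gives
\[
\KL\vee\Var\lesssim n\bigl(\norm{f-f_0}_n^2+|\sigma-\sigma_0|^2\bigr)\le 2A_0^2\,n\epsn^2.
\]
Absorbing $2A_0^2$ into $C$ completes the argument.

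\textbf{Main point of care.} The only step needing attention is confirming that $C_0$ is uniform in $n$ and in the rotations on $A_Q$. This is what lets a single fixed $A_0$ work; the remaining steps are direct consequences of results already established.
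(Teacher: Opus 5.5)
Your proposal is correct and follows essentially the same route as the paper. You place the good set $A_n$ inside the displayed neighborhood by taking $A_0\ge\max\{1,C_0,c_3\}$, transfer the mass bound of Proposition~\ref{B:prop:goodset_mass} to the induced prior, and then use $|\sigma-\sigma_0|\le A_0\epsn$ on the whole neighborhood so that Lemma~\ref{B:lem:gaussian_kl} controls both $\KL$ and $\Var$.
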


\begin{proof}
Proposition~\ref{B:prop:goodset_approx} gives $\norm{f-f_0}_n\le C\epsn$ on $A_n$, while $A_\sigma$ directly gives $|\sigma^2-\sigma_0^2|\le c_3\sigma_0\epsn$. Choose $A_0\ge\max\{1,C,c_3\}$. The good set lies in the preimage of the displayed neighborhood, so Proposition~\ref{B:prop:goodset_mass} gives the stated bound for the induced prior $\Pi_n$. For every pair in that entire neighborhood, $|\sigma-\sigma_0|=|\sigma^2-\sigma_0^2|/(\sigma+\sigma_0)\le A_0\epsn$. This places $\sigma$ in a fixed positive neighborhood of $\sigma_0$ for all sufficiently large $n$, and Lemma~\ref{B:lem:gaussian_kl} gives both remaining bounds.
\end{proof}

\subsection{Direct entropy of the additive rotated forest sieve}\label{B:section:6}

For fixed topology and split labels, empirical routing is determined by affine comparisons in the tree-specific rotation matrices and thresholds. We count these routing patterns and then cover the leaf coefficients.

\subsubsection{The sieve}\label{B:subsection:6.1}

Let $K_+=\sum_{t=1}^TK_t$ be the total number of leaves across the forest.  Define
\begin{align*}
\Theta_n=\Frot(\bar K_n,M_n)\times I_n,
\end{align*}
where
\[
\bar K_n=\left\lceil M_K\frac{n\epsn^2}{\log n}\right\rceil,
\qquad
M_n=n^{A_\mu},
\qquad
I_n=[n^{-A_\sigma},e^{A_\sigma n\epsn^2}],
\]
and
\[
\Frot(\bar K_n,M_n)
=
\left\{
 f(x)=\sum_{t=1}^Tg_t(Q_tx):Q_t\in \SO(p),\ K_+\le\bar K_n,\ \max_{t\le T}\norm{M_t}_\infty\le M_n
\right\}.
\]
The coefficient constraint bounds every leaf mean. Membership means that at least one representation satisfying these bounds and the declared grid and admissibility rules exists. No split-label support restriction is needed because $p$ is fixed.

For use with Proposition~\ref{B:prop:jeong_contraction}, define the image sieve
\[
\overline\Theta_n
=
\{(\mu_f,\sigma):(f,\sigma)\in\Theta_n\}.
\]
Let $\mathfrak R_n$ be the finite set of routing configurations realized by this sieve, including topology and leaf order. For $\mathcal R\in\mathfrak R_n$, stack the leaf vectors $M_t$ in tree-by-tree order as $\beta$, and let $H_{\mathcal R}$ be the leaf-incidence matrix. Its $(i,(t,q))$ entry is one exactly when observation $i$ reaches leaf $q$ of tree $t$. Thus $\mu_f=H_{\mathcal R}\beta$, and $H_{\mathcal R}$ has $K_+(\mathcal R)$ columns. The image sieve is exactly $\overline\Theta_n=\bigcup_{\mathcal R\in\mathfrak R_n}\{H_{\mathcal R}[-M_n,M_n]^{K_+(\mathcal R)}\}\times I_n$: every coefficient vector in a cube can be paired with a fixed valid realization of that routing. This finite union of compact linear images, crossed with $I_n$, is compact and Borel measurable. No continuity of hard routing in $Q_t$ is used.

\subsubsection{Routing maps generated by shared rotations}\label{B:subsection:6.2}

Fix a forest topology with $I_+$ internal nodes and $K_+$ leaves, where $I_+=K_+-T\le\bar K_n$.  Also fix a split-label sequence $j(\ell)\in\{1,\ldots,p\}$ for internal nodes $\ell=1,\ldots,I_+$.  For node $\ell$, let $t(\ell)$ denote the tree containing it.  Conditional on topology and labels, the routing of the design points is determined by the binary indicators
\begin{equation}
\1\{e_{j(\ell)}^\top Q_{t(\ell)}x_{i,n}-\tau_\ell\ge0\},
\qquad i=1,\dots,n,
\quad \ell=1,\dots,I_+,
\label{B:eq:routing_signs}
\end{equation}
where $\tau_\ell$ is the numerical threshold at node $\ell$. Recording comparisons even at nodes that an observation does not visit can only enlarge the collection of routing descriptions.

The quantities $e_{j(\ell)}^\top Q_{t(\ell)}$ are rows of shared tree-specific rotations, not independent node-specific normals.  For an upper bound, we ignore the orthogonality constraints and allow every $Q_t$ to range freely over $\R^{p\times p}$; we also allow all thresholds to be arbitrary real numbers.  Both enlargements can only increase the number of sign patterns.

\begin{Blemma}[Direct routing count for rotation forests]
\label{B:lem:direct_routing}
For $n\ge2$ and fixed topology and split-label sequence,
the number of empirical routing maps generated by tree-specific
rotations and thresholds is at most
$\exp\{C(p^2+I_+)\log n\}$,
where $C$ may depend on the fixed $T$ and $p$.
\end{Blemma}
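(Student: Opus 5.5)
The count reduces to Lemma~\ref{B:lem:warren}, after the routing indicators in \eqref{B:eq:routing_signs} are rewritten as sign patterns of affine functions in one joint parameter vector. Fix the topology and split-label sequence. Collect all free parameters into
\[
z=(Q_1,\ldots,Q_T,\tau_1,\ldots,\tau_{I_+})\in\R^{k},\qquad k=Tp^2+I_+ .
\]
As already noted before the lemma, we drop orthogonality and let each $Q_t$ range over $\R^{p\times p}$ and each threshold over $\R$; this can only increase the number of patterns.

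For each pair $(i,\ell)$ define
\[
a_{i\ell}(z)=e_{j(\ell)}^\top Q_{t(\ell)}x_{i,n}-\tau_\ell .
\]
It is linear in the entries of row $j(\ell)$ of $Q_{t(\ell)}$, since $x_{i,n}$ is fixed, and linear in $\tau_\ell$. Hence each $a_{i\ell}$ is an affine function on $\R^k$.

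The empirical routing map is a deterministic function of the binary vector $(\1\{a_{i\ell}(z)\ge0\})_{i\le n,\ \ell\le I_+}$. Indeed, each observation's leaf in each tree is read off by following the comparisons along its path. Recording comparisons at unvisited nodes only refines the description, as the text before the lemma remarks. So the number of routing maps is at most the number of realized sign vectors, taking $A$ to be the whole parameter region.

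Now apply Lemma~\ref{B:lem:warren} with $m=nI_+$ affine functions in dimension $k=Tp^2+I_+$. Consider the case $m\ge k$ first: the count is at most $(em/k)^k$, and since $m/k\le n$ this is at most $(en)^k$. Its logarithm is therefore $k\log(en)\le C(p^2+I_+)\log n$ for $n\ge2$, with $C$ depending on $T$. In the case $m<k$ the bound is $2^m\le 2^k$, which is smaller still. If $I_+=0$ there are no comparisons, so there is a single routing map and the bound is trivial.

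The only point that needs care is justifying that routing depends on $z$ solely through these signs, given the split-equality convention. The paper sends $u_j\ge\tau$ to the right child, which matches the $\ge0$ indicators exactly. Together with the grid-based thresholds being special real values, this allows the enlargement to arbitrary reals. Everything else is a direct substitution into Lemma~\ref{B:lem:warren}.
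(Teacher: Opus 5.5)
Your proposal is correct and matches the paper's proof essentially step for step. Both relax $Q_t$ to $\R^{p\times p}$ and the thresholds to $\R$, and both count $m=nI_+$ affine comparisons in $k=Tp^2+I_+$ variables. Both then apply Lemma~\ref{B:lem:warren} in the same two cases using $m/k\le n$, and treat $I_+=0$ and the right-child equality convention the same way.
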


\begin{proof}
If $I_+=0$, there is one routing map.
Suppose $I_+\ge1$.
The routing decisions are the binary indicators in \eqref{B:eq:routing_signs},
with equality assigned to the right child.
Allowing $Q_1,\ldots,Q_T$ to range over $\mathbb R^{p\times p}$
and all thresholds to range over $\mathbb R$ can only increase
the number of routing maps.
The $m=nI_+$ comparisons are affine functions of
$k=Tp^2+I_+$ real variables.
If $m\ge k$, Lemma~\ref{B:lem:warren} bounds the number of binary patterns by
$(em/k)^k\le(en)^k$, since $m/k\le n$.
If $m<k$, the bound $2^m\le2^k$ gives the same logarithmic order.
Each binary pattern determines all leaf assignments in the fixed
forest topology. Since $T$ is fixed, the stated bound follows.
\end{proof}

\begin{Bproposition}[Entropy of the additive rotation forest sieve]
\label{B:prop:sieve_entropy}
For every fixed $C>0$,
\[
\log N\left(C\epsn,\Frot(\bar K_n,M_n),\norm{\cdot}_n\right)
\lesssim
\bar K_n\log n+p^2\log n.
\]
Consequently,
\[
\log N\left(C\epsn,\Frot(\bar K_n,M_n),\norm{\cdot}_n\right)
\lesssim n\epsn^2.
\]
\end{Bproposition}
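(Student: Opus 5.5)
The plan is a standard cover: first over the finitely many discrete routing configurations, then, within each configuration, over the leaf-coefficient cube through the linear map $\beta\mapsto H_{\mathcal R}\beta$.

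\textbf{Step 1: count routing configurations.} Fix the total leaf count $K_+\le\bar K_n$. The number of forest topologies (ordered full binary trees, distributed over $T$ trees) with $K_+$ leaves is at most $\binom{K_++T}{T}4^{K_+}$. The number of split-label sequences is at most $p^{I_+}\le p^{\bar K_n}$. For each topology and label sequence, Lemma~\ref{B:lem:direct_routing} gives at most $\exp\{C(p^2+I_+)\log n\}$ empirical routing maps. Summing over $K_+\le\bar K_n$, we get
\[
\log|\mathfrak R_n|\lesssim \log\bar K_n+\bar K_n(\log 4+\log p)+(p^2+\bar K_n)\log n\lesssim \bar K_n\log n+p^2\log n.
\]
Enlarging the rotations and thresholds to arbitrary reals, as in Section~\ref{B:subsection:6.2}, only overcounts, and an overcount is harmless for an upper bound.

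\textbf{Step 2: cover within a configuration.} Fix $\mathcal R$. Each design point reaches exactly one leaf per tree, so every row of $H_{\mathcal R}$ has exactly $T$ ones. Hence
\[
\norm{H_{\mathcal R}\beta}_\infty\le T\norm{\beta}_\infty,
\qquad
\norm{H_{\mathcal R}\beta}_n\le T\norm{\beta}_\infty.
\]
A sup-norm grid on $[-M_n,M_n]^{K_+}$ with mesh $C\epsn/T$ therefore gives a $C\epsn$-cover in $\norm{\cdot}_n$ with at most $(2M_nT/(C\epsn)+1)^{K_+}$ points. Its logarithm is $\lesssim\bar K_n\log n$, because $\log M_n=A_\mu\log n$ and $\log(1/\epsn)\lesssim\log n$ by \eqref{B:eq:rate_arithmetic}.

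To get an internal cover, I would map each grid point back through $H_{\mathcal R}$. These images lie in the sieve, because every coefficient vector in the cube paired with a realization of $\mathcal R$ belongs to $\Frot$, as noted in Section~\ref{B:subsection:6.1}. If an external cover suffices, this step is not needed at all, and the bound only changes by a factor of two in the radius.

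\textbf{Step 3: combine.} The total covering number is at most $|\mathfrak R_n|\max_{\mathcal R}(\text{cube cover})$. This gives the first display of the proposition. For the consequence, use
\[
\bar K_n\log n\le M_Kn\epsn^2+\log n,
\]
together with $p^2\log n=o(n\epsn^2)$ from \eqref{B:eq:rate_arithmetic}. This yields $\lesssim n\epsn^2$, with the constant depending on $M_K$. That dependence is acceptable because, by the remark after Proposition~\ref{B:prop:jeong_contraction}, the entropy constant $D$ affects only $M$.

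\textbf{Main obstacle.} The delicate step is Step 1. The routing count must be uniform over the data-dependent grids, since thresholds are tied to empirical extrema of $Q_tx$. The count must also not depend on $b_n$. I would handle this by relaxing thresholds to free reals, so that Lemma~\ref{B:lem:direct_routing} applies directly and the grid structure never enters. I would also verify that summing the topology count over $K_+$ and over the split between trees adds only $O(\bar K_n)$ to the log-count.
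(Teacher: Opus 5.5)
Your proposal is correct and follows essentially the same route as the paper. You count forest topologies and split labels by Catalan and $p^{I_+}$ bounds, and you count routing maps through Lemma~\ref{B:lem:direct_routing} after relaxing rotations and thresholds to free reals, so no $b_n$ factor appears. You then cover the coefficient cube in $\ell_\infty$ at radius proportional to $\epsn/T$, using that every row of $H_{\mathcal R}$ has exactly $T$ ones, and pair the centers with a fixed valid realization to keep the cover internal. Your explicit summation over $K_+\le\bar K_n$ and over the split of leaves among the $T$ trees is slightly more detailed than the paper's one-line $\exp\{C\bar K_n\}$ topology bound, but it changes nothing.
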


\begin{proof}
We count the components of a cover.

\medskip
\noindent\textbf{Tree topologies and split labels.}  The number of rooted binary forest topologies with total leaf count at most $\bar K_n$ is at most $\exp\{C\bar K_n\}$, since $T$ is fixed and Catalan numbers are bounded by exponential functions.  A forest with $I_+$ internal nodes has at most $p^{I_+}$ split-label sequences.  Since $p$ is fixed, the topology and label cost is $\exp\{C\bar K_n\}$.

\medskip
\noindent\textbf{Routing maps from shared rotations.}  For each fixed topology and label sequence, Lemma~\ref{B:lem:direct_routing} gives at most
\[
\exp\{C(p^2+\bar K_n)\log n\}
\]
empirical routing maps. Free numerical thresholds already include the original cutpoint-index and frame-dependent-grid choices, so no additional $b_n^{I_+}$ factor is needed.

\medskip
\noindent\textbf{Leaf coefficients.} For each realized $\mathcal R$, select one valid rotation and discrete-tree realization. Every row of $H_{\mathcal R}$ contains exactly $T$ ones, including when some columns represent empty leaves. Thus $\norm{f_\beta-f_{\beta'}}_n\le\norm{H_{\mathcal R}(\beta-\beta')}_\infty\le T\norm{\beta-\beta'}_\infty$. An $\ell_\infty$ cover of $[-M_n,M_n]^{K_+}$ with radius $C\epsn/T$ and centers in that cube therefore gives an empirical cover of radius $C\epsn$. Pairing its centers with the selected valid realization keeps every covering function in the original sieve. The logarithmic covering size is at most $K_+\log\{1+2TM_n/(C\epsn)\}$, which is bounded by
\[
C\bar K_n\log(M_n/\epsn)
\lesssim\bar K_n\log n.
\]
Combining the three bounds gives the first assertion.  By definition, $\bar K_n\log n\lesssim n\epsn^2$, and fixed $p$ implies $p^2\log n=o(n\epsn^2)$ by \eqref{B:eq:rate_arithmetic}.
\end{proof}

\begin{Bcorollary}[Joint entropy including the noise scale]
\label{B:cor:joint_entropy}
For the metric $d_G$ in \eqref{B:eq:jeong_metric},
\[
\log N(C\epsn,\overline\Theta_n,d_G)\lesssim n\epsn^2.
\]
\end{Bcorollary}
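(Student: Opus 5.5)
The plan is to combine the function-part cover from Proposition~\ref{B:prop:sieve_entropy} with a one-dimensional cover of the noise interval $I_n$, using the product structure of $d_G$. By construction $\overline\Theta_n=\{(\mu_f,\sigma):(f,\sigma)\in\Theta_n\}$ with $\Theta_n=\Frot(\bar K_n,M_n)\times I_n$. For $f,f'$ we have $n^{-1}\norm{\mu_f-\mu_{f'}}_2^2=\norm{f-f'}_n^2$. Hence
\[
d_G\bigl((\mu_f,\sigma),(\mu_{f'},\sigma')\bigr)\le\norm{f-f'}_n+\abs{\sigma-\sigma'}.
\]
It then suffices to take a $(C\epsn/2)$-cover $\{f_1,\ldots,f_N\}$ of $\Frot(\bar K_n,M_n)$ in $\norm{\cdot}_n$ and a $(C\epsn/2)$-cover $\{\sigma_1,\ldots,\sigma_{N'}\}$ of $I_n$ in absolute value. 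The product pairs $(\mu_{f_a},\sigma_b)$ form a $C\epsn$-cover of $\overline\Theta_n$. Since the centers of both covers lie in the respective sets, these pairs lie in $\overline\Theta_n$, although this is not needed for covering numbers.

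For the function part, Proposition~\ref{B:prop:sieve_entropy} applied with the constant $C/2$ gives $\log N\lesssim n\epsn^2$ directly.

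For the noise part, the interval $I_n=[n^{-A_\sigma},e^{A_\sigma n\epsn^2}]$ has length at most $e^{A_\sigma n\epsn^2}$. A uniform grid of spacing $C\epsn$ therefore covers it with
\[
N'\le 1+e^{A_\sigma n\epsn^2}/(C\epsn)
\]
points. Taking logarithms gives
\[
\log N'\le A_\sigma n\epsn^2+\log(1/\epsn)+O(1)\lesssim n\epsn^2,
\]
using $\log(1/\epsn)\lesssim\log n=o(n\epsn^2)$ from \eqref{B:eq:rate_arithmetic}. Adding the two logarithms proves the corollary.

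The only point needing care is the size of the upper endpoint of $I_n$. Because it grows like $e^{A_\sigma n\epsn^2}$, the naive $\sigma$-grid contributes a term of exactly order $n\epsn^2$, with constant $A_\sigma$, rather than a logarithmic term. This is still acceptable, because Proposition~\ref{B:prop:jeong_contraction} only requires some finite entropy constant $D$, and $D$ may depend on $A_\sigma$. If a sharper bound were wanted, one could instead cover $\log\sigma$ on $[-A_\sigma\log n,A_\sigma n\epsn^2]$ with a relative metric. However, $d_G$ uses absolute differences $\abs{\sigma-\sigma'}$, so the plain grid is the natural choice and suffices here.
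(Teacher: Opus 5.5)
Your proposal is correct and matches the paper's argument: you combine the function cover from Proposition~\ref{B:prop:sieve_entropy} with a grid cover of $I_n$ whose log-size is of order $\log(e^{A_\sigma n\epsn^2}/\epsn)\lesssim n\epsn^2$, and add the logarithms. The only cosmetic difference is the radius split: the paper uses $C\epsn/\sqrt2$ for each part and adds squared errors, whereas you use $C\epsn/2$ with $d_G\le\norm{f-f'}_n+\abs{\sigma-\sigma'}$.
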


\begin{proof}
Use radius $C\epsn/\sqrt2$ for both the function cover from Proposition~\ref{B:prop:sieve_entropy} and the scalar noise-scale cover. The interval $I_n=[n^{-A_\sigma},e^{A_\sigma n\epsn^2}]$ contributes
\[
\log N(C\epsn/\sqrt2,I_n,|\cdot|)
\lesssim
\log\left(\frac{e^{A_\sigma n\epsn^2}}{\epsn}\right)
\lesssim n\epsn^2.
\]
The empirical function metric is exactly the mean-vector part of $d_G$. The two squared covering errors add to at most $C^2\epsn^2$, and the logarithms of the cover sizes add.
\end{proof}

\subsection{Sieve complement prior mass}\label{B:section:7}

Because the rotation group is compact and $p$ is fixed, the complement of the sieve involves only tree sizes, leaf magnitudes, and the noise scale.  No support-size restriction is needed.

\begin{Bproposition}[Sieve complement prior mass]
\label{B:prop:sieve_tail}
For every prescribed $C_{\mathrm{tail}}>0$, sufficiently large constants $M_K,A_\mu,A_\sigma>0$ satisfy
\[
\Pi_n(\Theta_n^c)\le e^{-C_{\mathrm{tail}}n\epsn^2},
\qquad
\overline\Pi_n(\overline\Theta_n^c)\le e^{-C_{\mathrm{tail}}n\epsn^2}
\]
for all sufficiently large $n$.
\end{Bproposition}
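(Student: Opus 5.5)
The complement splits as $\Theta_n^c\subseteq\{K_+>\bar K_n\}\cup\{\max_t\norm{M_t}_\infty>M_n,\ K_+\le\bar K_n\}\cup\{\sigma\notin I_n\}$. I will bound each of the three prior probabilities by $e^{-C_{\rm tail}n\epsn^2}/3$, choosing $M_K$, $A_\mu$, $A_\sigma$ in that order. The bound for $\overline\Pi_n(\overline\Theta_n^c)$ then follows at once. Indeed, $(f,\sigma)\in\Theta_n$ implies $(\mu_f,\sigma)\in\overline\Theta_n$, so $\overline\Pi_n(\overline\Theta_n^c)\le\Pi_n(\Theta_n^c)$. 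Measurability of these events follows from the finite discrete structure together with Proposition~\ref{B:prop:dynamic_uniform_sparse}(iv) and the compactness remark for $\overline\Theta_n$.

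\textbf{Tree size.} Since $T$ is fixed, $K_+>\bar K_n$ forces some $K_t>\bar K_n/T$. By Lemma~\ref{B:lem:tree_tail}, integrated over $Q_t$ and combined with a union bound,
\[
\mathbb P_n(K_+>\bar K_n)\le T\exp\{-c_K(\bar K_n/T)\log(\bar K_n/T)\}.
\]
Here $\bar K_n/T\to\infty$ by \eqref{B:eq:rate_arithmetic}. By \eqref{B:eq:tree_growth}, $\log(\bar K_n/T)\ge\kappa_K\log n-O(1)$ for large $n$. The exponent is therefore at least $c\,M_K\,(n\epsn^2/\log n)\,\kappa_K\log n/T=c'M_Kn\epsn^2$. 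Taking $M_K$ large relative to $C_{\rm tail}$ finishes this term. This is the step where the polynomial growth \eqref{B:eq:tree_growth} is essential: it converts the $k\log k$ tail into an $n\epsn^2$ exponent, since $\log\bar K_n\asymp\log n$.

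\textbf{Leaf magnitudes.} On $\{K_+\le\bar K_n\}$, condition on the forest structure. The leaf means are at most $\bar K_n$ independent $N(0,\tau^2)$ variables, so
\[
\Pr(\max|\mu|>n^{A_\mu})\le 2\bar K_n\exp\{-n^{2A_\mu}/(2\tau^2)\}.
\]
Because $n\epsn^2\le n$ and $\bar K_n\le n$, choosing any $A_\mu\ge1$ makes this smaller than $e^{-C_{\rm tail}n\epsn^2}$ for large $n$, uniformly in the conditioning. Integrating then gives the bound.

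\textbf{Noise scale.} Under (P5), the lower tail satisfies $\Pr(\sigma<n^{-A_\sigma})=\Pr(1/\sigma^2>n^{2A_\sigma})$. Since $1/\sigma^2\sim\mathrm{Gamma}(a_\sigma,b_\sigma)$, this probability is at most $\exp\{-b_\sigma n^{2A_\sigma}/2\}$ up to constants, which is negligible once $2A_\sigma\ge1$. For the upper tail, the inverse-gamma density gives $\Pr(\sigma^2>s)\lesssim s^{-a_\sigma}$. With $s=e^{2A_\sigma n\epsn^2}$ this is $\exp\{-2a_\sigma A_\sigma n\epsn^2\}$, so $A_\sigma$ is taken large relative to $C_{\rm tail}/a_\sigma$.

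I expect no serious obstacle. The one point needing care is the tree-size term, where uniformity in $Q_t$ and $n$ comes from Lemma~\ref{B:lem:tree_tail}, and $M_K$ must be chosen after $C_{\rm tail}$ is fixed. This is consistent with the remark in Proposition~\ref{B:prop:jeong_contraction} that the tail exponent need not depend on the entropy constant.
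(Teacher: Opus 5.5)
Your proposal is correct and follows essentially the same route as the paper: the same three-event decomposition into tree size, leaf magnitudes on $\{K_+\le\bar K_n\}$, and noise-scale tails, with the tree-size tail lemma combined with $n\epsn^2/\log n\ge n^{\kappa_K}$ (so $\log\bar K_n\gtrsim\log n$), a Gaussian union bound, and the polynomial and gamma tails of the inverse-gamma prior. The differences are cosmetic. You take $A_\mu\ge1$, whereas $A_\mu\ge1/2$ already suffices because $n\epsn^2=o(n)$. You also obtain the $\overline\Pi_n$ bound from the inclusion $\overline\Pi_n(\overline\Theta_n^c)\le\Pi_n(\Theta_n^c)$, rather than bounding both directly by the three generating-space probabilities.
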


\begin{proof}
If none of the following generating-space events occurs, the induced pair lies in $\Theta_n$ and its mean-vector image lies in $\overline\Theta_n$. Both $\Pi_n(\Theta_n^c)$ and $\overline\Pi_n(\overline\Theta_n^c)$ are therefore at most $\mathbb P_n(E_{K,n})+\mathbb P_n(E_{\mu,n})+\mathbb P_n(E_{\sigma,n})$, where
\[
E_{K,n}=\{K_+>\bar K_n\},
\qquad
E_{\mu,n}=\{\max_{t\le T}\norm{M_t}_\infty>M_n,\ K_+\le\bar K_n\},
\qquad
E_{\sigma,n}=\{\sigma\notin I_n\}.
\]

\medskip
\noindent\textbf{Large trees.}  Since $T$ is fixed, the union bound and Lemma~\ref{B:lem:tree_tail} give
\[
\mathbb P_n(E_{K,n})
\le
C_T\exp\{-c\bar K_n\log\bar K_n\}.
\]
By \eqref{B:eq:tree_growth}, $\log\bar K_n\gtrsim\log n$, while $\bar K_n\asymp M_Kn\epsn^2/\log n$.  Taking $M_K$ sufficiently large gives
\[
\mathbb P_n(E_{K,n})\le e^{-(C_{\mathrm{tail}}+2)n\epsn^2}.
\]

\medskip
\noindent\textbf{Large leaf magnitudes.} Condition on $Q_1,\ldots,Q_T,(\cT_t,V_t,C_t)_{t=1}^T$ with $K_+\le\bar K_n$. By \textnormal{(P4)}, each conditional Gaussian variance equals $\tau^2$. The union bound gives a conditional probability at most $2\bar K_n\exp\{-M_n^2/(2\tau^2)\}$ that any leaf mean exceeds $M_n$ in absolute value. Integrating this uniform bound over ensembles with $K_+\le\bar K_n$ yields $\mathbb P_n(E_{\mu,n})\le2\bar K_n\exp\{-M_n^2/(2\tau^2)\}$. Since $n\epsn^2=o(n)$ and $\log\bar K_n=O(\log n)$, any fixed $A_\mu\ge1/2$ makes this bound at most $e^{-(C_{\mathrm{tail}}+2)n\epsn^2}$ for all sufficiently large $n$.

\medskip
\noindent\textbf{Variance tail.}  Write $V=\sigma^2\sim\operatorname{IG}(a_\sigma,b_\sigma)$.  The upper tail is polynomial in $V$, and therefore the exponential upper endpoint yields
\[
\mathbb P_n(\sigma>e^{A_\sigma n\epsn^2})
\le
C\exp\{-2a_\sigma A_\sigma n\epsn^2\}.
\]
For the lower tail, $1/V$ is gamma and
\[
\mathbb P_n(\sigma<n^{-A_\sigma})
=
\mathbb P_n(1/V>n^{2A_\sigma})
\le
\exp\{-c n^{2A_\sigma}\}.
\]
Taking $A_\sigma$ sufficiently large makes both terms at most $e^{-(C_{\mathrm{tail}}+2)n\epsn^2}$.

The union bound and the preceding image-space inequalities prove the proposition. These inequalities need not be equalities, since a function or mean vector may have more than one forest representation.
\end{proof}

\subsection{Proof of the main theorem}\label{B:section:8}

\begin{proof}[Proof of Theorem~\ref{B:thm:main}]
The likelihood depends on $(f,\sigma)$ only through $(\mu_f,\sigma)$. Its normalizing integral is positive and finite: the likelihood is bounded by $(2\pi)^{-n/2}\sigma^{-n}$, and the inverse-gamma prior gives $\E_{\mathbb P_n}(\sigma^{-n})=b_\sigma^{-n/2}\Gamma(a_\sigma+n/2)/\Gamma(a_\sigma)<\infty$. Bayes' formula therefore gives $\overline\Pi_n(B\mid Y^{(n)})=\Pi_n\{(f,\sigma):(\mu_f,\sigma)\in B\mid Y^{(n)}\}$ for every Borel set $B\subset\R^n\times(0,\infty)$.

\medskip
\noindent\textbf{Local prior mass.} Fix $A_0\ge1$ as in Corollary~\ref{B:cor:KL_mass}, and set $\widetilde\varepsilon_n=A_0\epsn$. Since $\sigma_0$ and $A_0$ are fixed, \eqref{B:eq:rate_arithmetic} gives $\widetilde\varepsilon_n/\sigma_0\to0$ and $n\widetilde\varepsilon_n^2/\sigma_0^2\to\infty$. The identity $\norm{\mu_f-\mu_{f_0}}_2^2=n\norm{f-f_0}_n^2$ and Corollary~\ref{B:cor:KL_mass} give
\[
\overline\Pi_n\left\{
\norm{\mu-\mu_{f_0}}_2^2\le n\widetilde\varepsilon_n^2,
\quad
|\sigma^2-\sigma_0^2|\le\sigma_0\widetilde\varepsilon_n
\right\}
\ge e^{-Cn\widetilde\varepsilon_n^2/\sigma_0^2}
\]
for a fixed $C$. Both $C$ and $A_0$ are fixed before the sieve cutoffs are chosen.

\medskip
\noindent\textbf{Tail bound and sieve choice.} Choose a fixed $E$ large enough for the denominator bound in the proof of Theorem~3 of \cite{Jeong2025}. As noted after Proposition~\ref{B:prop:jeong_contraction}, this choice depends on the local mass and local Gaussian bounds, not on the eventual entropy constant. Fix $C_{\mathrm{tail}}\ge E A_0^2/\sigma_0^2$. Proposition~\ref{B:prop:sieve_tail} then provides fixed $M_K,A_\mu,A_\sigma$ and a sieve sequence satisfying
\[
\overline\Pi_n(\overline\Theta_n^c)
\le e^{-En\widetilde\varepsilon_n^2/\sigma_0^2}.
\]

\medskip
\noindent\textbf{Entropy and contraction.} For this selected sieve, Corollary~\ref{B:cor:joint_entropy} gives a finite constant $D$ such that
\[
\log N(\widetilde\varepsilon_n,\overline\Theta_n,d_G)
\le Dn\widetilde\varepsilon_n^2/\sigma_0^2.
\]
The constant $D$ may depend on the selected cutoffs. We choose the final contraction multiplier only after these choices. The three conditions in Proposition~\ref{B:prop:jeong_contraction} now hold simultaneously, giving contraction in $d_G$ for a sufficiently large fixed multiplier.

By the posterior identity above and $\norm{f-f_0}_n+|\sigma-\sigma_0|\le\sqrt2\,d_G((\mu_f,\sigma),(\mu_{f_0},\sigma_0))$, this proves \eqref{B:eq:main_contraction}, after absorbing $A_0$ and $\sqrt2$ into $M$. For sufficiently large $n$, $|\sigma-\sigma_0|\le M\epsn$ implies $\sigma+\sigma_0\le2\sigma_0+1$. Hence $|\sigma^2-\sigma_0^2|\le(2\sigma_0+1)|\sigma-\sigma_0|$ on that event, and the variance-scale conclusion follows by enlarging the fixed multiplier.
\end{proof}

\endgroup

\clearpage
\section{Posterior Contraction Lower Bound for Axis-Aligned BART}\label{C:root}
\begingroup
\renewcommand{\R}{\mathbb{R}}
\renewcommand{\E}{\mathbb{E}}
We use the fixed-design Gaussian model and notation of Supplement~\ref{B:root}.
Let $\Pi_n$ be the RoBART posterior and let $\Pi_n^{\B}$ be the posterior
under the same tree, leaf, noise, and grid specifications with
$Q_t=I_p$ deterministically for every tree. Both priors use the same fixed
$T$, $0<\nu<1$, $\tau^2,a_\sigma,b_\sigma>0$, and deterministic integer
budget $b_n\ge1$. Thus split attempts have probability
$\nu^{\ell+1}$ at depth $\ell$, coordinate choices are uniform over all
$p$ coordinates, and cutpoints are uniform among admissible candidates.
The grid uses the full training design's empirical range, rotated for
RoBART; nodewise admissibility is its strict-interior restriction.
There is no minimum leaf occupancy, and equality is routed to the right.
Write $\mathbb P_n^{\B}$ and $\mathbb P_n^{\B}(\cdot\mid Y^{(n)})$
for the BART generating prior and posterior. The complete states
$(\cT_t,V_t,C_t)$, leaf vectors $M_t$, and total leaf count
$K_+=\sum_{t=1}^T K_t$ retain their meanings from Supplement~\ref{B:root}.

The following restates Theorem~\Mainref{thm:bart-lower}
with its design and grid conditions made explicit.

\begin{Ctheorem}[Posterior contraction lower bound for axis-aligned BART]
\label{C:thm:bart-lower}
Fix integers $p\ge2$ and $T\ge1$, $0<\alpha<1$, and $\lambda,C_h,\sigma_0>0$.
Let $\mathcal X=[0,1]^p$. For $m\ge1$ and $n=2^{mp}$, let the fixed design enumerate
\begin{align*}
\mathcal X_n=
\left\{\left(\frac{i_1+1/2}{2^m},\ldots,
                    \frac{i_p+1/2}{2^m}\right):
       i_1,\ldots,i_p\in\{0,\ldots,2^m-1\}\right\}\subset[0,1]^p.
\end{align*}
Choose $K_{n,1}=2^{L_{n,1}}\asymp(n\lambda^2/\log n)^{1/(2\alpha+1)}$
with $L_{n,1}\in\mathbb N$, and assume
$K_{n,1}\le b_n+1$ and $\log b_n\le C_b\log n$
for a fixed $C_b<\infty$ and all sufficiently large $n$.

There exist $Q_0\in\SO(p)$ and
$h_0:[-\sqrt p,\sqrt p]\to\R$, both independent of $n$, such that
$\norm{h_0}_\infty\le C_h$ and
$|h_0(u)-h_0(v)|\le\lambda|u-v|^\alpha$ on this interval,
and the fixed truth $f_0(x)=h_0\{\pi_{\{1\}}(Q_0x)\}$ on $[0,1]^p$
has the following properties. Set
\begin{align*}
\varepsilon_n=\lambda^{1/(2\alpha+1)}
\left(\frac{\log n}{n}\right)^{\alpha/(2\alpha+1)},
\qquad
\varepsilon_n^{\B}=\left(\frac{\log n}{n}\right)^{\alpha/(2\alpha+p)}.
\end{align*}
For some fixed $c>0$,
\begin{equation}
\E_0\Pi_n^{\B}\bigl(\norm{f-f_0}_n\le c\varepsilon_n^{\B}\mid Y^{(n)}\bigr)
\longrightarrow0.
\label{C:eq:lb-quantitative}
\end{equation}
On the same design and at the same truth, for a sufficiently large fixed $M_0$,
\begin{equation}
\E_0\Pi_n\bigl(\norm{f-f_0}_n+|\sigma-\sigma_0|
                   >M_0\varepsilon_n\mid Y^{(n)}\bigr)
\longrightarrow0.
\label{C:eq:lb-robart}
\end{equation}
Since $\varepsilon_n=o(\varepsilon_n^{\B})$, the BART assertion also holds
with $c\varepsilon_n^{\B}$ replaced by $M\varepsilon_n$ for every fixed
$M>0$. All limits are taken as $m\to\infty$ along $n=2^{mp}$.
\end{Ctheorem}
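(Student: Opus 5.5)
The RoBART assertion \eqref{C:eq:lb-robart} is a direct application of Theorem~\ref{B:thm:main}, so almost all the work goes into \eqref{C:eq:lb-quantitative}. For the RoBART part, take $R=d_1=1$ and $S_1=\{1\}$. The design is the midpoint tensor grid in $[0,1]^p$, so $M_X=\sqrt p$. The range of $e_1^\top Q_0x$ over $\mathcal X_n$ converges to the range over the cube, which is at least $1$ for any unit direction, so (D1) holds with $r_{\min}=1/2$. Condition (D2) is assumed, and $\bar\alpha_1=\alpha$. Theorem~\ref{B:thm:main} then yields the rate $\varepsilon_n$.

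For BART, I would take $Q_0$ with first row $w=p^{-1/2}\mathbf 1_p$. The ridge $h_0$ should be a fixed function that is rough at every scale with exponent exactly $\alpha$, for instance a lacunary Weierstrass-type sum $h_0(u)=c\sum_{k\ge0}2^{-k\alpha}\phi(2^ku)$ with $\phi$ a smooth periodic bump. Choose $c$ so that the H\"older constant is at most $\lambda$ and the sup norm is at most $C_h$. The required property is a uniform local oscillation lower bound: for every interval $I$ of length $h\le h_0$, $\inf_a\,h^{-1}\int_I|h_0-a|^2\ge c_0h^{2\alpha}$.

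The key deterministic step is an approximation lower bound for the whole forest. Any axis-aligned forest with $K_+$ total leaves is a step function on the common refinement of its $T$ partitions into boxes. Combine this with a covering argument: most of the cube's mass lies in boxes that are not too elongated along $w$, since a box is elongated along $w$ only if it is long in every coordinate, and such boxes are few. On a box of side lengths $\ell_1,\dots,\ell_p$, the projection $w^\top x$ ranges over an interval of length about $\sum_j\ell_j/\sqrt p$, and the oscillation bound gives squared error at least about $\ell_{\max}^{2\alpha}$ times the box volume, for any constant fit. To cover the cube with total error below $\delta^2$, boxes must have $\ell_{\max}\lesssim\delta^{1/\alpha}$, which forces at least $\delta^{-p/\alpha}$ boxes. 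I expect the number of refinement cells to be at most $C_TK_+^T$. If that factor is too lossy, use instead the sharper count that each tree separately must produce many cells. Either way the conclusion should be $\inf\{\|f-f_0\|_n:\ f\text{ BART forest},\ K_+\le k\}\ge c\,k^{-\alpha/p}$ up to the $T$-dependent power. The tensor midpoint design makes $\|\cdot\|_n$ comparable to the $L_2$ norm on grid-aligned boxes. I need to verify that the exponent loss from the $T$ trees does not spoil the rate $\alpha/(2\alpha+p)$. The cleaner route avoids this loss: a sum of $T$ step functions on axis boxes, restricted to a small cube, is constant on most subcubes unless some tree splits there. Charging splits to subcubes then gives a lower bound linear in $K_+$.

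The stochastic step controls posterior leaf counts. With $\bar k_n=c'n(\varepsilon_n^{\B})^2/\log n$ for small $c'$, the approximation bound gives $\|f-f_0\|_n\ge 2c\varepsilon_n^{\B}$ whenever $K_+\le\bar k_n$. It remains to show that $\E_0\Pi^{\B}_n(K_+>\bar k_n,\ \|f-f_0\|_n\le c\varepsilon_n^{\B}\mid Y^{(n)})\to0$. I would bound the denominator of Bayes' formula from below by $e^{-Cn(\varepsilon_n^{\B})^2}$ using the axis-aligned good set, with a $p$-dimensional Hölder approximation and the prior-mass argument of Proposition~\ref{B:prop:goodset_mass} with $Q=I_p$. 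The numerator is bounded by the tree-size tail of Lemma~\ref{B:lem:tree_tail}, which gives $e^{-c_K\bar k_n\log\bar k_n}$. This comparison fails with fixed constants, since both exponents are of order $n(\varepsilon_n^{\B})^2$. I therefore expect to need the standard testing argument instead. Within the small ball the likelihood ratio is controlled, so the posterior mass of $\{K_+>k\}$ is at most $e^{Cn\delta^2}\mathbb P^{\B}_n(K_+>k)/\mathbb P^{\B}_n(\|f-f_0\|_n\le c\delta)$. Taking $\delta=c\varepsilon_n^{\B}$ and $k$ a large multiple of $n\delta^2/\log n$ makes this vanish, and then the approximation bound with the small constant $c$ yields the contradiction. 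Matching these constants is the main obstacle: the approximation constant must beat the leaf-count cutoff multiple. This is handled by choosing $c$ small after fixing the cutoff, because the approximation bound scales as $k^{-\alpha/p}$.
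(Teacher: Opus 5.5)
\textbf{Gap in the forest approximation lower bound.} Your RoBART part is fine, and so is the final bookkeeping of your stochastic step. The gap is in the forest approximation lower bound, which carries the whole BART assertion. Any argument that uses only ``the forest is constant on the cells of the common refinement, and $f_0$ oscillates on each cell'' cannot reach $K_+^{-\alpha/p}$ once $T\ge2$. This loss is genuine, not an artifact of counting.

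Already for $p=T=2$, take one tree with $K$ vertical slabs and one with $K$ horizontal slabs. Together they have $2K$ leaves and are constant on $K^2$ squares of side $1/K$. Since $f_0$ is H\"older, its best constant fit on each square has error $O(K^{-\alpha})$, so no cellwise-oscillation bound can exceed order $K^{-\alpha}$. At $K\asymp n(\varepsilon_n^{\B})^2/\log n$ this is of order $(\log n/n)^{p\alpha/(2\alpha+p)}=o(\varepsilon_n)$, so even the separation from RoBART would be lost.

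Your ``cleaner route'' fails for a related reason. One full-width split meets $2^{j(p-1)}$ of the $2^{jp}$ level-$j$ subcubes. A single tree with $O(2^j)$ leaves is therefore already nonconstant on every subcube, and the number of nonconstant subcubes is not $O(K_+)$.

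\textbf{The missing idea.} You need a linear functional that sees each leaf only near its $2^p$ corners, so that additive combinations across trees gain nothing. On the tensor midpoint design, the paper uses the fully mixed Haar functions $\psi_{j,k}(x)=2^{jp/2}\prod_{\ell}\psi(2^jx_\ell-k_\ell)$, which are empirically orthonormal for $j\le m-1$.
\begin{itemize}
\item The empirical coefficient of a box indicator factors over coordinates.
\item An interval indicator has zero coefficient on every level-$j$ dyadic interval not containing one of its endpoints, because the two halves hold equally many design points. In particular a slab has all fully mixed coefficients zero, which is exactly why $a(x_1)+b(x_2)$ fails.
\item Hence each leaf affects at most $2^p$ coefficients, and a forest with $K_+$ leaves has at most $2^pK_+$ nonzero level-$j$ coefficients, for every $T$.
\end{itemize}

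This must be paired with a truth whose fully mixed coefficients are uniformly large: $\langle f_0,\psi_{j,k}\rangle_n\ge c_H2^{-j(\alpha+p/2)}$ for all $k$ and all $j\in J\mathbb N$ with $j\le m-1$. The paper builds $h_0$ as a lacunary cosine series with frequency ratio $2^J$ and phase $-p\pi/2$, and computes the empirical coefficients exactly. The factor $D_N(1)$ cancels the phase, $D_N(2^v)=0$ removes higher frequencies, and a large $J$ bounds the lower-frequency leakage by half. A one-dimensional oscillation bound is not this property. Moreover, with ratio $2$ and a generic bump even that oscillation bound needs proof, since all scales contribute at the same order and can cancel. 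Bessel's inequality with $2^{jp}\asymp K_+$ then gives $c_AK_+^{-\alpha/p}$.

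\textbf{Order of constants in the stochastic step.} Prove the evidence bound $D_n\ge e^{-C_Dq_n}$ at a fixed radius of order $\varepsilon_n^{\B}$, not at radius $c\varepsilon_n^{\B}$. Using radius $c\varepsilon_n^{\B}$ would make the cutoff grow like a negative power of $c$, and the choice of constants would become circular. Then fix $C_K$ after $C_D$ and choose $c$ last, as the paper does.
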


The budget assumptions are Supplement~\ref{B:root}'s \textnormal{(D2)} for
$R=d_1=1$; $b_n=n$ is a sufficient choice, not a requirement.

\begin{proof}
Constants may depend on the fixed truth, prior, and grid parameters,
but not on $n$.

\medskip
\noindent\textbf{Construction of a fixed truth.}
Choose an integer $J\ge1$ sufficiently large that
\[
\left(\frac{\pi}{2}\right)^p
\sum_{v=1}^{\infty}2^{-Jv(p-\alpha)}
\le\frac12\left(\frac2\pi\right)^p.
\]
For $\kappa>0$ to be chosen, define on $\R$
\[
h_0(u)=\kappa\sum_{s=1}^{\infty}2^{-Js\alpha}
\cos\left(2\pi 2^{Js}\sqrt p\,u-\frac{p\pi}{2}\right).
\]
The series converges absolutely and uniformly. For $0<d=|u-v|\le1$,
\[
|h_0(u)-h_0(v)|
\le C\kappa\left\{
 d\sum_{2^{Js}\le d^{-1}}2^{Js(1-\alpha)}
 +\sum_{2^{Js}>d^{-1}}2^{-Js\alpha}
\right\}
\le C\kappa d^\alpha.
\]
Boundedness handles $d>1$. Taking $\kappa$ sufficiently small gives the
required H\"older and sup-norm bounds on $[-\sqrt p,\sqrt p]$.
Let $a=p^{-1/2}(1,\ldots,1)^\top$ and complete $a^\top$ to the rows of
$Q_0\in\SO(p)$, reversing the sign of another row if necessary.
Then the resulting fixed truth is
\begin{equation}
f_0(x)=\kappa\sum_{s=1}^{\infty}2^{-Js\alpha}
\cos\left(2\pi 2^{Js}\sum_{\ell=1}^p x_\ell-\frac{p\pi}{2}\right).
\label{C:eq:lb-truth-original}
\end{equation}

\medskip
\noindent\textbf{Empirical tensor-Haar coefficients.}
Define
\[
\psi(u)=\1_{[0,1/2)}(u)-\1_{[1/2,1)}(u),
\qquad
\psi_{j,k}(x)=2^{jp/2}\prod_{\ell=1}^p
                         \psi(2^j x_\ell-k_\ell),
\]
for integers $j\ge0$ and $k\in\mathcal I_j=\{0,\ldots,2^j-1\}^p$, and write
$\langle u,v\rangle_n=n^{-1}\sum_i u(x_{i,n})v(x_{i,n})$.
For $j\le m-1$, these functions are empirically orthonormal:
their supports are disjoint, each contains $2^{p(m-j)}$ observations,
and the squared function value on its support is $2^{jp}$.

For even $N\ge2$, pairing the two halves of the sum gives
\begin{align*}
D_N(t)
&:=\frac1N\sum_{r=0}^{N-1}
\psi\left(\frac{r+1/2}{N}\right)
\exp\left(2\pi\mathrm{i}t\frac{r+1/2}{N}\right)\\
&=\frac{1-e^{\pi\mathrm{i}t}}{N}
\sum_{r=0}^{N/2-1}\exp\left(2\pi\mathrm{i}t\frac{r+1/2}{N}\right).
\end{align*}
Consequently,
\[
\begin{aligned}
D_N(1)&=\frac{2\mathrm{i}}{N\sin(\pi/N)},
&\qquad |D_N(1)|&\ge\frac2\pi,\\
D_N(2^v)&=0\quad(v\ge1),
& |D_N(t)|&\le\frac{\pi t}{2}\quad(0<t<1),
\end{aligned}
\]
where $v$ is an integer. The last bound uses
$|1-e^{\pi\mathrm{i}t}|\le\pi t$.
For $j=Js\le m-1$ and $N=2^{m-j}$, tensorization yields
\[
\left\langle
\exp\left(2\pi\mathrm{i}2^j t\sum_{\ell=1}^p x_\ell\right),
\psi_{j,k}\right\rangle_n
=2^{-jp/2}\exp\left(2\pi\mathrm{i}t\sum_{\ell=1}^p k_\ell\right)
D_N(t)^p.
\]
The $s$th summand of \eqref{C:eq:lb-truth-original} has $t=1$.
Its spatial phase is one, and $e^{-p\pi\mathrm{i}/2}$ cancels the phase
of $D_N(1)^p$. It therefore contributes at least
$\kappa(2/\pi)^p2^{-j(\alpha+p/2)}$ to every coefficient.
All higher-frequency summands vanish since $t=2^{Jv}$.
The absolute sum of the lower-frequency contributions is bounded by
\begin{align*}
\kappa 2^{-jp/2}\sum_{v=1}^{s-1}
  2^{-(j-Jv)\alpha}\left(\frac\pi2\,2^{-Jv}\right)^p
&=\kappa\left(\frac\pi2\right)^p
  2^{-j(\alpha+p/2)}\sum_{v=1}^{s-1}2^{-Jv(p-\alpha)}\\
&\le\frac\kappa2\left(\frac2\pi\right)^p
       2^{-j(\alpha+p/2)}.
\end{align*}
Absolute convergence permits termwise evaluation in these finite sums.
Thus, for every positive multiple $j$ of $J$ with $j\le m-1$,
\begin{equation}
\langle f_0,\psi_{j,k}\rangle_n
\ge c_H2^{-j(\alpha+p/2)},
\qquad c_H=\frac\kappa2\left(\frac2\pi\right)^p>0,
\qquad k\in\mathcal I_j.
\label{C:eq:lb-dense-coefficients}
\end{equation}

\medskip
\noindent\textbf{Approximation lower bound for a forest.}
For an axis-aligned rectangle $B$, the coefficient
$\langle\1_B,\psi_{j,k}\rangle_n$ factors into univariate coefficients.
An interval indicator can vary on at most two level-$j$ dyadic intervals;
on every other interval its coefficient vanishes because the two halves
contain equal numbers of design points. Hence a rectangle contributes to
at most $2^p$ coefficients. Any forest with at most $K$ total leaves is
$g=\sum_{\ell=1}^K b_\ell\1_{B_\ell}$, allowing overlapping rectangles
and unrestricted signed weights, so
\[
\#\{k:\langle g,\psi_{j,k}\rangle_n\ne0\}
\le\sum_{\ell=1}^K
\#\{k:\langle\1_{B_\ell},\psi_{j,k}\rangle_n\ne0\}
\le2^pK.
\]
When $2^{jp}\ge2^{p+1}K$, at least half of the coefficients in
\eqref{C:eq:lb-dense-coefficients} are zero for $g$.
Bessel's inequality gives
\[
\norm{f_0-g}_n^2
\ge\sum_{\substack{k\in\mathcal I_j:\
                   \langle g,\psi_{j,k}\rangle_n=0}}
                 |\langle f_0,\psi_{j,k}\rangle_n|^2
\ge\frac{c_H^2}{2}\,2^{-2j\alpha}.
\]
Choose the smallest positive multiple $j$ of $J$ such that
$2^{jp}\ge2^{p+1}K$; then $2^{jp}\asymp K$.
The largest multiple of $J$ not exceeding $m-1$ is at least $m-J$,
so the selected $j$ is admissible whenever $K\le c_{\mathcal X}n$
for a sufficiently small fixed $c_{\mathcal X}>0$ and large $m$.
Writing $\Faxis(K)$ for all axis-aligned constant-leaf forests with
at most $K$ total leaves, we obtain
\begin{equation}
\inf_{g\in\Faxis(K)}\norm{g-f_0}_n
\ge c_AK^{-\alpha/p},\qquad 1\le K\le c_{\mathcal X}n.
\label{C:eq:lb-approximation}
\end{equation}
This class allows arbitrary numerical cutpoints and leaf values,
so the bound applies to the declared empirical-range prior.

\medskip
\noindent\textbf{Local prior mass.}
Put $q_n=n(\varepsilon_n^{\B})^2$ and
$l_n=\lfloor\log_2(n/\log n)/(2\alpha+p)\rfloor$.
Choose a complete tree with depth $L_n^{\B}=p l_n$ and
$K_n^{\B}=2^{p l_n}$ leaves, cycling through the $p$ coordinates.
Then
\begin{equation}
K_n^{\B}\asymp(n/\log n)^{p/(2\alpha+p)},\qquad
2^{-\alpha l_n}\asymp\varepsilon_n^{\B},\qquad
K_n^{\B}\log n\asymp q_n\longrightarrow\infty.
\label{C:eq:lb-local-scales}
\end{equation}
The budget is sufficient for the $l_n$ splits in each coordinate, since
\[
\frac{2^{l_n}}{b_n+1}\le\frac{2^{l_n}}{K_{n,1}}
\lesssim(n/\log n)^{1/(2\alpha+p)-1/(2\alpha+1)}\longrightarrow0.
\]
Use the lower median admissible candidate at every split. The integer-cell
recursion $N'\mapsto(\lfloor N'/2\rfloor,\lceil N'/2\rceil)$ from
Proposition~\ref{B:prop:dynamic_uniform_sparse}(ii) of Supplement~\ref{B:root} gives, after $a$ coordinate splits,
between $\lfloor(b_n+1)/2^a\rfloor$ and
$\lceil(b_n+1)/2^a\rceil$ cells in each coordinate interval.
Since $2^{l_n}\le b_n+1$, each requested split starts with at least
two cells. The final coordinate widths are bounded by
\[
(1-2^{-m})\frac{\lceil(b_n+1)/2^{l_n}\rceil}{b_n+1}
\le2^{1-l_n}.
\]
These cells cover the empirical-range box, and empty leaves are allowed.
The bound
$|f_0(x)-f_0(y)|\le\lambda p^{\alpha/2}\norm{x-y}_\infty^\alpha$
therefore gives an approximating tree $\widehat g_n^{\B}$ with
$\norm{\widehat g_n^{\B}-f_0}_n\le C\varepsilon_n^{\B}$
by taking $f_0$ at a point in each leaf closure.
Denote its ordered leaf center by $\beta_n^{\B}$; then
$\norm{\beta_n^{\B}}_\infty\le C_h$.

Use this as the first tree and take the remaining $T-1$ trees to be
root-only. Let $\mathfrak t_n^{\B}$ be this prescribed tuple of
complete discrete states. Each prescribed coordinate and cut contributes
at least $(pb_n)^{-1}$, and each terminal factor is at least $1-\nu$.
Since $\sum_{d=0}^{L_n^{\B}-1}(d+1)2^d
=(L_n^{\B}-1)K_n^{\B}+1$,
\begin{equation}
\begin{aligned}
&\mathbb P_n^{\B}\{(\cT_t,V_t,C_t)_{t=1}^T=\mathfrak t_n^{\B}\}\\
&\quad\ge(1-\nu)^{K_n^{\B}+T-1}
(pb_n)^{-(K_n^{\B}-1)}
\nu^{(L_n^{\B}-1)K_n^{\B}+1}
\ge e^{-C K_n^{\B}\log n},
\end{aligned}
\label{C:eq:lb-template-mass}
\end{equation}
using $L_n^{\B}=O(\log n)$ and $\log b_n\le C_b\log n$.

Within the prescribed-state event, define $A_n^{\B}$ by
$\norm{M_1-\beta_n^{\B}}_\infty\le\varepsilon_n^{\B}/(2T)$,
$|\mu_t|\le\varepsilon_n^{\B}/(2T)$ for $t=2,\ldots,T$, and
$|\sigma^2-\sigma_0^2|\le\sigma_0\varepsilon_n^{\B}$.
Here $M_1$ and $\beta_n^{\B}$ use the same left-to-right leaf order,
and $\mu_t$ is the unique leaf mean of the $t$th root-only tree.
The coefficient comparisons are evaluated only on that state;
when $T=1$ the remaining-tree restrictions are absent.
The Gaussian densities have a common positive lower bound around these
bounded centers, and the inverse-gamma density is positive near
$\sigma_0^2$. Each scalar restriction therefore has probability at least
$c\varepsilon_n^{\B}$ for large $n$. Independence and
\eqref{C:eq:lb-local-scales}--\eqref{C:eq:lb-template-mass} give
\begin{equation}
\mathbb P_n^{\B}(A_n^{\B})\ge e^{-C_Pq_n}.
\label{C:eq:lb-local-mass}
\end{equation}
On $A_n^{\B}$ the total leaf perturbation is at most
$\varepsilon_n^{\B}/2$ at every design point. Thus
$\norm{f-f_0}_n\le C_f\varepsilon_n^{\B}$ and
$|\sigma-\sigma_0|
=|\sigma^2-\sigma_0^2|/(\sigma+\sigma_0)\le\varepsilon_n^{\B}$.

\Needspace{8\baselineskip}
\medskip
\noindent\textbf{A marginal likelihood lower bound.}
For a generating-space state $\theta$ inducing $(f,\sigma)$, let
$L_n(\theta)=p_{f,\sigma}^{(n)}(Y^{(n)})/p_0^{(n)}(Y^{(n)})$
and $D_n=\int L_n(\theta)\,d\mathbb P_n^{\B}(\theta)$, where the
$p$'s denote densities. The normalizer is positive and finite because
the likelihood is bounded by $(2\pi)^{-n/2}\sigma^{-n}$ and the
inverse-gamma prior has finite negative moments.
The local error bounds and Lemma~\ref{B:lem:gaussian_kl} of Supplement~\ref{B:root} imply
\[
\sup_{\theta\in A_n^{\B}}
\left[\KL(P_0^{(n)},P_{f,\sigma}^{(n)})
+\operatorname{Var}_0\{\log L_n(\theta)\}\right]
\le C_Gq_n.
\]
Let $\widetilde{\mathbb P}_n^{\B}$ be the prior restricted to
$A_n^{\B}$ and normalized, and put
$Z_n=\int\log L_n(\theta)\,d\widetilde{\mathbb P}_n^{\B}(\theta)$.
Then $\E_0Z_n\ge-C_Gq_n$ and, by Jensen's inequality,
\[
\operatorname{Var}_0(Z_n)
\le\int\operatorname{Var}_0\{\log L_n(\theta)\}\,
 d\widetilde{\mathbb P}_n^{\B}(\theta)
\le C_Gq_n.
\]
Moreover, $\log D_n\ge\log\mathbb P_n^{\B}(A_n^{\B})+Z_n$.
Chebyshev's inequality and \eqref{C:eq:lb-local-mass}, with
$C_D=C_P+C_G+1$, therefore give
\begin{equation}
P_0^{(n)}(\mathcal D_n^c)\le C_G/q_n\longrightarrow0,
\qquad \mathcal D_n=\{D_n\ge e^{-C_Dq_n}\}.
\label{C:eq:lb-denominator}
\end{equation}

\medskip
\noindent\textbf{Posterior complexity and the lower bound.}
The split-attempt proof of Lemma~\ref{B:lem:tree_tail} of Supplement~\ref{B:root} applies to the
axis-aligned recursion uniformly in $n$ and the finite grid budget.
A union bound over the fixed number $T$ of trees gives
\begin{equation}
\mathbb P_n^{\B}(K_+>k)\le C_2\exp(-c_2k\log k)
\label{C:eq:lb-forest-tail}
\end{equation}
for fixed $c_2,C_2>0$ and all sufficiently large $k$.
Set $\bar K_n^{\B}=\lceil C_Kq_n/\log n\rceil$.
Then $\log\bar K_n^{\B}\asymp\log n$,
$\bar K_n^{\B}=o(n)$, and
$(\bar K_n^{\B})^{-\alpha/p}\asymp\varepsilon_n^{\B}$.
Choose $C_K$ after $C_D$ so that \eqref{C:eq:lb-forest-tail} yields
$\mathbb P_n^{\B}(K_+>\bar K_n^{\B})\le e^{-(C_D+2)q_n}$.
Since $\E_0L_n(\theta)=1$, Tonelli's theorem and
\eqref{C:eq:lb-denominator} give
\begin{equation}
\begin{aligned}
\E_0\mathbb P_n^{\B}(K_+>\bar K_n^{\B}\mid Y^{(n)})
&\le P_0^{(n)}(\mathcal D_n^c)
 +e^{C_Dq_n}\E_0\int_{\{K_+>\bar K_n^{\B}\}}
 L_n(\theta)\,d\mathbb P_n^{\B}(\theta)\\
&=P_0^{(n)}(\mathcal D_n^c)
 +e^{C_Dq_n}\mathbb P_n^{\B}(K_+>\bar K_n^{\B})\\
&\le C_G/q_n+e^{-2q_n}\longrightarrow0.
\end{aligned}
\label{C:eq:lb-posterior-size}
\end{equation}
By \eqref{C:eq:lb-approximation}, every state with
$K_+\le\bar K_n^{\B}$ has error at least
$c_A(\bar K_n^{\B})^{-\alpha/p}$ for large $n$.
Taking $c>0$ sufficiently small gives the generating-space inclusion
\[
\{\norm{f-f_0}_n\le c\varepsilon_n^{\B}\}
\subseteq\{K_+>\bar K_n^{\B}\}.
\]
The induced posterior bound \eqref{C:eq:lb-quantitative} follows from
\eqref{C:eq:lb-posterior-size}; no uniqueness of the forest representation
is needed. Since $p\ge2$, $\varepsilon_n=o(\varepsilon_n^{\B})$,
which gives the assertion for every fixed $M\varepsilon_n$ as well.

\medskip
\noindent\textbf{Application of the RoBART theorem.}
The constructed truth satisfies the bounded-design and H\"older
conditions of Supplement~\ref{B:root} with $M_X=\sqrt p$, $R=d_1=1$,
$S_1=\{1\}$, $\lambda_1=\lambda$, and $\alpha_{11}=\alpha$.
Its active range is
$R_{n,1,1}(Q_0)=\sqrt p(1-2^{-m})\ge\sqrt p/2$, so
\textnormal{(D1)} holds. Condition \textnormal{(D2)} is assumed,
and the remaining prior specifications coincide. Theorem~\ref{B:thm:main} of
Supplement~\ref{B:root} therefore gives \eqref{C:eq:lb-robart}.
\end{proof}

\endgroup

\clearpage
\section{Experimental Settings}\label{D:root}
\paragraph{Benchmark.}
The design in Section~\Mainref{sec:benchmark} is fixed across thirty
replications; methods share inputs and noise within each replication.
All methods use 200 trees and 1,000 unthinned draws after 9,000 burn-in
iterations. RoBART starts from identity rotations and uses
$K_{\rm rot}=5$, $\gamma=0.08$ and $\theta_{\max}=0.35\gamma/0.15$,
with joint cutpoint retuning and support-expansion probability $0.5$.
BART keeps the identity. Both use 100 cutpoints per coordinate,
leaf shrinkage $k=2$, noise-prior parameters \texttt{sigdf=3} and
\texttt{sigquant=0.9}, and polynomial split probability
$0.95/(1+\ell)^2$ at depth $\ell$. The noise scale is estimated.

Soft BART uses training-only empirical-distribution preprocessing,
$k=2$ and temperature 1. Noise, leaf-scale, splitting-probability,
$\alpha$ and bandwidth updates are enabled; \texttt{beta}, \texttt{gamma},
\texttt{tau\_mean} updates and tree caching are disabled. Oblique BART and
ridgeBART use training-based coordinatewise affine scaling to $[-1,1]$,
applied to test inputs without clipping. Oblique BART uses
\texttt{prob\_aa=0.5} with adaptation, \texttt{alpha\_dp=5},
\texttt{phi\_option=7}, and \texttt{a\_theta=b\_theta=0}.
ridgeBART uses all predictors for splitting and smoothing, one ReLU basis,
\texttt{p\_change=0.2}, \texttt{sparse=FALSE}, \texttt{rho\_alpha=400},
\texttt{rho\_nu=3}, and \texttt{rho\_lambda} equal to the median of
$\chi^2_3$ divided by 3.

\paragraph{Illustrative fits.}
Figure~\Mainref{fig:motivation} compares the same five methods using
200 trees, 4,000 burn-in iterations, 1,000 unthinned draws, and a
$151\times151$ prediction grid. The data seed is 1; sampler seeds are
12, 42, 54, 38 and 24 for BART, Soft BART, ridgeBART, Oblique BART and
RoBART. RoBART uses $\theta_{\max}=0.35$. Soft BART receives the original
inputs and updates only the noise scale, leaf scale and bandwidth.
ridgeBART uses uniform cuts and scales inputs by the largest absolute
training entry, clipping only prediction inputs to $[-1,1]$.
Oblique BART uses uniform cuts and \texttt{prob\_aa=0} without adaptation.

\end{document}